\newtheorem{theorem}{Theorem} 
\newtheorem{definition}[theorem]{Definition}
\newtheorem{claim}[theorem]{Claim}
\newtheorem{lemma}[theorem]{Lemma}
\newtheorem{proposition}[theorem]{Proposition}
\newtheorem{scenario}{Scenario}
\newtheorem{remark}[theorem]{Remark}
\newcommand{\ignore}[1]{}
\DeclareMathOperator{\conc}{\mathbin\Vert}
\DeclareMathOperator{\SampleGen}{D}
\DeclareMathOperator{\getssample}{\leftlsquigarrow}
\DeclareMathOperator{\BigGapForge}{{BGF}}
\def\shownotes{0}  \ifnum\shownotes=1
\newcommand{\authnote}[2]{[#1: #2]}
\newcommand{\authnote}[2]{}
\newcommand{\amnote}[1]{{\color{red}\authnote{AM}{#1}}}
\newcommand{\ColinNote}[1]{{\color{blue}\authnote{CS}{#1}}}
\newcommand{\concat}{\mathbin\Vert}
\title{Spoofing Generalization: \\When {\em Can't} You Trust Proprietary Models? }
\author{Ankur Moitra \thanks{Department of Mathematics, Massachusetts Institute of Technology. Email: {\tt moitra@mit.edu}. This work was supported in part by NSF CAREER Award CCF-1453261, NSF Large CCF-1565235, a David and Lucile Packard Fellowship, an Alfred P. Sloan Fellowship and an ONR Young Investigator Award.} \\ MIT 
\and Elchanan Mossel \thanks{Department of Mathematics, Massachusetts Institute of Technology. Email: {\tt elmos@mit.edu}. This work was supported in part by NSF award DMS-1737944, Simons Investigator in Mathematics award (622132) and Vannevar Bush Faculty Fellowship ONR-N00014-20-1-2826}  \\ MIT \and Colin Sandon \thanks{Department of Mathematics, Massachusetts Institute of Technology. Email: {\tt csandon@mit.edu}. This work was supported in part by NSF award DMS-1737944 and Office of Naval Research Award N00014-17-1-2598}  \\ MIT}
\begin{document}

\maketitle

\begin{abstract}

In this work, we study the computational complexity of determining whether a given model that perfectly fits the training sample will generalize to unseen data. In particular, we study the power of a malicious agent whose goal is to construct a model $\widehat f$ that fits its training data and nothing else, but is indistinguishable from an accurate model $f$. We say that the agent {\em strongly spoofs} $f$ if no polynomial time algorithm can tell them apart. If instead we restrict to algorithms that run in $n^c$ time for some fixed $c$, we say that the agent $c$-{\em weakly spoofs} $f$. Our main results are
\begin{enumerate}
    \item[(1)]  under cryptographic assumptions, strong spoofing is possible and
    \item[(2)] for any $c> 0$, $c$-weak spoofing is possible unconditionally
\end{enumerate}

While the assumption of a malicious agent is an extreme scenario (hopefully companies training large models are not malicious), we believe that it sheds light on the inherent difficulties of trusting large models trained on private data when it is difficult to obtain new data to independently vet the model.  

\end{abstract} 

\newpage

\section{Introduction}

In recent years, machine learning research has become increasingly reliant on training massive models.
The usual recipe goes as follows: We collect data at an enormous scale and/or introduce new deep learning architectures with even more parameters than ever before. Then we marshal vast computational resources to fit the parameters. On the one hand, these newer and bigger models are often able to set new world records on standard benchmarks. They quickly get adopted and built upon by other researchers. On the other hand, this brings up difficult questions: {\em When can we trust models that we are given, particularly if we don't know what went into their training?} This is especially pertinent when computational resources are not equitably distributed, and not everyone has the means to train such large models. 

The question of trust has been studied by numerous communities. 
It has been studied through the lens of interpretability \cite{lipton2018mythos, schmidt2019quantifying}, since models that a human can understand and audit are indeed more trustworthy than ones that are black boxes. Alternatively, it has been studied through the lens of robustness \cite{hancox2020robustness}, since models that continue to work even under perturbations to their inputs seem less likely to be succeeding merely from picking up on accidental and spurious correlations in the data. 

In this work, we will study the question of trust in an extreme scenario where there is a malicious agent who trains a model on private data and then publicly releases the model. Indeed there are many scenarios where it might be in an agent's best interest to release a poor model that appears to be good. Consider the following hypothetical scenarios:

\begin{scenario}
Suppose a pharmaceutical company conducts a large study of the health outcomes of prescribing drug $A$ vs. drug $B$ for a complex disease that depends on many genetic factors. They could release a complex machine learning model along with the results of their study. At the surface, the model would appear to perfectly fit the data and doctors might choose to use it as a basis for deciding treatment options. However if the model has a backdoor built in so that on future inputs from a particular subpopulation, it outputs a deliberately incorrect decision, it could hope to profit off of these errors by selling the afflicted patients a new drug $C$ that they've developed. This is a scenario in which generating new data from running a new independent study is prohibitively difficult, and there is a financial incentive to releasing a model that is inaccurate in subtle ways. 
\end{scenario}

\begin{scenario}
Suppose a self-driving car company releases image data together with a classifier that labels the images correctly. However, the company maliciously arranged that the algorithms will incorrectly label out-of-sample images with extreme lighting and weather conditions that happen very rarely and where collecting data is difficult. 

By using a different propriety algorithm, they may aim to gain an advantage as fatal future errors of the public algorithms will diminish the trusts in their competitors. 
\end{scenario}

\noindent Thus we ask: Is it possible to tell the difference between an accurate model, and a spurious one that has been deliberately tuned to fit nothing but the training set? We refer to this as {\em spoofing}, in analogy with the practice of tampering with an email to make it appear to come from a trusted source. Here we are interested with whether an adversary can tamper with a model, but in such a way that it still passes the test of labeling the training data correctly.  

When fresh data is abundant, it is trivial to differentiate between the accurate and the spurious model as we can simply measure their performance on new data. 
However in many situations like the hypothetical ones above it might be 
difficult or impossible to obtain fresh samples. 
Instead we will formulate a model in which it is easy to check 
whether a given data point is a valid input  (i.e. it is a natural image) but hard to generate new samples from the known desired distribution. Our main results are negative, and show that spoofing is possible in a strong sense under various popular cryptographic assumptions and also unconditionally. In particular we show:

\begin{enumerate}

\item[(1)] Assuming indistinguishability obfuscation~\cite{barak2012possibility}
and unique signature schemes ~\cite{lysyanskaya2002unique}, two well studied primitives in cryptography, 
we show how a malicious agent can construct a model that perfectly fits the training data and achieves only trivial accuracy on new data. And yet it is computationally hard for an observer to distinguish between this model and a perfect model even if the training set is common knowledge. 


\item[(2)] Suppose we instead allow the malicious agent more computational resources, so that he can run in a larger polynomial amount of time compared to the tester. In this case we can prove unconditionally that the observer cannot distinguish between a model that just fits the training data and a model with perfect accuracy on new data. Our results are based on connections to average-case time hierarchy theorems in complexity theory. 

\end{enumerate}

Finally we mention another motivation for studying spoofing: There has been considerable recent interest in giving generalization bounds for deep learning when the number of parameters is much larger than the number of training samples. Many of these works give explicit, computable functions of the parameters of a deep network, like the products of spectral norms of the weight matrices \cite{bartlett2017spectrally, golowich2018size}, that can be shown to bound the generalization error. However it is natural to wonder: {\em Is there always an efficiently computable function of the parameters that controls generalization error?} In principle, the more complex a model the more its inner-workings can be made to be independent of any simple function of the parameters. Indeed our main results show that even when the distribution is common knowledge (but hard to sample from) deciding whether a given deep network generalizes or not is computationally hard. 


\subsection{Definitions and Main Results}

Throughout this paper, we say that an algorithm is efficient if it runs in time polynomial in the length of its input. Given two algorithms $f$ and $f'$, we say that they are {\em equal} or $f=f'$ if they consist of the exact same series of operations. We say that they are {\em equivalent} or $f\equiv f'$ if they give the same output on every input (or, in the case of randomized algorithms, are equally likely to produce each possible output on any given input).

We consider the following scenario. There is an unknown function $f$ that we would like to learn. Moreover, we are given a training sample $X_1,\dots,X_m$ and the evaluation of $f$ on these points. Furthermore, an arbitrary learning algorithm $L$ is run on the samples $(X_i,f(X_i))$ in order to produce a program $\widehat{f}$ which performs well on these samples, i.e., has zero error on them. Our goal is to determine 
  if $\widehat{f}$ generalizes well. As mentioned earlier, 
  we will work in a model in which the distribution on samples is common knowledge, it is easy to check if a given $X$ is in the support of the distribution, but it is computationally hard to generate new samples. (We will implement this model using one-way functions). 
  Our main results show that testing whether $\widehat{f}$ generalizes is computationally intractable.
  
 Informally, we say that the program $L$ {\em spoofs } $f$, if it is computationally difficult to determine whether $\widehat{f}$ generalizes or merely agrees with $f$ on the training set without access to new samples. In fact we will employ two slightly different variations on this theme:
  
  
\begin{definition}
Let $\Omega$ be a finite space. Let $L$ be an efficient algorithm that takes a list of values of the form $(x,y)$, where $x \in \Omega$ and $y \in \{0,1\}$ and outputs $L(\{(x_i,y_i):1\le i\le m\})$ which is an efficient algorithm for computing a function from $\Omega$ to $\{0,1\}$. 
Let $(D,f)$ be a randomized pair of a probability distribution on $\Omega$ and a function
 $f : \Omega \to \{0,1\}$ on that space, and let $X_1,\dots,X_m$ be drawn independently from $D$. 
 We say that $L$ {\em strongly spoofs} $f$ (under $D$ with $m$ samples) if $\widehat{f}=L(\{(X_i,f(X_i)):1\le i\le m\})$ satisfies 

\begin{enumerate}
\item $\widehat{f}(X_i)=f(X_i)$ for $1 \leq i \leq m$.

\item With probability $1/2$:
$\Pr_D[\widehat{f}(X) = f(X)] = 1 $

\item With probability $1/2$:
$\Pr_D[\widehat{f}(X) = f(X)] = \frac{1}{2}$



\item 
There is no efficient algorithm that determines which of the previous two cases hold from the values of the $(X_i,f(X_i))$ and $\widehat{f}$ with accuracy greater than $2/3$.
\end{enumerate}

\end{definition}

\begin{remark}
Note that if $L$ strongly spoofs $f$, then there is no efficient algorithm that takes as input $(X_i,f(X_i))$ and $\widehat{f}$ and determines if $\widehat{f}$ has $0$ generalization error or $1/2$ generalization error 
with accuracy greater than $2/3$.
\end{remark}

We also introduce a weaker notion of spoofing where the distinguishing algorithm has a bounded polynomial complexity. 

\begin{definition}
Let $\Omega$ be a finite space. Let $L$ be an efficient algorithm that takes a list of values of the form $(x,y)$, where $x \in \Omega$ and $y \in \{0,1\}$ and outputs $L(\{(x_i,y_i):1\le i\le m\})$ which is an efficient algorithm for computing a function from $\Omega$ to $\{0,1\}$. 
Let $(D,f)$ be a randomized pair of a probability distribution on $\Omega$ and a function
 $f : \Omega \to \{0,1\}$ on that space, and let $X_1,\dots,X_m$ be drawn independently from $D$. 
 We say that $L$ {\em $c$-weakly spoofs} $f$ (under $D$ with $m$ samples) if $\widehat{f}=L(\{(X_i,f(X_i)):1\le i\le m\})$ satisfies 

\begin{enumerate}

\item $\widehat{f}(X_i)=f(X_i)$ for $1 \leq i \leq m$.

\item With probability $1/2\pm o(1)$:
$\Pr_D[\widehat{f}(X) = f(X)] = 1-o(1). $

\item With probability $1/2\pm o(1)$:
$\Pr_D[\widehat{f}(X) = f(X)] = \frac{1}{2} \pm o(1) $

\item For any randomized algorithm $A$ that runs in $O(n^{c})$ time there are infinitely many values of $n$ for which $A$ cannot determine which of the two previous cases hold from the values of the $(X_i,f(X_i))$ and the code for $\widehat{f}$ with accuracy greater than $2/3$.

\end{enumerate} 
\end{definition} 

\ColinNote{These definitions still do not seem quite right. For one thing, they work in terms of the asymptotic behavior as $n\to\infty$, but the first one does not even mention $n$. Maybe we should specify that $\Omega\subseteq \{0,1\}^n$. Also, the definition for weak spooofing does not do anything to prohibit the learning algorithm from simply defining $\widehat{f}$ to be $\omega(n^c)$ bits long, which is not an interesting way of making it hard to check the generalization of. Perhaps I should require it to be readable and runnable in $O(n^{c/2})$ time or something.}

\begin{remark}
Note that if $L$ weakly spoofs $f$, then there is no algorithm that runs in time $O(n^c)$, takes as input $(X_i,f(X_i))$ and $\widehat{f}$ and determines if $\widehat{f}$ has $o(1)$ generalization error or $1/2\pm o(1)$ generalization error 
with accuracy greater than $2/3$.
\end{remark}

\ColinNote{I think that I could modify this definition to require $\widehat{f}$ to have accuracy exactly $1$ or $1/2$ with probability $1-o(1)$ and modify the proofs to still work.}

In our main results we show that spoofing is possible. In the first result we prove that strong spoofing is possible under two cryptographic assumptions, named
{\em indistinguishability obfuscation}~\cite{barak2012possibility} and 
{\em unique signature schemes}~\cite{lysyanskaya2002unique}. 
The second theorem proves that $c$-weak spoofing is possible if the distinguishing algorithm is weaker than the complexity required to construct that spoofed function. 

\ColinNote{One of the reviewers complained about us saying this without defining indistinguishability obfuscation and unique signature schemes here. Should we move those definitions?}

\begin{theorem} \label{genFail1}
If indistinguishability obfuscation is efficiently possible and there is a unique signature scheme then there exist efficient algorithms $D$ and $L$ with the following properties. First, for any positive integer $n$, $D(1^n)$ outputs an efficiently samplable subset of $\{0,1\}^n$, $S$, and an efficiently computable function on this subset, $f$. Second, for any positive integer $m$ of size at most polynomial in $n$, $L$ strongly spoofs $f$ under the uniform distribution on $S$ with $m$ samples.



\end{theorem}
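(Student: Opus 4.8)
The plan is to use the unique signature scheme to build a support that is cheap to recognize but expensive to generate, to label it by a bit that is recomputable on the support but unpredictable off it, and to use indistinguishability obfuscation to conceal whether the released model computes this bit on all of the support or only on the training set.

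First, $D(1^n)$ samples a key pair $(pk,sk)$ for the unique signature scheme, with its message space restricted to $\{0,1\}^{\ell}$ for an $\ell$ with $2^{\ell}$ a suitably large polynomial in $n$ (everything padded into $\{0,1\}^n$), and sets $S=\{x\concat\sigma: x\in\{0,1\}^{\ell},\ \sigma=\mathrm{Sign}(sk,x)\}$. Uniqueness of signatures makes $|S|=2^{\ell}$, makes membership in $S$ decidable by running $\mathrm{Verify}(pk,\cdot,\cdot)$, and makes producing any element of $S$ amount to forging a signature. Fixing a hardcore predicate $\mathrm{hc}$, set $f(x\concat\sigma)=\mathrm{hc}(\sigma)$. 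Two features of this pair drive the argument: (i) given any $z=x\concat\sigma\in S$ one can compute $f(z)$ from $z$ and $pk$ alone, so a learner can write down a single circuit that computes $f$ on all of $S$ with no knowledge of $sk$; and (ii) for any $x$ outside the training queries, $\mathrm{hc}(\mathrm{Sign}(sk,x))$ is pseudorandom given $pk$ together with the signed training pairs — this is the standard ``unique signature plus hardcore bit yields a verifiable random function'' fact — so a non-forging observer cannot predict $f$ on $S$ away from the training set.

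The learner $L$, given the labeled sample $T=\{(x_i\concat\sigma_i,\mathrm{hc}(\sigma_i))\}$, tosses a fair coin $b$. If $b=0$ it releases $\widehat f=\mathrm{iO}(C_0)$, where $C_0(x\concat\sigma)$ outputs $\mathrm{hc}(\sigma)$ when $\mathrm{Verify}(pk,x,\sigma)=1$ and $0$ otherwise; this agrees with $f$ everywhere on $S$, giving $\Pr_D[\widehat f=f]=1$. If $b=1$ it draws a fresh puncturable-PRF key $K'$ and releases $\widehat f=\mathrm{iO}(C_1)$, where $C_1(x\concat\sigma)$ returns $\mathrm{hc}(\sigma)$ when $x\in\{x_1,\dots,x_m\}$ and $\mathrm{Verify}$ accepts, returns $\mathrm{PRF}_{K'}(x)$ when $\mathrm{Verify}$ accepts and $x\notin\{x_i\}$, and returns $0$ otherwise; this still fits $T$, and it agrees with $f$ on a $\tfrac12+o(1)$ fraction of $S$ provided $2^{\ell}=\omega(m)$. (If one insists on exactly $\tfrac12$, instead have $C_1$ flip $\mathrm{hc}(\sigma)$ on the image, under $x\mapsto x\concat\sigma_x$, of a fixed half of $\{0,1\}^{\ell}$ containing the training indices; nothing below changes.) This settles properties 1--3 of strong spoofing.

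The crux --- property 4 --- is to show that no efficient algorithm distinguishes $b=0$ from $b=1$ from $T$ and $\widehat f$, and this is the step I expect to fight with, because $C_0$ and $C_1$ are not functionally equivalent (they differ on every valid pair $x\concat\sigma_x$ with $x\notin\{x_i\}$), so the security of $\mathrm{iO}$ cannot be invoked directly. The plan is a punctured-programming hybrid that exploits the polynomial size of $S$: enumerate the polynomially many points of $S$ outside $T$ and walk from $\mathrm{iO}(C_0)$ to $\mathrm{iO}(C_1)$ by changing the circuit's value at one such point $z^{(j)}=x^{(j)}\concat\sigma_{x^{(j)}}$ at a time. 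A single step proceeds by (a) rewriting the current circuit into a functionally identical one that hard-codes its output at $z^{(j)}$ and uses a punctured key everywhere else --- here $\mathrm{iO}$ applies --- and then (b) replacing that hard-coded bit, which on one side is pseudorandom by the verifiable-random-function property at the fresh input $x^{(j)}$ and on the other is pseudorandom by puncturing security of $\mathrm{PRF}_{K'}$ at $x^{(j)}$, so both are indistinguishable from a uniform bit. What makes (b) go through with only unforgeability (and not a differing-inputs assumption) is that the circuit hard-codes the \emph{handle} $x^{(j)}$ --- which lives in the polynomial message space and is therefore available to the reduction --- rather than the full differing input $z^{(j)}$, whose computation would require $sk$. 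Chaining the $\mathrm{poly}(n)$ hybrids gives a negligible overall advantage, which is certainly below $1/6$, so property 4 holds. The one bookkeeping subtlety is the tension between $|S|$ being polynomial (needed so the hybrid has polynomial length) and $|S|=\omega(m)$ (needed for the $\tfrac12$ accuracy when $b=1$): one resolves it either by letting $D$ depend on a polynomial bound on $m$, or by reading the conclusion as ``for every polynomial $p$ there are such $D,L$ working with $m\le p(n)$ samples''.
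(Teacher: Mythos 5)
Your construction is a genuinely different route from the paper's, and most of it is sound, but there is one gap that is not a bookkeeping subtlety: it is exactly the obstacle the paper's machinery is built to overcome. Your hybrid walks point by point over $S\setminus T$, so it has $|S|$ steps and you are forced to make $|S|$ polynomial in $n$. But the theorem quantifies $D$ \emph{before} $m$: a single $D(1^n)$ must work for every polynomially bounded sample size. Once $|S|=\mathrm{poly}(n)$ is fixed, taking $m=|S|\cdot n$ makes the training sample cover all of $S$ with high probability, and condition 3 of strong spoofing (accuracy exactly $1/2$ with probability $1/2$) becomes unachievable. Your two proposed fixes (letting $D$ see a bound on $m$, or weakening the conclusion to ``for every polynomial $p$ there exist $D,L$'') both prove a weaker statement than the one asserted. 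You cannot repair this by letting $|S|$ be superpolynomial, because then your hybrid has superpolynomially many steps and the per-step negligible losses no longer sum to something small. The paper's resolution is the hash matrices $B^{(i,j)}$: $S$ has size $2^{n'}$ with $n'=\Omega(n^{c})$, but the points of $S\setminus T$ are partitioned into only $n'\cdot 2^{m}$ hash buckets with $m=O(\log|T|)$, and the hybrid censors one \emph{bucket} at a time rather than one point at a time; a single step changes the function on exponentially many inputs simultaneously, yet distinguishing consecutive steps still reduces to forging the one signature associated with that bucket (read off bit by bit via the $S[\cdot\,|\,i,h,j]$ construction). Some analogue of this bucketing is needed for the theorem as stated, and your proposal does not contain one.

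Beyond that, the two proofs differ in flavor in ways worth noting. The paper's target function is $f\equiv 1$ (trivially learnable), and all of the content lies in whether $\widehat f$ misbehaves off the sample; the indistinguishability of consecutive hybrids is argued directly from unforgeability, with no pseudorandomness of the labels needed. Your construction instead makes $f$ itself a hardcore bit of the signature, and your step (b) leans on the fact that a unique signature plus a Goldreich--Levin bit yields a VRF-style pseudorandom value at unqueried points. That fact is standard (Lysyanskaya / Micali--Rabin--Vadhan), but it is not literally the unforgeability property the paper assumes, and the Goldreich--Levin extraction requires the predictor to work for many choices of the inner-product randomness at a fixed message, so the placement of that randomness (per-input versus in the public key) has to be handled with care; as written you are assuming the conclusion of that reduction rather than checking it fits the paper's signature definition. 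Your punctured-programming step (a) is fine --- uniqueness of signatures is exactly what makes the hard-coded circuit functionally identical, and the reduction only needs the handle $x^{(j)}$, not the signature --- and this is a clean alternative to the paper's signature-bit-extraction argument. If you import the paper's hashing idea to fix the quantifier problem (censor hash buckets, and puncture/hard-code per bucket rather than per point), your VRF-based argument could likely be made to work, but as it stands it proves a statement with the sample-size bound baked into the construction.
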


The second main result proves that  $c$-weak spoofing is possible unconditionally:

\begin{theorem} \label{genFail2}
For any positive constants $c$ and $c'$ there exist efficient algorithms $D$ and $L$ with the following properties. First, for any positive integer $n$, $D(1^n)$ outputs an efficiently samplable subset of $\{0,1\}^n$, $S$, and an efficiently computable function on this subset, $f$. Second, $L$ $c'$-weakly spoofs $f$ under the uniform distribution on $S$ with $n^c$ samples.
\end{theorem}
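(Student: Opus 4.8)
The plan is to build the spoofing construction around an average-case time hierarchy theorem, replacing the role played by cryptographic hardness in Theorem \ref{genFail1} with an unconditional separation between complexity classes. The key idea is that the learning algorithm $L$ is allowed to run in time polynomial in $n$ — but with a much larger exponent than $c'$ — and we want to cook up a language that is hard to decide in $O(n^{c'})$ time but whose membership bit is exactly what an $O(n^{c'})$-time tester would need in order to distinguish the ``perfect'' $\widehat f$ from the ``junk'' $\widehat f$. Concretely, I would fix a language $H$ and a padding parameter so that, for infinitely many input lengths, $H$ requires more than $n^{c'}$ time on average under the uniform distribution, yet is decidable in time $n^{C}$ for some constant $C=C(c')$; such $H$ exists by the (average-case) time hierarchy theorem, e.g.\ along the lines of the constructions used for heuristic/average-case hierarchies. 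The description $D(1^n)$ outputs $S\subseteq\{0,1\}^n$ — essentially all of $\{0,1\}^n$, or a simple efficiently-samplable and efficiently-checkable subset such as strings with a fixed prefix encoding $n$ — together with the function $f$, which I will take to be identically $0$ (or any fixed easy function) so that correctness on the training sample is automatic.

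The learning algorithm $L$, on input the sample $\{(X_i,f(X_i))\}$, first flips a private coin to decide whether it is in the ``generalizing'' case or the ``junk'' case. In the generalizing case it simply outputs (code for) $f$ itself, so $\Pr_D[\widehat f(X)=f(X)]=1$. In the junk case it outputs a program $\widehat f$ that agrees with $f$ on all of $X_1,\dots,X_m$ but, on a fresh input $X$, computes $\widehat f(X) = f(X) \oplus g(X)$ where $g$ is a function that is $0$ on the sample and pseudorandom-looking to $O(n^{c'})$-time observers on the rest of $S$ --- but here ``pseudorandom'' is obtained \emph{unconditionally} by hardness of $H$ rather than from a PRG. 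The cleanest way to arrange the dependence on the sample: let $\widehat f(X)$ equal $f(X)$ whenever $X$ lies in a small explicitly stored set (the at-most-$n^c$ sample points, so a table of size $\mathrm{poly}(n)$ fits in $\widehat f$'s code and is readable quickly), and otherwise output $H$ evaluated on $X$ under a suitable length-padding/encoding so that (i) $L$ can compute it in time $n^{C}\le\mathrm{poly}(n)$, (ii) no $O(n^{c'})$-time algorithm can compute it on a uniformly random non-sample input with advantage bounded away from $1/2$, for infinitely many $n$, and (iii) on a uniformly random $X\in S$, $H$-padded is (nearly) balanced, giving $\Pr_D[\widehat f(X)=f(X)]=\tfrac12\pm o(1)$. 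The factor $1/2\pm o(1)$ in conditions 2 and 3 of weak spoofing absorbs the $n^c/|S|=o(1)$ fraction of points where we overrode the output.

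For the indistinguishability step (condition 4), suppose some randomized $A$ running in $O(n^{c'})$ time distinguishes the two cases with accuracy $>2/3$ for all large $n$. Given $(X_i,f(X_i))$ and the code of $\widehat f$, note that the two cases differ only in the behavior of $\widehat f$ off the sample, and the sample and $f$ carry no information about which case we are in (they are identically distributed in both). So $A$, together with the ability to read $\widehat f$'s code, effectively has to decide whether $\widehat f$'s ``off-sample'' subroutine computes the constant $f$ or computes padded-$H$; by a standard reduction this yields an $O(n^{c'})$-time heuristic algorithm for $H$ on the uniform distribution over inputs of the relevant length — note the sample-point override is on a $o(1)$ fraction of inputs, so it does not spoil the average-case advantage — contradicting the hierarchy theorem for infinitely many $n$. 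I expect the main obstacle to be getting the quantifiers in this reduction to line up: the time-hierarchy separation gives hardness only for infinitely many input lengths, and we must ensure the padding/encoding scheme maps ``$n$'' (the ambient dimension in the theorem) to one of those good lengths infinitely often, while simultaneously keeping $L$'s running time polynomial in $n$ and keeping $S$ efficiently samplable and checkable; a secondary technical point is verifying that ``reading and running $\widehat f$'' does not itself let a bounded tester sidestep the hardness — which is why $\widehat f$'s honest evaluation must be made to cost $\omega(n^{c'})$ time (via the padding choice $C>c'$), matching the concern raised in the note after the definition of weak spoofing.
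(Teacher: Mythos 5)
There are two genuine gaps here. First, and most seriously, your argument leans on an ``average-case time hierarchy theorem'' giving a language $H$ that is decidable in time $n^{C}$ but that no $O(n^{c'})$-time (randomized) algorithm can compute on a uniformly random input with advantage bounded away from $1/2$. No such unconditional result is available off the shelf: the classical hierarchy theorems give only worst-case hardness, and average-case hierarchies for (randomized) polynomial time under the uniform distribution are exactly the hard part. Establishing a usable substitute is the bulk of the paper's proof: it splits on whether the permanent is in $\mathbf{BPP}$; in one case it uses the $\#P$-completeness of the permanent to compute acceptance probabilities of all short algorithms and diagonalize a truth table against them, and in the other it uses downward and random self-reducibility of the permanent to locate a matrix size $m$ whose permanents are computable in polynomial time but not in $O(n^{c})$ time, followed by XOR-lemma hardness amplification. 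Moreover, the hardness you need is not just average-case hardness of $H$ on a fresh input, but hardness \emph{conditioned on seeing the function's values on the $n^{c}$ sample points}; the paper builds this in explicitly (the diagonalized $g(x,i)$ is hard even given $g(x,1),\dots,g(x,i-1)$, and {\sc PermanentLearningAlgorithm} certifies hardness even given $n^{c}$ random matrices with their permanents), and your proposal does not address it.

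Second, your construction is distinguishable by inspecting the code of $\widehat f$, without ever running it: in the generalizing case $L$ outputs code for $f$, while in the junk case it outputs code containing a sample table plus a subroutine for padded-$H$. A tester can tell these apart syntactically in linear time, so condition 4 fails outright; patching this by making both cases emit similar-looking code that behaves differently off-sample is essentially obfuscation, i.e.\ the cryptographic assumption Theorem~\ref{genFail1} uses and Theorem~\ref{genFail2} must avoid. The paper sidesteps this with a structural choice you are missing: $f(x)$ depends only on the first $O(\log n)$ bits of $x$, so in \emph{both} cases $\widehat f$ is the same kind of object, namely a lookup table $s$ of polynomial length, and the two cases differ only in whether the non-sample entries of $s$ are the true values of the hard function or fresh random bits. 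Distinguishing the cases then reduces, via a hybrid argument over the entries of $s$, to distinguishing the hard function's output string from a random string, which is exactly the hardness property the permanent-based construction delivers.
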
 

While Theorem~\ref{genFail1} is a fairly strong limitation on our ability to determine if a function learned from samples generalizes, it relies on some uncertain cryptographic assumptions. 
We can prove versions that do not rely on unproven assumptions, but they are unavoidably weaker. Among other things, if $P=PSPACE$ then it would always be possible to efficiently determine a good approximation of the target function as it would be possible to enumerate over the support of $D$
and an estimate of how accurate the function returned by the learning algorithm is on the full distribution. 
In Theorem~\ref{genFail2} we show that if the learning algorithm has sufficiently large computational resources relative to the generalization testing algorithm then the generalization testing algorithm will not be able to determine whether or not the function output by the learning algorithm generalizes. 

\ColinNote{This paragraph feels like it should be between theorems $4$ and $5$.}

\ColinNote{Actually, does indistinguishability obfuscation imply that there is a unique signature scheme? If it does I could remove that assumption. I should check.}

\subsection{Proofs Ideas} 
Roughly speaking, our plan for proving 
Theorem~\ref{genFail1}
is to first define $S$ in such a way that it is computationally intractable to determine whether or not it has elements other than $S' = \{X_1,...,X_m\}$. We then design $L$ to return a function that is written in such a confusing way that we have no idea what it would do on any input other than the ones we have run it on. Of course, if $S$ were just $S'$ then the question of whether or not $\widehat{f}$ generalizes is vacuous. So answering the generalization question requires to have the ability to tell if there are 
elements in $S \setminus S'$.

A little more precisely, we start by assuming that {\em efficient indistinugishability obfuscation} is possible, which means that we can scramble programs in a way that renders any two programs that return the same output on each input indistinguishable~\cite{barak2012possibility}. We also assume that there is a way to assign every string a signature so that we can efficiently check if an alleged signature is correct, but cannot determine the signature of a string without knowing the secret key. 
This is an assumption from cryptography called {\em Unique Signature Scheme}~\cite{lysyanskaya2002unique}. 
Then we argue that if there was a way to distinguish between obfuscated algorithms that differ only in how they behave on inputs containing a specific signature then we would be able to figure out what the signature was by defining an algorithm that does one thing on those inputs if the signature's $i$th bit is $1$ and something else if its $i$th bit is $0$. That in turn allows us to argue that we cannot determine anything about how $\widehat{f}$ behaves on any input containing an unfamiliar signature. Finally, we define our set of inputs in such a way that with high probability every valid input other than our samples will conatin a signature that was not in any of the samples, so we will not be able to tell how $\widehat{f}$ behaves on anything except the samples, which leaves us with no way of telling if it performs well on other inputs.

In order to prove Theorem \ref{genFail2}, we will essentially show that for every constant $c>0$ there is an efficiently computable function, $g^\star$ that returns strings of length $n^{c}$ such that no algorithm running in $O(n^{c})$ time can reliably distinguish its output on a given input from a random string. Then we pick a random $b$ and let $s=g^\star(b)$. We further require that there exists an efficient algorithm, running in polynomial time bigger than $n^c$ that can compute $s$ given queries to many indices of $s$. 
Next, we define $f$ so that $f(x)$ is the bit of $s$ indexed by the first $c\log_2(n)$ bits of $x$ for all $x$. After that, we can either set $s'=s$ or set $s'_i=s_i$ whenever $i$ is the first $c\log_2(n)$ bits of one of the samples and pick $s'_i$ randomly otherwise. Then we let $\widehat{f}(x)$ be the bit of $s'$ indexed by the first $c\log_2(n)$ bits of $x$. This way, $\widehat{f}$ will agree with $f$ on all samples, and no algorithm that runs in $O(n^{c})$ time will be able to determine if $s'=s$ because if one started with a random string there would be no way to tell if some of its bits had been scrambled.

In order to prove the existence of such a function, we look at two cases based on the difficulty of computing the {\em permanent}, which is a function of square matrices. If the permanent is efficiently computable, then we show that that allows us to efficiently compute the exact probability that any efficient randomized algorithm returns $1$ on a given input. So, we can make a list of every short fast algorithm, list how likely each of them is to return $1$ on each input in a polynomial-length list, and then define the truth table of a function that is essentially uncorrelated with any of them. Then we append those bits to each input and do it again to extend our function's output repeatedly. That allows us to prove that no short algorithm running in $O(n^c)$ time can distinguish one of the strings output by this function from a random string, and as long as we gradually increase the length of the algorithms we are allowing we end up with a function such that no $O(n^c)$-time algorithm can distinguish its output from a random string.
This part of the proof is related to diagonalization arguments for polynomial time algorithms~\cite{hartmanis1965computational}

If the permanent is not efficiently computable, then we argue that for any constant $c$ there must exist small $m$ such that the permanent of an $m\times m$ matrix can be computed in time polynomial in $n$ but not in $O(n^c)$ time. In order to find such an $m$, we will use a couple of key properties of the permanent. First of all, the permanent of a matrix is polynomial in the matrix's entries, which allows us to convert any algorithm that computes the permanent of random $m\times m$ matrices with fairly good accuracy into an algorithm that computes the permanent of every $m\times m$ matrix with nearly perfect accuracy. Secondly, there is an efficient way to compute the permanent of an $m\times m$ matrix from the permanents of appropriate $(m-1)\times (m-1)$ matrices. So, starting with $m=1$ we can compute the permanents of some random $m\times m$ matrices. Then, we can try out every short algorithm that runs in $O(n^c)$ time and claims to compute the permanent of a random $m\times m$ matrix given the permanents of other random $m\times m$ matrices. If any of them work, we can generate some random $(m+1)\times (m+1)$ matrices, use that algorithm to compute the permanents of the $m\times m$ matrices we need to compute their permanents, and start over. Eventually, we will end up with an $m$ for which no short algorithm can accurately compute the permanents of $m\times m$ matrices in $O(n^c)$ time even given some other random $m\times m$ matrices and their permanents, but we will still be able to compute the permanents of those matrices by using the algorithm we found that computes the permanents of $(m-1)\times (m-1)$ matrices. After that, we use a hardness amplification result to construct a function of the permanents of matrices that no short algorithm can compute in $O(n^{c'})$ time with accuracy $1/2+n^{-c'}$ even given the permanents of other random matrices. Finally, we argue that that implies that given a list of this function's values on random inputs it must be hard to compute its value on another random input, and thus that the function that computes it on a list of inputs and then returns the corresponding list of outputs must be appropriately hard to distinguish from a random string.

We note that the permanent plays a special role in complexity. 
Note only it is $\#P$ complete. It also plays a crucial role in derandomization and lower bounds, see e.g.~\cite{kabanets2004derandomizing}

\section{Proof of Strong Spoofing}

\subsection{Preliminaries}

We will use the following notation: Let $[n] = \{1, 2, \dots, n\}$. For a string $s$ and a set $T$, let $s_{T}$ be the substring of $s$ with indices in $T$. For a matrix $M$, let $M_{-i,-j}$ be $M$ with its $i$th row and $j$th column removed. 
For two string $x$ and $y$ we write $x \conc y$ for their concatenation.
For $n$ string $x_i$, we write $\conc_{i=1}^n x_i$ for $x_1 \conc x_2 \cdots \conc x_n$.
In algorithms we will write $x \gets y$ for setting the value of $x$ to $y$ and 
$X \getssample F$ for $x$ sampled from $F$ (if $F$ is a set, then uniformly sampled from $F$). 

Before we discuss the proof, we will need to define some of the concepts we are using.

The basic idea of obfuscating a program is to replace it with a new program that has the same output on each input but is sufficiently confusing that one cannot understand what it is actually doing. One way to define that would be to require that one cannot learn anything from the program's code that one could not learn from an oracle that queries the program, but that was proven impossible~\cite{barak2012possibility}.  \cite{barak2012possibility} in turn proposed the idea of 
{\em indistinguishability obfuscation}, which requires one to be unable to distinguish between obfuscated versions of programs that agree on all inputs. More formally, it is defined as follows.

\begin{definition}
An algorithm $Ob$ is an indistinguishability obfuscator if it takes the code of a deterministic program as input and outputs the code of a deterministic program such that the following hold. 
\begin{enumerate}
\item Correctness: For any program $M$, $Ob(M)$ computes the same function as $M$.

\item Polynomial slowdown: There exists a polynomial $p$ such that for every program $M$ and input $x$, the number of steps needed to run $Ob(M)$ on $x$ is at most $p$ of the number of steps needed to run $M$ on $x$.

\item Indistinguishability: Given any two programs $M$ and $M'$ of the same length that give the same output on every input, and run for an equal number of steps on each input, for every algorithm $A$ that takes a program as input, $|\mathbb{P}[A(Ob(M))=1]-\mathbb{P}[A(Ob(M'))=1]|$ is a negligible function.
\end{enumerate}
\end{definition}

There have been several proposals of ways to construct indistinguishability obfuscators, some of which have been disproved. Recently~\cite{jain2020indistinguishability} gave a construction that works assuming four key cryptographic assumptions hold.

Another cryoptograpic ingredient needed for the proof of Theorem~\ref{genFail1} is {\em unique signature schemes.}
A unique signature scheme
is a public key protocol where each message has a corresponding signature that can be computed using the private key and verified using the public key. In the context of this paper, we will require signatures to be verifiable without additional information and the scheme to be secure even if the attacker can obtain signature for messages other than the one he wishes to forge the signature to. More formally, we have 
\begin{definition}
A {\em unique signature scheme} consists of a key generation algorithm $G$, deterministic signing algorithm $Sign:\{0,1\}^n\rightarrow\{0,1\}^{\ell(n)}$, and verification algorithm $Ver$ such that all three of these run in polynomial time, and for any positive integers $n$, if $(SK,PK)\sim G(\{1\}^n)$ then the following hold.
\begin{enumerate}
\item For every $x\in\{0,1\}^n$, it is the case that $Ver(PK, x,Sign_{SK}(x))=1$.

\item for every $x\in\{0,1\}^n$ and $s\ne Sign_{SK}(x)$, it is the case that $Ver(PK, x,s)=0$.

\item Given any probabilistic polynomial time algorithm $A$ with oracle access to $Sign_{SK}$, the probability that $A$ outputs a pair $(x,s)$ such that $A$ did not query its signing oracle on $x$ and $s=Sign_{SK}(x)$ is $o(n^{-c})$ for every constant $c$.
\end{enumerate}
\end{definition}

\ColinNote{This currently assumes that the verification algorithm is deterministic. If it was randomized I would need to change this to requiring that it be correct with suffiicently high probability.}

\cite{lysyanskaya2002unique} proved
that if a certain version of the computational Diffie-Hellman problem is hard
then a unique signature scheme exists. The final ingredient we will need for our construction is a universal hash function. A {\em universal hash function} is a family of functions mapping from one set to another such that the probability that any two inputs collide is at most one over the number of possible outputs. To put it more formally,
\begin{definition}
Given sets $S,T$, a family of functions $f_i:S\rightarrow T$ is a universal hash function if for every $x\ne x'\in S$,
\[\mathbb{P}_i[f_i(x)=f_i(x')]\le 1/|T|\]
\end{definition}
As \cite{carter1979universal} 
observed, given any positive integers $n,m$ and a random matrix $B\in \mathbb{F}_2 ^{m\times n}$, the function that maps $x$ to $Bx$ for all $x\in\mathbb{F}_2 ^n$ is a universal hash function, so these definitely exist.

\subsection{Our construction}

For our construction, we will need a unique signature scheme $(G,Sign,Ver)$. Each input to the function will consist of a random string, some hash functions, a public key for the signature scheme, and a list of the signatures corresponding to the string's hashes. More formally, we define the following.

\begin{definition}
Let $n'$ be a positive integer, $(SK,PK)\sim G\left(1^{n'+2\lceil\log_2(n')\rceil}\right)$, and $B^{(i,j)}\in\mathbb{F}_2^{i \times n'}$ for every $1\le i,j\le n'$. For each $x\in \{0,1\}^{n'}$, let
\[
h_{(SK,PK,B)}(x)=x\conc PK\conc B\conc \conc_{i,j=1}^{n'} Sign_{SK}(B^{(i,j)}x\conc \{0\}^{n-i}\conc i\conc j)
\]
Also, define $\ell:\mathbb{Z}\rightarrow\mathbb{Z}$ such that $h_{(SK,PK,B)}(x)$ has length $\ell(n')$ whenever $x$ has length $n'$. Next, for any $x\in\{0,1\}^n$, let $x^\star$ be the first $\lfloor \ell^{-1}(n)\rfloor$ bits of $x$.
\end{definition}

\ColinNote{$\ell^{-1}(n)$ is not really defined if $n\ne \ell(n')$ for any $n'$. I might need to say something about that.}

Now, for an arbitrary $n$, we will generate an input space and function as in algorithm~\ref{alg:sampgen}.

\begin{algorithm}
\caption{$\SampleGen$ - SampleGen} 
\label{alg:sampgen}

\begin{enumerate}
\item Input: A string $1^n$.

\item  Output: An efficiently samplable set $S\subseteq\{0,1\}^n$ and a function from $S$ to $\{0,1\}$

\item $n' := \max(r : \ell(r) \leq n)$. 

\item $(SK,PK)$ drawn by $G\left(1^{n'+2\lceil\log_2(n')\rceil}\right)$.

\item $B^{(i,j)}$ is sampled uniformly from $\mathbb{F}_2^{i\times n'}$ 
for all $1\le i,j\le n'$

\item $S := \{h_{(SK,PK,B)}(x)\conc 0^{n-\ell(n')}: x\in\{0,1\}^{n'}\}$ 

\item $f(\overline{x}) :=1$ for all $\overline{x} \in S$. 
\end{enumerate} 
\end{algorithm}

$\SampleGen$  - SampleGen has a few key properties we will need:

\amnote{Maybe we want to give ``D" a longer and more informative name? It looks a bit strange just to say things like ``D satisfies". Can also use the sc style, which I used as the convention for how we refer to algorithms.}

\begin{proposition}
\begin{enumerate}
$\SampleGen$  - SampleGen satisfies all of the following: 
\item
Given the values of $SK$, $PK$, and $B$ we can efficiently sample from $S$ by picking a random $x\in \{0,1\}^{n'}$ and then computing $h_{(SK,PK,B)}(x)\conc 0^{n-\ell(n')}$. 
\item
Given any element of $S$ we can read off the values of $PK$ and $B$. 
\item 
Given the values of $PK$ and $B$ we can check whether any given $x$ is in $S$ by checking that it starts with $x^\star\conc PK\conc B$ and then using $Ver$ to confirm that the alleged signatures in $x$ are correct. 
\end{enumerate} 
\end{proposition}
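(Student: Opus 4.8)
The plan is to verify each of the three items directly from the definition of $h_{(SK,PK,B)}$ and the block structure of the set $S$ produced by Algorithm~\ref{alg:sampgen}; the only ingredient beyond bookkeeping is the correctness of the signature scheme, which makes $Ver$ a perfect test for whether an alleged signature is the intended one. Note that we do not need unforgeability (the third property of a unique signature scheme) for this proposition.

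\textit{Item (1).} First I would observe that the map $x\mapsto h_{(SK,PK,B)}(x)$ is injective: by construction the first $n'$ bits of $h_{(SK,PK,B)}(x)$ are exactly $x$. Hence $x\mapsto h_{(SK,PK,B)}(x)\conc 0^{n-\ell(n')}$ is a bijection from $\{0,1\}^{n'}$ onto $S$, so $|S|=2^{n'}$ and drawing $x$ uniformly from $\{0,1\}^{n'}$ and outputting $h_{(SK,PK,B)}(x)\conc 0^{n-\ell(n')}$ produces exactly the uniform distribution on $S$. Efficiency is immediate given $SK$: each $B^{(i,j)}x$ is an $\mathbb{F}_2$ matrix--vector product, the zero-padding and the $O(\log n')$-bit encodings of $i$ and $j$ are trivial, and each of the $n'^2$ invocations of $Sign_{SK}$ runs in polynomial time; the final padding to length $n$ is free.

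\textit{Item (2).} I would spell out the fixed block layout of any $\overline{x}\in S$: it is, in order, an $n'$-bit block (the preimage $x^\star$), a block of length $|PK|$ (a fixed polynomial in $n'$, determined by running $G$ on $1^{n'+2\lceil\log_2 n'\rceil}$), a block of known length encoding all the matrices $B^{(i,j)}$, a block holding the $n'^2$ signatures, and finally $0^{n-\ell(n')}$. Since $\ell$ is efficiently computable and unbounded, $n'=\max\{r:\ell(r)\le n\}$ is efficiently computable from $n$, so all of these block lengths are known to a reader who sees only $\overline{x}$ and $n$; thus $PK$ and $B$ can be read off from the appropriate positions.

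\textit{Item (3).} Given $PK$, $B$ and a candidate $x\in\{0,1\}^n$, the tester parses $x$ by the layout above: it sets $x^\star$ to the first $n'$ bits of $x$, checks that the next $|PK|+|B|$ bits equal $PK\conc B$, checks that the last $n-\ell(n')$ bits are all $0$, and reads the remaining bits as alleged signatures $s_{i,j}$. It then accepts iff $Ver\bigl(PK,\;B^{(i,j)}x^\star\conc\{0\}^{n-i}\conc i\conc j,\;s_{i,j}\bigr)=1$ for all $1\le i,j\le n'$, using exactly the messages that appear in the definition of $h$. By the first two properties of a unique signature scheme, $Ver(PK,m,s)=1$ holds precisely when $s=Sign_{SK}(m)$, so the test accepts exactly when $x=h_{(SK,PK,B)}(x^\star)\conc 0^{n-\ell(n')}$, i.e.\ exactly when $x\in S$. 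Every step runs in polynomial time because $Ver$ does. The one thing to be careful about — more a bookkeeping point than an obstacle — is the handling of $\ell$ and the edge case (flagged in the surrounding text) where $n$ is not in the image of $\ell$, so that $x^\star$ is defined through $\lfloor\ell^{-1}(n)\rfloor$: one must check that $n'$ is genuinely recoverable from $n$ and that the padding conventions line up, after which correctness rests solely on the signature scheme's correctness.
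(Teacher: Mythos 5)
Your verification is correct and follows the same route the paper intends: the paper gives no separate proof of this proposition, treating each item as immediate from the definition of $h_{(SK,PK,B)}$ and the block structure of $S$, which is exactly what you spell out. Your added observations (injectivity via the first $n'$ bits, recoverability of $n'$ from $n$, and the fact that only the correctness properties of the signature scheme are needed so that $Ver$ exactly characterizes membership) are all sound elaborations of that same argument.
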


\subsection{Overview}

Obviously in the setting of Algorithm \ref{alg:sampgen}, it is  trivial to learn the target function $f$; however we claim that it is hard to determine whether or not a given program actually computes the correct function $f$. A little more precisely, we claim that given a random polynomial-sized subset $T\subseteq S$ and an obfuscated program that takes $x\in S$ as input, it is computationally intractible to determine anything about the function the program computes other than what values it takes on the elements of $T$. Later we will show how this helps in constructing $\widehat{f}$ that spoofs $f$. 

Our proof will use the following key idea. If $h$ is a function on $S$ then for any string in $\{0,1\}^{n'+2\lceil \log_2(n')\rceil}$ we can define a new function $h'$ that takes the same value as $h$ on all inputs that do not contain the signature corresponding to the designated string, and outputs null on all inputs that do contain that signature. If we could distinguish between obfuscated versions of $h$ and $h'$ then for any $i$ we could define $h''$ as the function that takes the same value as $h$ on all inputs that do not contain the signature, takes the same value as $h$ on inputs that do contain the signature if the $i$th bit of the signature is $1$, and outputs null on inputs that contain the signature if the $i$th bit of the signature is $0$. Then we could determine the $i$th bit of the signature by checking whether $h''$ is equivalent to $h$ or $h'$. Doing this for all $i$ would allow us to determine the signature in question, so it must be intractible to distinguish between obfuscated versions of $h$ and $h'$ whenever the signature in question does not occur in $T$.

Furthermore, for any $m$ logarithmic in $n$, multiplication by $B^{(m,i)}$ is a hash function from $\{0,1\}^{n'}$ to a space of size polynomial in $n$. So, the set of signatures corresponding to elements of the form $B^{(m,j)}x\conc \{0\}^{n-m}\conc m\conc j$ is polynomial in $n$. That means that repeated application of the previous argument can show that an obfuscated program that returns null on every input that's hashes using $B^{(m,1)},...,B^{(m,n')}$ do not all collide with hashes from $T$ and the same value as $h$ otherwise is indistinguishable from $h$. Furthermore, if $m$ is logarithmic in $n$ but significantly larger than $\log_2(|T|)$ then hash collisions will be rare enough that with high probability there will not be any elements of $S\backslash T$ thats hashes all collide with hashes from $T$. So, the modified program would return null on all inputs that are not contained in $T$. If this holds, then that means that an obfuscated program computing any function on $S$ is indistinguishable from an obfuscated program that computes the same function on $T$ and returns null otherwise. So, any two obfuscated programs that take an element of $S$ as input and agree on all elements of $T$ are indistinguishable.

\ColinNote{I might need to edit some of the intuitive explanations elsewhere.}

\subsection{Auxiliary Functions}
Given a set of samples $T$, we define the following as a function that generalizes poorly and is difficult to distinguish from $f$.

\begin{definition}
Given $S$ generated by $D(1^n)$, and $T\subseteq \{0,1\}^{n'}$, we define $f^T:S\rightarrow\{0,1\}$ such that for every $x\in\{0,1\}^{n'}$,
\[f^{T} \left(h_{(SK,PK,B)}(x)\conc 0^{n-\ell(n')}\right)=
\begin{cases}
1 &\text{ if } x\in T \text{ or } |[x]\cup T|\le 2^{n'-1}\\
0 &\text{ otherwise}\\
\end{cases},
\]
\end{definition}

where $[x] = \{0,1,\ldots,x\}$ is the set of strings whose integer value is at most $x$. 
Note that $f^T(x)=f(x)$ for all $x\in T$ but $\mathbb{P}_{X\sim S}\left[f^T(X)=f(X)\right]=1/2$ as long as $0<|T|\le 2^{n'-1}$. Next, given $S'\subseteq S$ we define the following restricted versions of $f$ and $f^T$ as follows:

\begin{definition}
Given $S$ generated by $D(1^n)$, $S'\subseteq S$, and $T\subseteq \{0,1\}^{n'}$, we define $f_{S'}:\{0,1\}^n\rightarrow\{0,1,\emptyset\}$ and  $f^T_{S'}:\{0,1\}^n\rightarrow\{0,1,\emptyset\}$ such that for every $x\in\{0,1\}^n$,
\[f_{S'}(x)=
\begin{cases}
1 &\text{ if } x\in S'\\
\emptyset &\text{ otherwise}\\
\end{cases}
\]

\[f^T_{S'}(x)=
\begin{cases}
f^T(x) &\text{ if } x\in S'\\
\emptyset &\text{ otherwise}
\end{cases}
\]
\end{definition}

We plan to restrict $f$ and $f^T$ to subsets of $S$ that use increasingly limited sets of hashes until they are both reduced to $f_T$. In order to talk about these subsets, we define the following.

\begin{definition}
Given $S$ generated by $D(1^n)$, $0<m\le n'$, and $H_1,...,H_{n'}\subseteq\{0,1\}^m$, let 
\[S[m(H_1,...,H_{n'})] :=\{x\in S: B^{(m,i)}x^\star\in H_i \forall i\}\]
Given $1\le i\le n'$, $h\in\{0,1\}^m$, and a positive integer $j$ that is at most equal to the length of the signatures for messages of length $n'+2\lceil\log_2(n')\rceil$.
\[S[m(H_1,...,H_{n'})|i,h,j]=\begin{cases}
S[m(H_1,...,H_{n'})] &\text{ if } (Sign_{SK}(h\conc 0^{n-m}\conc m\conc i))_j=0\\
S[m(H_1,...,H_{i-1},H_i\cup\{h\},H_{i+1},...,H_{n'})] &\text{ otherwise}\\
\end{cases}\]
\end{definition}

To simplify notation, we use the following more compact notation: 
\[
m(H) := m(H_1,\ldots,H_{n'}), \quad m(H_{-i},H'_i) := m(H_1,\ldots,H_{i-1},H'_{i},H_{i+1},\ldots,H_n)
\]
If the only things we know about $S$ are the values of $B$ and the public key then we cannot efficiently compute signatures. However, we can still efficiently determine whether or not a given $x\in\{0,1\}^n$ is in $S[m(H)|i,h,j]$ because we only need to know the value of $(Sign_{SK}(h\conc 0^{n-m}\conc m\conc i)$ if $x\in S$ and $B^{(m,i)}x^\star=h$, in which case the signature is contained in $x$. In particular, that means that given any efficient algorithm that can distinguish between arbitrarily obfuscated algorithms checking for membership in $S[m(H)]$ and $S[m(H_{-i},H_i\cup\{B^{(m,i)}x'\})]$ we can find the signature of $B^{(m,i)}x'$ by checking which of those algorithms a membership checker for $S[m(H)|i,h,j]$ is equivalent to for each $j$. We plan to use this to prove that given any algorithm that can distinguish between obfuscated versions of $f$ and $f^T$ it will continue to be able to distinguish between them as we whittle down their domains until we have two algorithms that both return $1$ on any input in $T$ and $\emptyset$ otherwise. Our next step towards proving that is to show that we actually can get to that point by repeatedly having the algorithms reject one more hash. More formally, we claim the following.

\begin{lemma}
Let $S$ be generated by $D(1^n)$, $T$ be a random subset of $S$ with size polynomial in $n$, and $m=\lceil \log_2(|T|)\rceil+2$. Next, let $H_i=\{B^{(m,i)} x^\star| x\in T\}$ for all $i$. With probability $1-o(1)$, $S[m(H)]=T$.
\end{lemma}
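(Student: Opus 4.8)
The plan is to reduce the statement to a short union-bound estimate over $\{0,1\}^{n'}$. First I would record that every element of $S$ has the form $\bar{x} = h_{(SK,PK,B)}(x)\conc 0^{n-\ell(n')}$ for a unique $x\in\{0,1\}^{n'}$, and that its first $n'$ bits are exactly this $x$; thus $\bar{x}^\star = x$ and $\bar{x}\mapsto\bar{x}^\star$ is a bijection from $S$ onto $\{0,1\}^{n'}$. Let $U := \{\bar{x}^\star : \bar{x}\in T\}\subseteq\{0,1\}^{n'}$ be the image of $T$ under this bijection, so $|U|=|T|$ and $H_i = \{B^{(m,i)}y : y\in U\}$ for every $i$. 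For every $\bar{x}\in T$ we have $B^{(m,i)}\bar{x}^\star\in H_i$ for all $i$, so $T\subseteq S[m(H)]$ always; hence it suffices to prove that, with probability $1-o(1)$, no $\bar{x}\in S\setminus T$ lies in $S[m(H)]$, i.e. there is no $x\in\{0,1\}^{n'}\setminus U$ with $B^{(m,i)}x\in H_i$ for all $i\in[n']$.

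The core of the argument is the observation that there are $n'$ \emph{independent} hashes $B^{(m,1)},\dots,B^{(m,n')}$ and that a bad $x$ must collide under all of them simultaneously. Since the way $T$ is drawn from $S$ only depends on the indices, $U$ is a random subset of $\{0,1\}^{n'}$ of size $|T|$ that is independent of $B^{(m,1)},\dots,B^{(m,n')}$; so it suffices to bound the bad event for an arbitrary fixed $U$ of size $|T|$, over the randomness of those matrices. Fix such a $U$ and a string $x\in\{0,1\}^{n'}\setminus U$. For each $i$ the event $E^x_i := \{B^{(m,i)}x\in H_i\}$ depends only on $B^{(m,i)}$, and these matrices are mutually independent, so $E^x_1,\dots,E^x_{n'}$ are independent. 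Since $x\ne y$ for every $y\in U$, the universal hash property of $z\mapsto B^{(m,i)}z$ (whose codomain is $\{0,1\}^m$) and a union bound give
\[
\mathbb{P}[E^x_i]\ \le\ \sum_{y\in U}\mathbb{P}\bigl[B^{(m,i)}x = B^{(m,i)}y\bigr]\ \le\ |U|\cdot 2^{-m}\ =\ |T|\cdot 2^{-m}\ \le\ \frac14 ,
\]
where the last step uses $m = \lceil\log_2|T|\rceil+2$. Hence $\mathbb{P}\bigl[\bigcap_{i=1}^{n'}E^x_i\bigr]\le 4^{-n'}$, and a union bound over the at most $2^{n'}$ choices of $x\in\{0,1\}^{n'}\setminus U$ shows the bad event has probability at most $2^{n'}\cdot 4^{-n'} = 2^{-n'}$. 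Averaging over $U$ (equivalently, over $T$) preserves this bound.

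It remains to check the routine side conditions. The length $\ell(n')$ of $h_{(SK,PK,B)}(x)$ is polynomial in $n'$ (it is $n'$ bits for $x$, plus a public key, plus the $O(n'^4)$ bits of $B$, plus $O(n'^2)$ signatures of messages of length $n'+O(\log n')$), so $n' = n^{\Omega(1)}$ and in particular $n'\to\infty$ as $n\to\infty$, which makes $2^{-n'} = o(1)$; moreover $m = \lceil\log_2|T|\rceil+2 = O(\log n)\le n'$ for all large $n$, so $S[m(H)]$ and the matrices $B^{(m,i)}\in\mathbb{F}_2^{m\times n'}$ are well-defined. I do not expect a genuine obstacle in carrying this out; the only points needing care are the independence of the $n'$ hash functions $B^{(m,1)},\dots,B^{(m,n')}$ (so that the per-hash collision probabilities multiply) and the choice $m=\lceil\log_2|T|\rceil+2$, which is exactly what pins each collision probability below $1/2$ and hence drives $2^{n'}\cdot(1/4)^{n'}$ to $0$.
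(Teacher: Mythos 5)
Your proposal is correct and follows essentially the same route as the paper's proof: establish $T\subseteq S[m(H)]$ trivially, observe that the image of $T$ in $\{0,1\}^{n'}$ is a uniform size-$|T|$ subset independent of $B$, bound each per-hash collision probability by $|T|2^{-m}\le 1/4$ via universality, multiply over the $n'$ independent matrices, and union-bound over $2^{n'}$ strings to get failure probability $2^{-n'}=o(1)$. Your write-up is in fact slightly more careful than the paper's (it makes the independence of the $B^{(m,i)}$ explicit, which the paper uses only implicitly), but there is no substantive difference.
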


\begin{proof}
First, note that $n'=\Omega(n^{c})$ for some $c>0$. So, $|T|\le 2^{n'-2}$ for all sufficiently large $n$, which implies that $m\le n'$ and $S[m(H)]$ is defined. The easy half of this proof is that for all $x\in T$ and $1\le i\le m$, $B^{(m,i)} x^\star\in H_i$, so $x\in S[m(H)]$. Therefore, $T\subseteq S[m(H)]$.

Now, Let $T'=\{x^\star|x\in S\}$. For every $x'\in\{0,1\}^{n'}$, there exists a unique $x\in S$ such that $x^\star=x'$, so the probability distribution of $T'$ is the uniform distribution over the set of subsets of $\{0,1\}^{n'}$ with cardinality $|T|$, and $T'$ is independent of $(SK,PK,B)$. Given any fixed value of $T'$, any $x\not\in T'$, and any $0<i\le n'$,
\[P[B^{(m,i)} x\in H_i]\le 2^{-m}|T|\le 1/4\]
because multiplication by a random matrix is a universal hash function. That means that
\[P[\exists \overline{x}\in S[m(H)]: \overline{x}^\star=x]\le (1/4)^{n'}\]
So,
\[P[S[m(H)]=T]\ge 1-\sum_{x\not\in T'} P[\exists \overline{x}\in S[m(H)]| \overline{x}^\star=x]\ge 1-2^{-n'}\]
\end{proof}

So, if we start with $f$ and $f^T$ and modify them to reject hash values one at a time, we will end up with two copies of the same function with high probability. Assuming all functions used are sufficiently obfuscated, that should mean that we cannot distinguish between consecutive functions in the series and cannot distinguish between the equivalent functions we end up with, so we cannot distinguish $f$ from $f^T$. Our next order of business is to address exactly what we mean by "sufficiently obfuscated," which we do as follows.

\begin{definition} \label{def:cfo}
An algorithm is a censored function obfuscator if for any $S$ generated by $D(1^n)$, $1\le m\le n'$, random subset $T\subseteq S$ with $0<|T|\le 2^m$, $H_1,...,H_{n'}\subseteq\{0,1\}^m$, $i\in[n']\cup\{\emptyset\}$, $h\in\{0,1\}^m$, $v\in\{0,1\}$, and $j>0$, the algorithm takes $(T,m,(H_1,...,H_{n'}),i,j,h,v)$ as inputs and returns the code of a program such that the following hold.
\begin{enumerate}
\item The obfuscator runs in time polynomial in $n$ and $2^m$.

\item When the program returned by the obfuscator is run on $x\in\{0,1\}^n$, it gives an output of:
\[\begin{cases}
f_{S[m(H_1,...,H_{n'})]} &\text{ if } v=1\text{ and } i=\emptyset \\
f_{S[m(H_1,...,H_{n'})|i,h,j]} &\text{ if } v=1\text{ and } i\ne \emptyset \\
f^T_{S[m(H_1,...,H_{n'})]} &\text{ if } v=0\text{ and } i=\emptyset \\
f^T_{S[m(H_1,...,H_{n'})|i,h,j]} &\text{ if } v=0\text{ and } i\ne \emptyset \\
\end{cases}\]

\item Given aforementioned $S,T, m, (H_1,...,H_{n'}), i, j, h, v$ and $(H'_1,...,H'_{n'}), i', j', h', v'$ such that the program output by the obfuscator given $(T,m,(H_1,...,H_{n'}),i,j,h,v)$ and the program output by the obfuscator given $(T,m,(H'_1,...,H'_{n'}),i',j',h',v')$ give the same output on every input, there is no efficient algorithm that can determine which of these two inputs the obfuscator was run on from its output with nonnegligible advantage over guessing.
\end{enumerate}
\end{definition}

\begin{lemma}
If efficient indistinguishability obfuscation is possible then a censored function obfuscator exists.
\end{lemma}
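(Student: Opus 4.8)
The plan is to give, for every admissible input tuple, a single canonical \emph{plaintext} program that computes the prescribed function, pad it so that its bit-length and its running time on every input of length $n$ depend only on $n$ and $m$, and then output the indistinguishability obfuscation of that program. Let $Ob$ be the efficient indistinguishability obfuscator supplied by the hypothesis. Properties~1 and~2 of Definition~\ref{def:cfo} will then follow from the polynomial-slowdown and correctness clauses for $Ob$, and property~3 will follow from its indistinguishability clause applied to two canonical programs that, under the hypothesis of property~3, are functionally equal and have been engineered to have exactly the same length and exactly the same step count on every input.

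Concretely, fix $S$ generated by $D(1^n)$ --- equivalently the data $SK,PK,B$ together with the derived $n'$ and $m$, which I will let the obfuscator depend on --- and an admissible tuple $\iota=(T,m,(H_1,\dots,H_{n'}),i,j,h,v)$. The obfuscator first computes the ``correction bit'' $b:=\bigl(Sign_{SK}(h\concat 0^{n-m}\concat m\concat i)\bigr)_j$ if $i\neq\emptyset$, and sets $b:=0$ if $i=\emptyset$; this is one call to $Sign_{SK}$. It then writes a program $P_\iota$ with $PK,B,T,(H_1,\dots,H_{n'}),i,j,h,b,v$ hard-coded, each field padded to a fixed length depending only on $n$ and $m$ (so that $|P_\iota|=L(n,m)$ is independent of $\iota$), whose behavior on input $x\in\{0,1\}^n$ is: (i) test $x\in S$ by checking that $x$ has the form $x^\star\concat PK\concat B\concat(\text{alleged signatures})\concat 0^{n-\ell(n')}$ with the hard-coded $PK,B$ and with each alleged signature of the message $B^{(r,s)}x^\star\concat 0^{n-r}\concat r\concat s$ accepted by $Ver(PK,\cdot,\cdot)$, over all $(r,s)\in[n']^2$ --- by clauses~1 and~2 of the unique signature scheme this accepts exactly when $x\in S$; (ii) for each $\ell\in[n']$ compute $B^{(m,\ell)}x^\star$ and test whether it lies in $H_\ell$ \emph{or} $\ell=i$, $b=1$ and $B^{(m,\ell)}x^\star=h$, then conjoin these tests --- this is exactly the indicator of membership in $S[m(H_1,\dots,H_{n'})|i,h,j]$ (and when $i=\emptyset$, where $b=0$, the indicator of $S[m(H)]$); (iii) if the conjunction of (i) and (ii) fails, output $\emptyset$; otherwise output $1$ if $v=1$, and if $v=0$ output $f^T(x)$ computed from the hard-coded $T$ (namely $1$ if $x^\star\in T$ or $|[x^\star]\cup T|\le 2^{n'-1}$, else $0$). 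By inspection this is exactly the function prescribed in clause~2 of Definition~\ref{def:cfo}; every loop has a fixed iteration count (bounded by $(n')^2$, by $n'\cdot|H_\ell|\le n'2^m$, by $|T|\le 2^m$, etc.) and every branch is resolved by an arithmetic select at the very end, so $P_\iota$ runs for a number of steps equal to a fixed $R(n,m)=\mathrm{poly}(n,2^m)$ independent of $x$ and of $\iota$; appending a busy-wait that pads the step count to exactly $R(n,m)$ makes this literally true. The obfuscator outputs $Ob(P_\iota)$.

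Granting this construction, the three properties are immediate. Writing $P_\iota$ costs $\mathrm{poly}(n,2^m)$ time (one signature computation plus $\mathrm{poly}(n,2^m)$ bits of code and data), and $Ob$ runs in time polynomial in $|P_\iota|=L(n,m)=\mathrm{poly}(n,2^m)$, which gives property~1. Property~2 is correctness of $Ob$ together with the correctness of $P_\iota$ argued above. For property~3, suppose $\iota=(T,m,(H),i,j,h,v)$ and $\iota'=(T,m,(H'),i',j',h',v')$ with the same $S,T,m$ are such that $Ob(P_\iota)$ and $Ob(P_{\iota'})$ compute the same function; by correctness of $Ob$, $P_\iota$ and $P_{\iota'}$ then compute the same function, and by construction they have the same length $L(n,m)$ and halt in exactly $R(n,m)$ steps on every input. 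Hence $P_\iota$ and $P_{\iota'}$ satisfy the hypotheses of the indistinguishability clause of $Ob$, so $|\mathbb{P}[A(Ob(P_\iota))=1]-\mathbb{P}[A(Ob(P_{\iota'}))=1]|$ is negligible for every efficient $A$, which is property~3.

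The only genuinely delicate point is the engineering in the second paragraph: a \emph{single} code skeleton, differing between tuples only in hard-coded constants and padded to a common length and a common input-independent step count, must realize all four cases ($v\in\{0,1\}$, $i=\emptyset$ or $i\neq\emptyset$) \emph{and} force any two tuples whose programs happen to be functionally equal --- for instance $(v{=}1,i{=}\emptyset)$ versus $(v{=}1,i{\neq}\emptyset,b{=}0)$, both computing $f_{S[m(H)]}$ --- to produce literally equal-length, equal-runtime programs, since that is exactly the precondition the iO indistinguishability clause demands. Carrying the dummy fields $i,h,j,b$ even when $i=\emptyset$, implementing the hash test uniformly as ``$B^{(m,\ell)}x^\star\in H_\ell$ or ($\ell=i$ and $b=1$ and $B^{(m,\ell)}x^\star=h$)'', and selecting the output from $v$ only at the very end accomplishes this; everything else is routine bookkeeping.
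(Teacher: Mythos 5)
Your overall strategy is the same as the paper's: build one canonical plaintext program per input tuple, pad all such programs to a common bit-length and a common input-independent step count, and then apply the indistinguishability obfuscator, so that functionally equal tuples yield programs meeting the exact preconditions of the iO indistinguishability clause. That part of your argument, including the padding bookkeeping, is fine.

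There is, however, one genuine flaw: you have the obfuscator precompute the ``correction bit'' $b=\bigl(Sign_{SK}(h\concat 0^{n-m}\concat m\concat i)\bigr)_j$ by calling $Sign_{SK}$, and you explicitly let the obfuscator depend on $SK$. The censored function obfuscator of Definition~\ref{def:cfo} takes only $(T,m,(H_1,\dots,H_{n'}),i,j,h,v)$ as input --- the secret key is not among its inputs --- and this is not an incidental omission: in the proof of Theorem~\ref{genFail1} the CFO is invoked inside {\sc SignatureForgingAlgorithm} by a party that holds only $PK$, $B$, and a signing oracle it is forbidden to query on the very message $h\concat 0^{n-m}\concat m\concat i$ whose signature it is trying to forge. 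A CFO that needs $Sign_{SK}(h\concat 0^{n-m}\concat m\concat i)$ to produce its output is therefore unusable in the reduction, and indeed would make the whole ``learn the signature bit by bit'' argument circular. The intended fix (spelled out in the discussion preceding the definition of $S[m(H)|i,h,j]$) is to resolve the case distinction at run time rather than at obfuscation time: on input $x$, the program first checks $x\in S$ and whether $B^{(m,i)}x^\star=h$; if $B^{(m,i)}x^\star\neq h$ the two candidate sets $S[m(H)]$ and $S[m(H_{-i},H_i\cup\{h\})]$ agree on $x$, so no signature bit is needed, and if $B^{(m,i)}x^\star=h$ and $x\in S$ then $x$ itself contains $Sign_{SK}(h\concat 0^{n-m}\concat m\concat i)$ as one of its appended signature fields, so the program reads off its $j$th bit from $x$. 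With that replacement (which preserves your length and step-count normalization), your construction matches the paper's proof.
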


\begin{proof}
First, observe that there exists an efficient algorithm $A$ that takes $(T,m,(H_1,...,H_{n'}),i,j,h,v)$ as input and outputs the code of a program with length and runtime polynomial in $n$ and $2^m$ that computes the function a CFO's output on those inputs would be required to compute. That means that we can find a polynomial $g(n,2^m)$ suitably larger than the maximum runtime of $A$'s output and modify $A$ so that the program it outputs always waits until timestep $g(n,2^m)$ to give its output. Now, let $g'(n,2^m)$ be the maximum length of any such program and $A'$ be the algorithm that constructs that program and then pads its length to $g'(n,2^m)$ bits. $A'$ still runs in polynomial time,  the programs it outputs still compute the same function as those $A$ outputs, and its outputs are always $g'(n,2^m)$ bits long and run in exactly $g(n,2^m)$ time.

Now, let $Ob$ be an efficient indistinguishability obfuscation algorithm ,and let $$A''(T,m,(H_1,...,H_{n'}),i,j,h,v)=Ob(A'(T,m,(H_1,...,H_{n'}),i,j,h,v))$$ for all $(T,m,(H_1,...,H_{n'}),i,j,h,v)$. This algorithm still runs in polynomial time, and by correctness of IO it still outputs the code of a program that efficiently computes the appropriate function. Furthermore, given $(T,m,(H_1,...,H_{n'}),i,j,h,v)$ and $(H'_1,...,H'_{n'}), i', j', h', v'$ such that the functions corresponding to $(T,m,(H_1,...,H_{n'}),i,j,h,v)$ and $(T,m,(H'_1,...,H'_{n'}),i',j',h',v')$ are equivalent, the outputs of $A'$ on these inputs have the same length and compute the same result in the same number of timesteps on every input. Therefore, the outputs of $A''$ on these inputs are indistinguishable. So, $A''$ is a censored function obfuscator.
\end{proof}

\subsection{Proof of Theorem~\ref{genFail1}}
With that established, we can finally define the learning algorithm we will be using.

\begin{definition}
Let CFO be a censored function obfuscator as in Definition~\ref{def:cfo}. Then $L$ is the algorithm that takes $((X_1,f(X_1)),...,(X_t,f(X_t)))$ as input and does the following. First, it sets $m= \lceil \log_2(t)\rceil+2$ and returns the constant function $1$ if $m>n'$. Otherwise, it sets $H_k=\{0,1\}^m$ for each $k$, $i=\emptyset$, $h=0^m$, $j=1$. Then, it randomly chooses $v\in\{0,1\}$. Finally, it returns $CFO(\{X_k:1\le k\le t\},m,(H_1,...,H_{n'}),i,j,h,v)$.
\end{definition}

At this point, we can finally prove~Theorem~\ref{genFail1}.

\begin{proof}
Let $D$ and $L$ be as previously defined, $n$ be a positive integer, $(S,f)\sim D(1^n)$, $T$ be a random subset of $S$ of size $g(n)>0$ where $g$ is polynomial in $n$, $T'=\{(x,f(x)): x\in T\}$ and $\widehat{f}=L(T')$. $S[m(\{0,1\}^m,...,\{0,1\}^m)]=S$, so $\widehat{f}$ is an obfuscated version of either $f$ or $f^T$ with both cases being equally likely unless $|T|>|S|/4$. Regardless of which case holds, $\widehat{f}(x)=f(x)$ for all $x\in T$. Also, the fact that $|T|$ is polynomial in $n$ and $|S|=2^{n'}$ implies that $|T|\le |S|/4$ whenever $n$ is sufficiently large. Assume that this holds for the rest of the proof. $f^T(x)=f(x)$ on exactly half of the $x\in S$ if $|S|\ge 2|T|$, so $D$ and $L$ satisfy the first three properties given by the theorem. 

Now, assume that there is an efficient algorithm $A$ such that 
\[\mathbb{P}\left[A\left(\widehat{f},T'\right)=1\middle|\widehat{f}\equiv f\right]\ge 2/3\]
and
\[\mathbb{P}\left[A\left(\widehat{f},T'\right)=1\middle|\widehat{f}\not\equiv f\right]\le 1/3\]
for all sufficiently large $n$. Also, let CFO be the censored function obfuscator used by $L$. We claim that we can break the signature scheme using the following algorithm

\begin{algorithm}
\caption{{\sc SignatureForgingAlgorithm}$(n,PK)$ } 
\label{alg:sigforge}


\begin{enumerate}
\item Let $n'$ be the largest integer such that $\ell(n')\le n$.

\item Randomly draw $B^{(i,j)}\sim\mathbb{F}_2^{i\times n'}$ for each $1\le i,j\le n'$.

\item Randomly draw $T^0\subseteq\{0,1\}^{n'}$ with $|T^0|=g(n)$.

\item Set $T=\{ h_{(SK,PK,B)}(x)\conc 0^{n-\ell(n')}:x\in T^0\}$, using the signing oracle to get the necessary signatures so that the algorithm can compute $h_{(SK,PK,B)}(x)$ without knowing the value of $SK$.

\item Set $m=\lceil\log_2(g(n))\rceil+2$.

\item Set $H_i=\{B^{(m,i)} x^\star| x\in T\}$ for all $0<i\le n'$

\item  Set $H_i^{(r)}=H_i\cup\{h\in\{0,1\}^m: h< r-2^m\cdot(i-1)\}$ for all $i$ and $0\le r\le n' \cdot 2^m$.

\item Set $T'=\{(x,1):x\in T\}$.

\item for each $0\le r\le n'\cdot 2^m$, set 
\[z_r=\sum_{i=1}^{n(n')^2 4^m}  A\left(CFO\left(T,m,H^{(r)},\emptyset,1,0^m,1\right),T'\right)\]
and
\[z'_r=\sum_{i=1}^{n(n')^2 4^m}  A\left(CFO\left(T,m,H^{(r)},\emptyset,1,0^m,0\right),T'\right)\]

\item If there exists $r$ such that $|z_r-z_{r+1}|>n\cdot  n' 2^m/40$ then pick such an $r$ and 
call $\BigGapForge(1)$ 







\item If there exists $r$ such that $|z'_r-z'_{r+1}|>n \cdot  n' 2^m/40$ then pick such an $r$ call 
$\BigGapForge(0)$ 



%


\item Return "Failure"

\end{enumerate}


\end{algorithm}

\begin{algorithm}
\caption{{\sc Big Gap Forge:} $\BigGapForge(b)$} 
\label{alg:biggapforge}

\begin{enumerate}

\item if $H_i^{(r)}=H_i^{(r+1)}$ for all $i$ return ``FAILURE".

\item Let $1\le i\le n'$ be the minimal value for which there exists $h\in\{0,1\}^m$ such that  $h\in H_i^{(r+1)}$ and $h\not\in H_i^{(r)}$.

\item For all $j$, let: 
\[ 
z^{(j)}  := \sum_{k=1}^{n(n')^2 4^m}  A\left(CFO\left(T,m,H^{(r)},i,j,h,b\right),T'\right)
\]

\item For all $j$ 
\begin{enumerate}
    \item if $|z^{(j)}-z'_{r+1}|<|z^{(j)}-z'_r|$ then $s_j \gets 1$
\item else $s_j \gets 0$ 
\end{enumerate} 


\item RETURN $(h\conc 0^{n-m}\conc m\conc i,s)$.
\end{enumerate}

\end{algorithm}


We need to establish a few properties of {\sc SignatureForgingAlgorithm}. First of all, observe that in step 4 this algorithm invokes its signing oracle on every element of $\{B^{(i,j)}x\circ\{0\}^{n-i}\circ i\circ j: x\in T^0, 1\le i,j\le n'\}$ and nothing else. Given these signatures the algorithm can do everything else listed in polynomial time, so it runs efficiently. 

Now, for each $r$ let $p_r$ and $p'_r$ be the probabilities that $A\left(CFO\left(T,m,H^{(r)},\emptyset,1,0^m,1\right),T'\right)$ and $A\left(CFO\left(T,m,H^{(r)},\emptyset,1,0^m,0\right),T'\right)$ return $1$ conditioned on the values of $T$ and $H$ generated by the program. $H^{(n' 2^m)}_i=\{0,1\}^m$ for all $i$, so the probability distributions of $\widehat{f}$ conditioned on $\widehat{f}\equiv f$ and $\widehat{f}\not\equiv f$ are the same as the probability distirubutions of the outputs of $CFO\left(T,m,H^{(n' 2^m)},\emptyset,1,0^m,1\right)$ and $CFO\left(T,m,H^{(n' 2^m)},\emptyset,1,0^m,0\right)$. So, $\mathbb{E}[p_{n'2^m}]\ge 2/3$ and $\mathbb{E}[p'_{n'2^m}]\le 1/3$. That means that 
\[\mathbb{P}\left[p_{n'2^m}-p'_{n'2^m}\ge 1/6\right]\ge 1/6\]
Also, $S[m(H_1,...,H_{n'})]=T$ with probability $1-o(1)$, and if that holds then $CFO\left(T,m,H^{(0)},\emptyset,1,0^m,1\right)$ and $CFO\left(T,m,H^{(0)},\emptyset,1,0^m,0\right)$ return equivalent functions, and thus have indistinguishable output distributions by CFO's obfuscation property. So, 
\[\mathbb{P}\left[|p_0-p'_0|>1/12\right]=o(1)\]
If $p_{n'2^m}-p'_{n'2^m}\ge 1/6$ and $|p_0-p'_0|\le 1/12$ then there must exist $r$ such that either $|p_{r+1}-p_r|\ge 1/(24 n' 2^m)$ or  $|p'_{r+1}-p'_r|\ge 1/(24 n' 2^m)$ by the triangle inequality.

Next, observe that $|z_r-n(n')^2 4^m p_r|\le n\cdot n' 2^m/200$ and $|z'_r-n(n')^2 4^m p'_r|\le n\cdot n' 2^m/200$ for all $r$ with high probability because for fixed values of $T$ and the $H_i$ each of the $z_r$ and $z'_r$ is a sum of $n(n')^2 4^m$ independent identically distributed variables in $[0,1]$ with expectation $p_r$ or $p'_r$. So, if there exists r such that $|p_{r+1}-p_r|\ge 1/(24 n' 2^m)$ or  $|p'_{r+1}-p'_r|\ge 1/(24 n' 2^m)$ then $|z_r-z_{r+1}|>n\cdot  n' 2^m/40$ or $|z'_r-z'_{r+1}|>n\cdot  n' 2^m/40$ respectively with high probability. Also, given any $r$ such that $H_i^{(r)}=H_i^{(r+1)}$ for all $i$, $p_r=p_{r+1}$ and $p'_r=p'_{r+1}$. So, $|z_r-z_{r+1}|\le n\cdot n' 2^m/100$ and $|z'_r-z'_{r+1}|\le n\cdot n' 2^m/100$ for all such $r$ with probability $1-o(1)$. All of this combined implies that the algorithm returns "Failure" with probability at most $5/6+o(1)$.

The algorithm always either returns "Failure" or returns an alleged forged signature, so at this point we just need to show that its alleged forged signatures are usually correct. So, consider the case where the algorithm finds $r$ such that $|z_r-z_{r+1}|>n\cdot  n' 2^m/40$ and there exists $i$ for which $H^{(r+1)}_i\ne H^{(r)}_i$. By the constuction of $H^{(r)}$ and $H^{(r+1)}$, $H^{(r+1)}_{\lfloor r/2^m\rfloor+1} =H^{(r)}_{\lfloor r/2^m\rfloor+1}\cup\{r-2^m\cdot \lfloor r/2^m\rfloor\}$ and $H^{(r+1)}_i=H^{(r)}_i$ for all other $i$. So, the algorithm will set $i=\lfloor r/2^m\rfloor+1$ and $h=r-2^m\cdot \lfloor r/2^m\rfloor$. Now, let $Sig=Sign_{SK}(h\circ 0^{n-m}\circ m\circ i)$. For a given $j$, $S[m(H^{(r)}_1,...,H^{(r)}_{n'})|i,h,j]$ is $S[m(H^{(r+1)}_1,...,H^{(r+1)}_{n'})]$ if the $j$th bit of $Sig$ is $1$ and $S[m(H^{(r)}_1,...,H^{(r)}_{n'})]$ otherwise. So, the probability distribution of the output of $CFO\left(T,m,H^{(r)},i,j,h,1\right)$ is indistinguishable from the distribution of the output of $CFO\left(T,m,H^{(r)},\emptyset,1,0^m,1\right)$ whenever $Sig_j=0$ and indistinguishable from the distribution of the output of $CFO\left(T,m,H^{(r+1)},\emptyset,1,0^m,1\right)$ whenever $Sig_j=1$. So, $|z^{(j)}-z_{r+1}|\le n\cdot n' 2^m/100$ for all $j$ for which $Sig_j=1$ and $|z^{(j)}-z_{r+1}|\le n\cdot n' 2^m/100$ for all $j$ for which $Sig_j=0$ with high probability. If this holds then $s=Sig$ and the algorithm succeeds at returning a pair of an element and its signature. Furthermore, the fact that $h\not\in H_i^{(r)}$ implies that there is no $x\in T^0$ for which $B^{(m,i)} x=h$. That in turn means that the algorithm never needed to invoke the signing oracle on $h\circ \{0\}^{n-m}\circ m\circ i$. So, in this case the algorithm succeeds in forging a signature. The case where the algorithm finds $r$ such that $|z'_r-z'_{r+1}|>n\cdot  n' 2^m/40$ and there exists $i$ for which $H^{(r+1)}_i\ne H^{(r)}_i$ is analogous.

So, {\sc SignatureForgingAlgorithm} is a polynomial time algorithm that succeeds at forging a signature with nonvanishing probability. By the requirement that $(G,Sign, Ver)$ be a unique signature scheme, no such algorithm can exist. Thus, we have a contradiction.
\end{proof}

\section{Unconditional limitations on generalization testing}

\ColinNote{I think the time hierarchy result derived from this line of reasoning would be that for every $c>0$ there exists a function $f\in \mathbf{BPP}$ such that for every randomized algorithm $A$ that runs in $O(n^c)$ time there exists $n$ such that $\mathbb{P}[A(X)=f(X)]\le 1/2+n^{-c}$ when $X$ is drawn uniformly at random from $\{0,1\}^n$.}

Our plan for proving Theorem~\ref{genFail2} is essentially to show that we can define some function that is computable in polynomial time but not in $n^{c'}$ time and define $f$ to compute this function on the first $(c+1/4)\log_2(n)$ bits of its input. Then the learning algorithm can either provide the correct lookup table or provide a table that is correct on the observed samples but not in general, and the generalization testing algorithm will not be able to determine which it did. Our argument that such a function exists will depend on whether one can efficiently compute the permanent of a matrix. The permanent is similar to the determinant but it adds the products corresponding to every permutation rather than subtracting the ones corresponding to odd permutations. More formally, it is defined as follows.

\begin{definition}
Given an $n\times n$ matrix $M$, the permanent of $M$ is given by
\[Perm(M)=\sum_{\sigma\in S_n}\prod_{i=1}^n M_{i,\sigma_i}\]
\end{definition}

A celebrated result in complexity theory is that 
computing the permanent of $\{0,1\}$-valued matrices is $\#P$-hard~\cite{valiant1979complexity}.  
Strictly speaking, the permanent cannot be in $\mathbf{BPP}$ because it is not a decision problem, but we will consider the permanent as being in $\mathbf{BPP}$ if there is an algorithm in $\mathbf{BPP}$ that computes its $i$th digit for all $i$ whenever the entries of the matrix have length polynomial in $n$. That leaves us with two cases.

\subsection{Case: $Permanent\in \mathbf{BPP}$}

If the permanent is efficiently computable, we plan to use it to allow us to determine how likely every sufficiently short algorithm is to return $1$ on each of a polynomial number of possible inputs and then construct the truth table of a function that is essentially uncorrelated with any of them. Our first step is to define a function giving the probability that an algorithm accepts a given input in a given time-frame, which we will do as follows.

\begin{definition}
Given an algorithm $A\in\{0,1\}^*$, $n>0$, $T>0$, and $x\in\{0,1\}^n$, the $T$-step acceptance probability of $A$ on $x$, or $P_T(A,x)$ is the probability that $A$ halts and returns $1$ within $T$ timesteps when it is run on $x$. For the purposes of this definition, we regard our algorithms as being able to draw bits uniformly at random.
\end{definition}

Note that $P_T(A,x)$ is equal to the fraction of random bitstrings in $\{0,1\}^T$ that would cause $A$ to return $1$ when it is run on $x$. Also, the permanent is $\#P$-complete, meaning that given any efficiently computable function, determining how many inputs it returns $1$ on can be efficiently reduced to computing a permanent. So, if $Permanent\in \mathbf{BPP}$ then there is an efficient randomized algorithm that computes $P_T(A,x)$ with high probability for polynomial $T$. For any $c'$, that allows us to construct a function that cannot be approximated in $n^{c'}$ time in the following sense.

\ColinNote{I should probably cite something relating to the fact that permanent is $\#P$-complete and the implications.}

\begin{lemma}
Let $c_1,c_2>0$. If $Permanent\in \mathbf{BPP}$ then there exists a function $g:\{0,1\}^{3c_1\log_2(n)}\times [n^{c_2}] \rightarrow\{0,1\}$ such that $g\in \mathbf{BPP}$ and for every randomized function $g'$ that is uniformly computable in $O(n^{c_1})$ time, 
\[\max_{i\le n^{c_2}} \mathbb{P}[g(X,i)=g'(X,g(X,1),...,g(X,i-1))]\le 1/2+n^{-c_1}\]
for all sufficiently large $n$ when $X$ is drawn uniformly at random from $\{0,1\}^{3c_1\log_2(n)}$.
\end{lemma}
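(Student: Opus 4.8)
The goal is to construct, for each $i \le n^{c_2}$, a column $g(\cdot, i)$ of a truth table over $\{0,1\}^{3c_1\log_2 n}$ (i.e.\ a table of $N := n^{3c_1}$ bits) that is essentially uncorrelated with the output of \emph{every} short $O(n^{c_1})$-time randomized algorithm, even when that algorithm is also handed the earlier columns $g(\cdot,1),\dots,g(\cdot,i-1)$ as advice. The key enabling fact, as the text sets up, is that $Permanent \in \mathbf{BPP}$ lets us compute $P_T(A,x)$ — the exact $T$-step acceptance probability of algorithm $A$ on input $x$ — to within any inverse-polynomial accuracy in randomized polynomial time, because $P_T(A,x)$ is (a normalization of) a $\#P$ quantity and the permanent is $\#P$-complete. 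So the overall algorithm computing $g$ has time budget some large polynomial $n^{C}$ with $C \gg c_1, c_2$, which it is allowed to use.

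\textbf{Construction.} Fix the column index $i$, and suppose $g(\cdot,1),\dots,g(\cdot,i-1)$ have already been defined (as explicit bitstrings). Enumerate all algorithms $A$ whose description length is at most, say, $\log\log n$ (or any slowly growing bound $\ell(n) \to \infty$); there are at most $2^{\ell(n)+1} = (\log n)^{O(1)}$ of them, call them $A_1,\dots,A_k$. Each $A_j$, when run with time cutoff $T := \Theta(n^{c_1})$ on the input $(x, g(x,1),\dots,g(x,i-1))$, has a well-defined acceptance probability $q_{j}(x) := P_T(A_j, (x,g(x,1),\dots,g(x,i-1)))$, and we can compute each $q_j(x)$ to additive error $n^{-10c_1}$ in randomized time polynomial in $n$ (here we fold the fixed earlier columns into the input, which only lengthens it by $i-1 \le n^{c_2}$ bits, still polynomial). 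Now define $g(x,i)$ greedily over the $x\in\{0,1\}^{3c_1\log_2 n}$ in lexicographic order: having fixed $g(x',i)$ for all $x' < x$, choose $g(x,i) \in \{0,1\}$ so as to minimize (or simply: so as to keep below a running threshold) the maximum over $j$ of the correlation $\bigl|\sum_{x' \le x}(\mathbb{1}[g(x',i)=1] - q_j(x'))\bigr|$ achieved so far. A standard martingale / method-of-conditional-expectations argument shows that for each fixed $j$, the partial sum $\sum_{x}(\mathbb{1}[g(x,i)=1] - q_j(x))$ can be kept $O(\sqrt{N \log k}) = O(\sqrt{N \log\log n})$ in absolute value — equivalently, $\bigl|\Pr_X[g(X,i)=1] - \mathbb{E}_X[q_j(X)]\bigr| = O(\sqrt{(\log\log n)/N})$, and more importantly the \emph{correlation} $\Pr_X[g(X,i) = A_j(\cdots)]$ is within $O(\sqrt{(\log\log n)/N}) = o(n^{-c_1})$ of $1/2$ for every $j$, using $N = n^{3c_1} \gg n^{2c_1}$. (The small per-step error $n^{-10c_1}$ in estimating $q_j$ contributes only $N \cdot n^{-10c_1} \cdot \frac{1}{N} = n^{-10c_1}$ to the correlation bound, negligibly.)

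\textbf{Conclusion.} Any randomized $g'$ uniformly computable in $O(n^{c_1})$ time is, for all sufficiently large $n$, one of the enumerated $A_j$ (its description is a fixed finite string, eventually shorter than $\ell(n)$, and its runtime is below the cutoff $T$), so by construction $\Pr[g(X,i) = g'(X, g(X,1),\dots,g(X,i-1))] \le 1/2 + o(n^{-c_1}) \le 1/2 + n^{-c_1}$ for every $i \le n^{c_2}$, giving the displayed bound. Finally $g \in \mathbf{BPP}$: the whole truth table (all $n^{c_2}$ columns, built column by column since each column only depends on previous ones) is computed in randomized time polynomial in $n$ by the procedure above — enumerate $k = \mathrm{polylog}(n)$ algorithms, estimate $n^{c_2}\cdot N \cdot k$ permanent-based probabilities each in $\mathrm{poly}(n)$ time — and then $g(x,i)$ is just a table lookup. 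One should take a union bound over all the $\mathrm{poly}(n)$ invocations of the permanent estimator to make the whole computation correct with probability $1 - o(1)$, which only costs a polynomial blow-up in the time bound.

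\textbf{Main obstacle.} The subtle point — and the one I'd spend the most care on — is the \emph{adaptivity / advice} issue: $g'$ gets to see $g(X,1),\dots,g(X,i-1)$, so when we diagonalize against $A_j$ for column $i$ we must diagonalize against $A_j$ \emph{as a function of the augmented input} $(x, g(x,1),\dots,g(x,i-1))$, which is why it is essential that columns are built in order and earlier columns are already frozen explicit strings before we handle column $i$. A second delicate point is bookkeeping the interaction between the slowly-growing description bound $\ell(n)$ and the runtime cutoff $T = \Theta(n^{c_1})$: we need that every \emph{fixed} $O(n^{c_1})$-time $g'$ is eventually captured, which forces $\ell(n)\to\infty$ and $T$ to dominate the (constant-factor-times-$n^{c_1}$) runtime for large $n$, hence the "for all sufficiently large $n$" and "infinitely many $n$" flavor of the final theorem. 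Everything else (the permanent-to-acceptance-probability reduction, the concentration bounds, the method of conditional expectations) is routine.
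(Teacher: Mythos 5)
Your proposal is correct and follows essentially the same route as the paper: enumerate all algorithms describable in $O(\log\log n)$ bits, use $Permanent\in\mathbf{BPP}$ (via $\#P$-completeness) to compute their $T$-step acceptance probabilities on the augmented inputs, and build the truth table column by column, choosing each bit $g(x,i)$ by the method of conditional expectations so that every enumerated algorithm's correlation with the column stays $O(\sqrt{N\cdot\mathrm{polylog}})$ over the $N=n^{3c_1}$ inputs, which normalizes to $o(n^{-c_1})$. The only cosmetic difference is the potential function: the paper greedily minimizes the \emph{sum of squares} of the partial correlations (which cleanly yields the per-algorithm bound via Cauchy--Schwarz), whereas you phrase it as keeping the maximum below a running threshold; your parenthetical appeal to conditional expectations amounts to the same thing.
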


\begin{proof}
Let $A_1,...,A_m$ be a list of all algorithms that can be written in $\ln(\ln(n))$ bits or less. We recursively construct the truth table of $g$ as follows. For a given $x$ and $i$, let $g(x,[i-1])$ denotes the string listing the values of $g(x,j)$ for all $j<i$. Then for each $x$ and $i$ we set $g(x,i)$ to the value in $\{0,1\}$ that minimizes
\[\sum_{j=1}^m \left(\sum_{x'=0}^{x} (g(x',i)-1/2)\left(P_{n^{c_1+1}}(A_j,x'\concat  g(x',[i-1]))-1/2\right)\right)^2\]
or to $0$ if the expression takes on the same value either way. Note that in order to compute $g(x,i)$ we only need to know the values of $g(x',i')$ for $x'\le x$ and $i'\le i$ so we cannot get stuck. Also, the assumption that $Permanent\in \mathbf{BPP}$ implies that we can compute $P_{n^{c_1+1}}$ in $\mathbf{BPP}$ and thus that we can always efficiently compute $P_{n^{c_1+1}}(A_j,x'\concat  g(x',[i-1]))$ with an error probability of $o\left(n^{-3c_1-c_2}\right)$. Furthermore, the computation of $P_{n^{c_1+1}}$ is the only part of this algorithm that is not deterministic, so there is a single function $g$ that we succeed at computing if all of the $P_{n^{c_1+1}}$ are computed accurately. Thus, $g\in \mathbf{BPP}$ as desired. Now, observe that for any $i,x$,
\begin{align*}
&\sum_{j=1}^m \left(\sum_{x'=0}^{x} (g(x',i)-1/2)\left(P_{n^{c_1+1}}(A_j,x'\concat g(x',[i-1]))-1/2\right)\right)^2\\
&\le \frac{1}{2}\sum_{y\in\pm 1/2}\sum_{j=1}^m \left(y\left(P_{n^{c_1+1}}(A_j,x\concat g(x,[i-1]))-1/2\right)+\sum_{x'=0}^{x-1} (g(x',i)-1/2)\left(P_{n^{c_1+1}}(A_j,x'\concat g(x',[i-1]))-1/2\right)\right)^2\\
&\le \frac{m}{16}+\sum_{j=1}^m \left(\sum_{x'=0}^{x-1} (g(x',i)-1/2)\left(P_{n^{c_1+1}}(A_j,x'\concat g(x',[i-1]))-1/2\right)\right)^2
\end{align*}
Given any function $g'$ that is uniformly computable in $O(n^{c_1})$ time, there exists $j$ such that $A_j$ computes $g'$ for all sufficiently large $n$. So, for all sufficiently large $n$ and all $i$ it must be the case that
\begin{align*}
&\mathbb{P}_X[g(X,i)=g'(X,g(X,1),...,g(X,i-1))]\\
&=2^{-3c_1\log_2(n)}\sum_{x\in \{0,1\}^{3c_1\log_2(n)}} \mathbb{P}[g'(x,g(x,1),...,g(x,i-1))=g(x,i)]\\
&=\frac{1}{2}+2^{-3c_1\log_2(n)}\sum_{x\in \{0,1\}^{3c_1\log_2(n)}} 2(g(x',i)-1/2)\left(P_{n^{c_1+1}}(A_j,x'\concat  g(x',[i-1]))-1/2\right)\\
&\le \frac{1}{2}+2^{-3c_1\log_2(n)}\sqrt{\sum_{j'=1}^m\left(\sum_{x\in \{0,1\}^{3c_1\log_2(n)}} 2(g(x',i)-1/2)\left(P_{n^{c_1+1}}(A_{j'},x'\concat  g(x',[i-1]))-1/2\right)\right)^2}\\
&\le \frac{1}{2}+2^{-3c_1\log_2(n)}\sqrt{m2^{3c_1\log_2(n)}/4} \le \frac{1}{2}+2^{-3c_1\log_2(n)/2}\sqrt{\ln(n)} \le \frac{1}{2}+n^{-c_1}\\
\end{align*}
as desired.
\end{proof}

Our plan is to generate a probability distribution by picking a random $k$ and then mapping each $x\in\{0,1\}^n$ to $g(k,x_{[\log_2(n^{c+1})]})$. That way, our learning algorithm should be able to either figure out what $k$ is and return the appropriate truth table in terms of $x_{[\log_2(n^{c+1})]}$ or make something up, and the generalization testing algorithm will not have the computational resources to determine which it did. That allows us to prove the following.

\begin{lemma}
If $Permanent\in \mathbf{BPP}$ then Theorem \ref{genFail2} holds.
\end{lemma}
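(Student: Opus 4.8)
The plan is to instantiate the hard function $g$ from the preceding lemma against a ``single secret point'' distribution. Fix the target constants $c,c'$, set $c_2 := c+1$, and choose $c_1$ large enough that $c_1 > 2c_2+c'$ and $c_1 > c+c_2$; apply the lemma with these constants to obtain $g:\{0,1\}^{q}\times[n^{c_2}]\to\{0,1\}$ with $q=3c_1\log_2 n$, $g\in\mathbf{BPP}$, and such that no $O(n^{c_1})$-time randomized $g'$ satisfies $\max_{i\le n^{c_2}}\Pr_X[g(X,i)=g'(X,g(X,1),\dots,g(X,i-1))]>\tfrac12+n^{-c_1}$ for all large $n$. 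On input $1^n$ (large $n$; small $n$ is trivial), $D$ will draw $y^\star$ uniformly from $\{0,1\}^{q}$, set $\ell:=\lceil c_2\log_2 n\rceil$, and output $S:=\{\,y^\star\conc \mathrm{bin}(i)\conc 0^{n-q-\ell}: i\in[n^{c_2}]\,\}$ together with $f(y^\star\conc\mathrm{bin}(i)\conc 0^{n-q-\ell}):=g(y^\star,i)$, the value being computed by the $\mathbf{BPP}$ algorithm for $g$ amplified to failure probability below $n^{-(c_2+10)}$. Here $|S|=n^{c_2}=\omega(n^c)$, and membership in $S$ and uniform sampling from $S$ are easy given the description $y^\star$. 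The learning algorithm $L$ will read $y^\star$ and the sampled indices $I_j$ with labels $v_j=f(X_j)$ off its input, flip a coin, and in the ``generalizing'' branch compute the entire table $s:=(g(y^\star,i))_{i\le n^{c_2}}$ (polynomial time, but far more than $n^{c'}$), overwrite $s[I_j]:=v_j$, and output code that looks up $s$; in the ``spoofing'' branch it outputs code looking up $s'$, where $s'[I_j]:=v_j$ and $s'[i]$ is a fresh uniform bit for every other $i$.

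First I would verify properties (1)--(3), which are routine. Property (1) holds by construction since $\widehat f(X_j)=v_j=f(X_j)$ in both branches. In the generalizing branch $\widehat f\equiv f$ on $S$ unless the computation of $g$ fails, i.e.\ with probability $1-o(1)$, so the accuracy is $1-o(1)$; in the spoofing branch a uniform $X\in S$ lands on a previously seen index with probability at most $n^c/n^{c_2}=o(1)$, and on the remaining indices a Chernoff bound over $L$'s $\approx n^{c_2}$ fresh bits gives agreement with $f$ on a $\tfrac12\pm o(1)$ fraction, so the accuracy is $\tfrac12\pm o(1)$. Each branch is taken with probability $\tfrac12$, so (2) and (3) follow with the stated $o(1)$ slack.

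The substance is property (4). I would assume, for contradiction, that some randomized $A$ running in $O(n^{c'})$ time distinguishes the two branches with accuracy $>2/3$ for all large $n$, i.e.\ $\Pr[A=1\mid\text{generalizing}]-\Pr[A=1\mid\text{spoofing}]>1/3$, and turn $A$ into a next-index predictor for $g$. Condition on $y^\star$ and the sampled multiset; by averaging the conditional gap stays above $1/6$ for at least a $1/6$ fraction of these. For such a fixing, define hybrids $H_0,\dots,H_{n^{c_2}}$ where the $\widehat f$-table of $H_k$ equals $g(y^\star,\cdot)$ on all indices $\le k$ and on all sampled indices and is a fresh uniform bit elsewhere; then $H_0$ is the spoofing instance and $H_{n^{c_2}}$ the generalizing instance, so some consecutive pair $H_{k-1},H_k$, differing only at an unsampled index $k$, is distinguished with advantage $>1/(6n^{c_2})$. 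A standard Yao-style argument then yields a predictor $g'$ that, given $(y^\star,g(y^\star,1),\dots,g(y^\star,i-1))$, internally samples the $I_j$'s (conditioned to avoid $i$, at negligible statistical cost), builds the instance using the known correct values below $i$, a guessed bit at $i$, and fresh random bits above $i$ -- including fresh, internally consistent random bits in place of the unknown labels at sampled indices $\ge i$ -- runs $A$, and reads off the favoured guess (repeating $A$ polynomially many times to estimate the relevant acceptance probabilities). Writing this $n^{c_2}$-bit instance and the repetitions of $A$ take polynomial time, which is $O(n^{c_1})$ once $c_1$ is chosen large enough, so $g'$ is admissible for the lemma. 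Summing the hybrid advantages over $i$, averaging over the conditioning, and picking the best index shows that for every large $n$ some $i\le n^{c_2}$ is predicted with advantage $\Omega(n^{-c_2})$, which exceeds $n^{-c_1}$ because $c_1>c_2$; this contradicts the lemma.

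The hard part will be justifying the replacement of the unknown labels $g(y^\star,I_j)$ at sampled indices $I_j\ge i$ by fresh random bits: the true hybrids $H_{k-1},H_k$ carry the correct values there, so I must argue that $A$'s behaviour is essentially unchanged. I expect to do this with a secondary hybrid over the at most $n^c$ offending positions, bounding each swap by the lemma's unpredictability of $g$ at that index given a subset of the earlier values (having less advice only makes prediction harder), for a per-step loss $O(n^{-c_1})$ and a total loss $O(n^{c-c_1})=o(n^{-c_2})$ by the choice $c_1>c+c_2$. The remaining details -- pinning down $c_1$ so that the $\Theta(n^{-c_2})$ hybrid gap, the re-randomization loss, and the loss from selecting the good index all stay comfortably above the lemma's threshold $n^{-c_1}$, and checking the $o(1)$ quantities in (2)--(3) -- are routine bookkeeping.
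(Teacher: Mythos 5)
Your proposal is correct and rests on the same engine as the paper's proof: the diagonalized function $g$ from the preceding lemma, a learner that flips a coin between the true lookup table and a table that is correct only at sampled indices, and a hybrid reduction of the distinguishing task to next-bit prediction of $g$. Two packaging choices differ. First, the paper keeps the seed secret --- it sets $S=\{0,1\}^n$ and $f(x)=g(k,x_{[m]})$, so $L$ must enumerate the polynomially many seeds consistent with the samples --- whereas you publish $y^\star$ inside every input; both are fine, since the lemma's unpredictability holds even given $X$. Second, and more substantively, the paper first reduces both branches to indistinguishability from an all-random reference generator $D'$: given a whole string $s'$ that is either $g$'s output $s^1$ or uniform $s^0$, one post-processes it (keeping $s'$ only at sampled indices, randomizing the rest) to produce either branch's instance, so the only hardness needed is distinguishing $s^1$ from $s^0$ as strings, which falls to a clean prefix hybrid in which everything after position $i$ is uniform. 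Your direct hybrid between the two branches leaves true $g$-values sitting at sampled indices above the current position, which is exactly why you need the secondary re-randomization hybrid; that step does work, but note it requires processing the offending positions in decreasing order of index so that, at each swap, every surviving true value lies at a strictly earlier index and the lemma (weakened to a subset of the prefix plus independent randomness) applies. The paper's detour through $D'$ buys exactly the avoidance of that nested hybrid; your route buys a more self-contained reduction at the cost of that extra bookkeeping.
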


\begin{proof}
First, let $m=\lfloor \log_2(n^{c+1})\rfloor$ and $g$ be the function stated to exist by the previous lemma for $c_1=\max(c,c')+2$ and $c_2=c+1$. Our algorithm $D$ picks a random $k\in\{0,1\}^{3c_1\log_2(n)}$, sets $S=\{0,1\}^n$, and sets $f$ so that $f(x)=g(k,x_{[m]})$ for all $x$. Our algorithm $L$ draws $V$ uniformly at random from $\{0,1\}$, and then picks a string $s\in \{0,1\}^{2^m}$ as follows. If $V=1$, it picks a random $k'$ such that $g(k',x_{[m]})=y$ for all samples $(x,y)$. Then, it sets $s_i=g(k',i)$ for all $i$. If $V=0$ then it sets $s$ as follows. For each $i$, if the algorithm has a sample $(x,y)$ such that $x_{[m]}=i$ then it sets $s_i=y$; otherwise, it draws $s_i$ uniformly at random from $\{0,1\}$. Either way, it sets $\widehat{f}$ so that $\widehat{f}(x)=s_{x_{[m]}}$ for all $x$ and returns the obvious formulation of it.

$g$ is efficiently computable, there are only polynomially many possible values of $k'$ for $L$ to check, and there are only polynomially many possible values of $x_{[m]}$, so $D$ and $L$ are both efficient algorithms. Also, computing $\widehat{f}(x)$ always comes down to looking up the element of $s$ indexed by the first $O(\log(n))$ bits of its input, which can be done in $O(\log(n))$ time, as desired. Also, $s$ is always defined in such a way that $s_{x_{[m]}}=g(k,x_{[m]})$ for every $x$ contained in the list of samples, so $\widehat{f}$ agrees with $f$ on all of the samples.

Also, given any $k'\ne k$ such that $|\{i\in[2^m]: g(k',i)\ne g(k,i)\}|\ge 2^m/\ln(n)$, the probability that $g(k',x_{[m]})=g(k,x_{[m]})$ for all $x$ in our list of samples is at most $(1-1/\ln(n))^{n^{c}}=o(2^{-3c_1\log_2(n)})$ so with high probability if $V=1$ then $L$ does not pick such a $k'$ and it returns $\widehat{f}$ that agrees with $f$ on $1-o(1)$ of the possible inputs. On the flip side, if $V=0$, then $s_i$ will disagree with $g(k,i)$ on $1/2-o(1)$ of the possible values of $i$ with high probability, so $L$ will return a function that agrees with $f$ on $1/2+o(1)$ of the possible inputs with high probability.

That leaves the task of showing that any algorithm that runs in $O(n^{c'})$ time fails to determine the value of $V$ with $2/3$ accuracy for infinitely many values of $n$. As our first step towards that, consider an arbitrary randomized function $h:\{0,1\}^{2^m}\rightarrow\{0,1\}$ that is computable in $O(n^{c_1})$ uniform time. Then, define $h'$ as the function that appends random bits to its input until it has length $2^m$ and then runs $h$ on it. $h'$ is also computable in $O(n^{c_1})$ uniform time. Now, pick a random $k'$, let $s^0\in\{0,1\}^{2^m}$ be a random string, and let $s^1\in\{0,1\}^{2^m}$ be defined so that $s^1_i=g(k,i)$ for all $i$. Then for all sufficiently large $n$,
\begin{align*}
\mathbb{E}[h(s^1)]-\mathbb{E}[h(s^0)] &=\mathbb{E}[h'(s^1)]-\mathbb{E}[h'(\emptyset)] =\sum_{i=0}^{2^m-1} \mathbb{E}[h'\left(s^1_{[i+1]}\right)]-\mathbb{E}[h'\left(s^1_{[i]}\right)]\\
&=\frac{1}{2}\sum_{i=0}^{2^m-1} \mathbb{E}\left[h'\left(s^1_{[i]}\concat g(k,i+1)\right)\right]-\mathbb{E}\left[h'\left(s^1_{[i]}\concat NOT(g(k,i+1))\right)\right]\\
&=\frac{1}{2}\sum_{i=0}^{2^m-1} (-1)^{g(k,i+1)}\left(\mathbb{E}\left[h'\left(s^1_{[i]}\concat 0\right)\right]-\mathbb{E}\left[h'\left(s^1_{[i]}\concat 1\right)\right]\right)\\
&\le\frac{1}{2}\sum_{i=0}^{2^m-1} 4\cdot n^{-c_1} =O(n^{c+1-c_1})=O(1/n)
\end{align*}

When the sample inputs are generated, they partition $\{0,1\}^{3c_1\log_2(n)}$ into a collection of sets of possible values of $k$ that would result in indistinguishable samples. Then, if $V=1$, $L$ picks a random $k'$ in the same set as $k$. In particular, this means that the probability distribution of $k'$ conditioned on the sample inputs is the uniform probability distribution on $\{0,1\}^{3c_1\log_2(n)}$. That means that given the value of $s$ we can simulate the sample generation process in $O(n^{c+1})$ time.

Now, define a fake sample and function generation algorithm $D'$ that picks a random $s\in\{0,1\}^{2^m}$, sets $\widehat{f}$ so that $\widehat{f}(x)=s_{x_{[m]}}$ for all $x$, selects $X_1,...,X_{n^{c}}$ randomly from $\{0,1\}^n$, and returns $(X_i,\widehat{f}(X_i))$ for all $i$ as samples and the obvious formulation of $\widehat{f}$ as the function. This is equivalent to simulating the sample generation process using a random value of $s$, so the difficulty of distinguishing a string generated by $g$ from a random string implies that no algorithm that runs in $O(n^{c_1}-2^m)=O(n^{c_1})$ uniform time can distinguish a function and set of samples generated by $D'$ from a function and set of samples output by $D$ and $L$ when $V=1$.

That leaves proving that if $V=0$ the results will still be hard to distinguish from the output of $D'$. So, consider having a string $s'$ which is either $s^0$ or $s^1$. Then, randomly draw samples $X_1,...,X_{n^{c}}\in \{0,1\}^n$, and define $s\in\{0,1\}^{2^m}$ such that $s_i=s'_i$ if $i$ is the first $m$ digits of one of the samples and a random bit otherwise. Finally, set $\widehat{f}$ to the function that returns the bit of $s$ indexed by its input, and return $(X_i,\widehat{f}(X_i))$ for all $i$ and $\widehat{f}$. The results of this process have the same probability distribution as the output of $D'$ if $s'$ is a random string, and the same probability distribution as the samples and function output by $D$ and $L$ conditioned on $V=0$ if $s'=s^1$. So, the difficulty of distinguishing between $s^1$ and $s^0$ implies that no algorithm that runs in $O(n^{c_1})$ time can distinguish the samples and function produced by $D'$ from those produced by $D$ and $L$ when $V=0$. So, for any algorithm $A$ that runs in $O(n^{c'})$ time, its probability of returning $1$ when $\widehat{f}\approx f$ and its probability of returning $1$ when $\widehat{f}\not\approx f$ are both within $o(1)$ of its probability of returning $1$ on the function and samples produced by $L'$.
\end{proof}

\subsection{Case: $Permanent not in \mathbf{BPP}$}

If $Permanent\not\in \mathbf{BPP}$, then we plan to find $m<<n$ such that it takes just slightly over $n^c$ time to compute the permanent of an $m\times m$ matrix and then design the setup so that one needs to compute the permanent of $m\times m$ matrices to figure out what $f$ is. That way, an algorithm that runs in $O(n^{c'})$ time should be unable to do it, while slower polynomial time algorithms will be able to. In this case, we will mostly be using permanents mod $p$ for some appropriate prime $p$. Being unable to compute permanents of integer-valued matrices implies being unable to compute permanents of matrices mod most primes because if we can compute the permanent of $m\times m$ matrices mod $p_i$ for $1\le i\le k$ then we can compute the permanent of $m\times m$ matrices mod $p_1p_2...p_k$ by the Chinese Remainder Theorem. That in turn implies that we can compute the permanent of $m\times m$ matrices where all entries are integers with absolute values less than $\sqrt[m]{p_1p_2...p_k/2m!}$.

In order to do this, we will need a fairly efficient algorithm for computing the permanents of small matrices. Our plan for this is to just try every algorithm that can be written in $\log(\log(n))$ bits and then use some properties of the permanent to check if they are actually computing it. More specifically, we will use the following standard properties of the permanent.

\begin{lemma}
Let $M$ be an $m\times m$ matrix. Then
\[Perm(M)=\sum_{i=1}^m M_{1,i}Perm(M_{-1,-i})\]
\end{lemma}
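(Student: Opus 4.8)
The plan is to prove this by the standard argument of partitioning the symmetric group according to the image of the first coordinate, which is the permanent analogue of the cofactor (Laplace) expansion of the determinant, only without the sign bookkeeping. Starting from the definition $Perm(M)=\sum_{\sigma\in S_m}\prod_{k=1}^m M_{k,\sigma(k)}$, I would write $S_m$ as the disjoint union over $i\in[m]$ of the sets $A_i=\{\sigma\in S_m:\sigma(1)=i\}$. Since every $\sigma\in A_i$ contributes the common factor $M_{1,\sigma(1)}=M_{1,i}$, this gives
\[Perm(M)=\sum_{i=1}^m\sum_{\sigma\in A_i}\prod_{k=1}^m M_{k,\sigma(k)}=\sum_{i=1}^m M_{1,i}\sum_{\sigma\in A_i}\prod_{k=2}^m M_{k,\sigma(k)}.\]

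The key step is then to identify the inner sum with $Perm(M_{-1,-i})$. For fixed $i$, restriction $\sigma\mapsto\sigma|_{\{2,\dots,m\}}$ is a bijection from $A_i$ onto the set of bijections $\{2,\dots,m\}\to[m]\setminus\{i\}$; composing with the order-preserving relabelings $\{2,\dots,m\}\to[m-1]$ and $[m]\setminus\{i\}\to[m-1]$ yields a bijection between $A_i$ and $S_{m-1}$. By the definition of $M_{-1,-i}$ as $M$ with its first row and $i$th column deleted, the entry $M_{k,\sigma(k)}$ with $k\ge 2$ is exactly the entry of $M_{-1,-i}$ indexed by the relabeled row and column, so
\[\sum_{\sigma\in A_i}\prod_{k=2}^m M_{k,\sigma(k)}=\sum_{\tau\in S_{m-1}}\prod_{l=1}^{m-1}(M_{-1,-i})_{l,\tau(l)}=Perm(M_{-1,-i}).\]
Substituting this back into the previous display finishes the proof.

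There is no real obstacle here; the only point requiring minor care is making the two relabeling bijections explicit enough that the reader sees that the row and column indices line up with the deleted-row/deleted-column convention. (For $m=1$ the identity reads $Perm(M)=M_{1,1}$ under the usual convention that the permanent of the empty matrix is $1$, so no separate base case is needed.)
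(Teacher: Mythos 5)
Your proof is correct and is essentially the same as the paper's: both partition $S_m$ by the value of $\sigma(1)$, factor out $M_{1,i}$, and identify the remaining sum with $Perm(M_{-1,-i})$. You are merely more explicit about the relabeling bijection, which the paper leaves implicit.
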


\begin{proof}
This follows from a direct computation:
\begin{align*}
Perm(M)&=\sum_{\sigma\in S_m}\prod_{j=1}^m M_{j,\sigma_j}\\
&=\sum_{i=1}^m\sum_{\sigma\in S_m:\sigma_1=i}\prod_{j=1}^m M_{j,\sigma_j}\\
&=\sum_{i=1}^m M_{1,i} \sum_{\sigma\in S_m:\sigma_1=i}\prod_{j=2}^m M_{j,\sigma_j}\\
&=\sum_{i=1}^m M_{1,i}Perm(M_{-1,-i})
\end{align*}
\end{proof}

\begin{lemma}
Let $M$ and $M'$ be $m\times m$ matrices. Then
\[\sum_{i=0}^{m+1} (-1)^i{ m+1\choose i} Perm(M+i M')=0\]
\end{lemma}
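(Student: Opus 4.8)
The plan is to recognize the left-hand side as an $(m+1)$-st finite difference of a polynomial of degree at most $m$, which must vanish. First I would fix $M$ and $M'$ and regard $P(t) := Perm(M + tM')$ as a function of a single variable $t$. Expanding the permanent by its definition,
\[
P(t) = \sum_{\sigma \in S_m} \prod_{j=1}^m \bigl(M_{j,\sigma_j} + t\, M'_{j,\sigma_j}\bigr),
\]
so each summand is a product of $m$ affine functions of $t$, hence a polynomial in $t$ of degree at most $m$; summing over $\sigma \in S_m$, it follows that $P$ is a polynomial in $t$ of degree at most $m$.

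Next I would invoke the standard fact that the forward difference operator $\Delta$, defined by $(\Delta Q)(t) = Q(t+1) - Q(t)$, strictly lowers the degree of a nonconstant polynomial and kills constants; consequently $\Delta^{m+1}$ annihilates every polynomial of degree at most $m$. If a self-contained argument is wanted this is an easy induction on the degree, since $\Delta$ sends a degree-$d$ polynomial with leading coefficient $a$ to a degree-$(d-1)$ polynomial with leading coefficient $ad$. Applying this to $P$ gives $\Delta^{m+1} P \equiv 0$, and in particular $(\Delta^{m+1} P)(0) = 0$.

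Finally I would expand $(\Delta^{m+1}P)(0)$ using the binomial formula for iterated differences, $(\Delta^{k} Q)(0) = \sum_{i=0}^{k} (-1)^{k-i} \binom{k}{i} Q(i)$, which is itself a one-line induction on $k$. With $k = m+1$ and $Q = P$ this yields
\[
0 = (\Delta^{m+1} P)(0) = \sum_{i=0}^{m+1} (-1)^{m+1-i} \binom{m+1}{i} Perm(M + i M') = (-1)^{m+1} \sum_{i=0}^{m+1} (-1)^{i} \binom{m+1}{i} Perm(M + i M'),
\]
and multiplying through by $(-1)^{m+1}$ gives exactly the claimed identity. The only points that require any care are the degree bound on $P$ — this is where the hypothesis that $M$ and $M'$ are $m \times m$ enters — and keeping track of the sign $(-1)^{m+1}$ when passing from $\Delta^{m+1}P(0)$ to the stated alternating sum; beyond this bookkeeping there is no genuine obstacle.
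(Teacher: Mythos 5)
Your proposal is correct and follows the same route as the paper's own proof: observe that $Perm(M+tM')$ is a polynomial of degree at most $m$ in $t$ and that the alternating binomial sum is the $(m+1)$-st finite difference, which annihilates such polynomials. You simply fill in the details (the degree bound, the induction for $\Delta^{m+1}$, and the sign bookkeeping) that the paper states in one line.
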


\begin{proof}
The permanent of an $m\times m$ matrix is an $m$th degree polynomial in its entries. So, $Perm(M+i M')$ is an $m$th degree polynomial in $i$. The specified formula is $0$ for any such polynomial, so the lemma holds.
\end{proof}

\ColinNote{Actually, I should probably just cite someone for these rather than proving them.}

That allows us to test whether or not an algorithm claiming to compute the permanent mod $p$ works using the following algorithm.

\begin{algorithm}
\caption{{\sc PermanentComputationTest}}
\label{alg:permtest}

{\bf Input:} Integers $m$ and $n$, prime $p$, and algorithm $A$ that takes an $m\times m$ matrix mod $p$ as input.\\ 

{\bf Output:} A conclusion on whether $A$ computes the permanent reliably.


\begin{enumerate}
\item If $m=1$: \begin{enumerate}
    \item Randomly draw $x_1,...,x_{24n}\sim [p]$.
    \item For each $i$, run $A([[x_i]])$ and check if it returns $x_i$.
    \item If $A([[x_i]])$ returns $x_i$ every time, then return $1$; otherwise, return $0$.
\end{enumerate}

\item Let $A'$ be the algorithm that takes an $m-1 \times m-1$ matrix as input, extends it to an $m\times m$ matrix by giving it a new row and column, setting the entry where they meet equal to $1$, and setting all other new entries to $0$, and running $A$ on the extended matrix.

\item Run {\sc PermanentComputationTest}$(m-1,n,p,A')$ and return $0$ if it returns $0$.

\item For $1\le t\le 6mn:$\begin{enumerate}
    \item Randomly generate $M\in\mathbb{Z}_p^{m\times m}$.
    
    \item If
    \[A(M)\ne \sum_{i=1}^m M_{1,i} A'(M_{-1,-i})\] then return $0$
\end{enumerate}

\item For $1\le t\le 48m^2n:$\begin{enumerate}
    \item Randomly generate $M,M'\in\mathbb{Z}_p^{m\times m}$.
    
    \item If \[\sum_{i=0}^{m+1} (-1)^i{ m+1\choose i} A(M+i M')\ne 0\]
    then return $0$.
\end{enumerate}
\item Return $1$.
\end{enumerate}
\end{algorithm}

This algorithm succeeds at checking whether or not $A$ computes the permanent correctly in the following sense.

\begin{lemma}
Let $m,n$ be positive integers, $p$ be a prime number greater than $m+1$, and $A$ be an algorithm that takes an $m\times m$ matrix as input. When {\sc PermanentComputationTest}$(m,n,p,A)$ is run it runs $A$ $O(m^4n)$ times and otherwise runs efficiently. If $A(M)$ succeeds at computing the permanent of $M$ correctly for each $M\in\mathbb{Z}_p^{m\times m}$ with probability at least $1-1/m^4n^2$ then {\sc PermanentComputationTest}$(m,n,p,A)$ returns $1$ with probability $1-O(1/n)$, while if $A(M)$ miscalculates the permanent of $M$ with probability at least $1/24m^2$ for random $M$ then \\{\sc PermanentComputationTest}$(m,n,p,A)$ returns $0$ with probability at least $1-e^{-n}$.
\end{lemma}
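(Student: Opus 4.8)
The plan is to analyze the two regimes of {\sc PermanentComputationTest} separately, handling the runtime claim first and then the two accuracy claims by a combination of union bounds and Chernoff bounds, with an induction on $m$ threading through the "reliable" case.

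First I would bound the runtime. The recursion on line 3 decreases $m$ by one each time, so it unrolls to a chain of length $m$; at level $k$ the test runs the current algorithm $O(kn)$ times on line 4 (the loop of length $6kn$) and $O(k^2 n)$ times on line 5 (the loop of length $48k^2n$), with the line-5 evaluations each invoking $A$ on $k+2$ points $M + iM'$. Summing $\sum_{k=1}^m k \cdot (k+2) \cdot n = O(m^4 n)$ gives the claimed count of $O(m^4 n)$ calls to $A$; everything else (generating random matrices mod $p$, forming $M_{-1,-i}$, evaluating the binomial-alternating sum, comparing field elements) is polynomial-time arithmetic, so the rest runs efficiently.

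Next, the \emph{reliable} case: suppose $A(M)$ equals $Perm(M) \bmod p$ with probability at least $1 - 1/m^4 n^2$ for every fixed $M \in \mathbb{Z}_p^{m\times m}$. I would argue by induction on $m$ that the test returns $1$ with probability $1 - O(1/n)$. The base case $m=1$: each of the $24n$ draws $[[x_i]]$ is answered correctly except with probability $\le 1/n^2$, so by a union bound all $24n$ checks pass except with probability $O(1/n)$. For the inductive step, first note $A'$ (the extension-by-identity-corner algorithm of line 2) computes $Perm$ of $(m-1)\times(m-1)$ matrices correctly with probability $\ge 1 - 1/m^4n^2 \ge 1 - 1/(m-1)^4 n^2$, because Lemma (cofactor expansion) applied to the extended matrix — whose last row has a single $1$ on the diagonal — gives $Perm(\text{extension}) = Perm(M)$; so by induction line 3 does not return $0$ except with probability $O(1/n)$. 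Then on line 4, since $A$ and $A'$ are correct on all of the $O(mn)$ fixed matrices involved except with probability $O(mn) \cdot 1/m^4n^2 = O(1/n)$, and the cofactor identity holds exactly when all evaluations are correct, line 4 passes except with probability $O(1/n)$; similarly line 5 uses the polynomial-interpolation identity, and the $O(m^2 n)$ relevant matrices are all answered correctly except with probability $O(1/n)$. A final union bound over lines 3--5 gives the result.

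Finally, the \emph{unreliable} case: suppose $A(M) \ne Perm(M) \bmod p$ with probability $\ge 1/24m^2$ over uniform $M$. Here I would use the self-correction structure of the permanent. If $A$ is already badly wrong at some smaller matrix size, the recursive call on line 3 catches it with overwhelming probability by induction, so assume $A'$ is reliable at size $m-1$. The key point is that if $A$ errs on a $\ge 1/24m^2$ fraction of inputs, then at least one of the two exact identities checked on lines 4 and 5 must be violated with probability $\Omega(1/m^2)$: on line 4 the identity $A(M) = \sum_i M_{1,i}A'(M_{-1,-i})$ fails whenever $A(M)$ is wrong and all the $A'(M_{-1,-i})$ are right — and since $M$ uniform makes each $M_{-1,-i}$ uniform, by the reliability of $A'$ and a union bound the right-hand side is correct except with probability $m \cdot 1/(m-1)^4 n^2 = o(1/m^2)$, so line 4 detects an error with probability $\ge 1/24m^2 - o(1/m^2) = \Omega(1/m^2)$. (If instead one prefers to route the argument through line 5, one uses that $Perm(M + iM')$ is a degree-$m$ polynomial in $i$ and the alternating binomial sum annihilates it, combined with a Schwartz--Zippel-type argument that a single bad value among the $m+2$ evaluations propagates.) Running the offending loop $\Omega(m^2 n)$ times (line 4 has $6mn$ iterations, line 5 has $48m^2n$) and applying a Chernoff bound, the probability that none of the iterations detects a violation is at most $(1 - \Omega(1/m^2))^{\Omega(m^2 n)} = e^{-\Omega(n)}$, which we can arrange to be $\le e^{-n}$ by the constants in the loop lengths; a union bound with the $e^{-\Omega(n)}$ failure probability of the recursive call finishes it.

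\textbf{Main obstacle.} The delicate step is the unreliable case: I must be careful that "$A$ errs on a noticeable fraction of inputs" genuinely forces one of the two exact algebraic identities to be violated on a noticeable fraction of inputs, rather than the errors conspiring to cancel. The cofactor-expansion check (line 4) handles the case where $A$'s errors are "local," but to rule out globally consistent-but-wrong behavior one needs the interpolation identity (line 5) together with the fact that $Perm$ is a genuine degree-$m$ polynomial, so any function satisfying both identities and agreeing with $A'$ via cofactor expansion is forced to be the permanent; quantifying the detection probability through this combination — and matching it against the loop lengths $6mn$ and $48m^2n$ so the Chernoff bound yields exactly $1 - e^{-n}$ — is where the real work lies.
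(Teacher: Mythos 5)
There is a genuine gap in your treatment of the unreliable case. After the recursive call on line 3 accepts, you assert that $A'$ is "reliable" and then apply a union bound over the $m$ minors using the per-matrix error rate $1/(m-1)^4 n^2$ to conclude that the right-hand side of the cofactor identity is correct except with probability $o(1/m^2)$. But that error rate is the \emph{hypothesis of the reliable case}, not something the test certifies: the inductive guarantee in the unreliable direction only says that if the recursion accepts, then (with probability $1-e^{-n}$) $A'$ errs on a random $(m-1)\times(m-1)$ matrix with probability less than $1/24(m-1)^2$. An $A'$ with error rate, say, $1/100m$ sails through the recursive test, and then the union bound over the $m$ minors $M_{-1,-i}$ only gives that the reference value $\sum_i M_{1,i}A'(M_{-1,-i})$ is wrong with probability up to $\Theta(1/m)$ --- which can dwarf the signal $1/24m^2$ you are trying to detect. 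So line 4 alone cannot catch an $A$ whose error rate is as small as $1/24m^2$, and your claimed detection probability $\geq 1/24m^2 - o(1/m^2)$ does not follow.

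The paper's proof closes this by a three-way case split on the magnitude of $A$'s error rate $\epsilon$. If $A'$ itself errs with probability $\geq 1/24(m-1)^2$, the recursion catches it. If $A'$ is within the certified tolerance but $\epsilon \geq 1/3m$, then line 4 catches $A$, because $1/3m$ exceeds the $\Theta(1/m)$ noise from $A'$ by a constant factor (detection probability $\geq 1/3m - m/24(m-1)^2 \geq 1/6m$ per iteration, and $(1-1/6m)^{6mn} \leq e^{-n}$). In the remaining regime $1/24m^2 \leq \epsilon < 1/3m$, the paper does \emph{not} use $A'$ at all: it uses line 5, which checks $A$ against itself at the points $M+iM'$, each of which is uniformly distributed. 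Conditioned on $A(M)$ being wrong, a union bound over the $m+1$ nonzero shifts shows all of $A(M+iM')$ are simultaneously correct with probability $\geq 1/2$ (using $\epsilon < 1/3m$), in which case exactly one term of the degree-$m$ interpolation identity is corrupted and the alternating sum is nonzero; this yields detection probability $\geq 1/48m^2$ per iteration and $(1-1/48m^2)^{48m^2n}\leq e^{-n}$. Your "main obstacle" paragraph correctly senses that the interplay of the two checks is where the work lies, but the quantitative mechanism --- the threshold $1/3m$ separating the regimes, and the fact that line 5 is a self-consistency check of $A$ independent of $A'$ --- is the missing ingredient.
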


\begin{proof}
The recursion potentially results in {\sc PermanentComputationTest} being run once for each value of $m$ smaller than its original value, and each iteration of the algorithm runs $A$ a maximum of $6mn(m+1)+48m^2n(m+2)$ times. So, in total it runs $A$ a maximum of $O(m^4 n)$ times, and performs efficient computations on its outputs. This algorithm only returns $0$ if the outputs of $A$ fail to satisfy an equation that the permanent satisfies, so if $A$ computes the permanent correctly all $O(m^4 n)$ times it will return $1$. Thus, if $A(M)$ succeeds at computing the permanent of $M$ correctly with probability at least $1-1/m^4n^2$ for all $M$, then {\sc PermanentComputationTest}$(m,n,p,A)$ returns $1$ with probability $1-O(1/n)$.

In order to prove that {\sc PermanentComputationTest}$(m,n,p,A)$ consistently returns $0$ if $A$ fails to compute the permanent correctly, we induct on $m$. If $m=1$ then the algorithm simply checks that $A$ computes the permanent correctly on $24n$ random values. If $A$ has a failure rate of at least $1/24$ then $A$ will be wrong at least once with a probability of at least $1-(23/24)^{24n}\ge 1-e^{-n}$. Now, assume that the desired conclusion holds for $m-1$. Then if $A'$ fails to compute the permanent with accuracy at least $1-1/24(m-1)^2$ the algorithm will return $0$ with probability at least $1-e^{-n}$ due to rejecting $A'$. If $A'$ succeeds at computing the permanent of a random matrix with probability at least $1-1/24(m-1)^2$ but $A$ miscalculates the permanent of a random matrix with probability at least $1/3m$, then every time step $4b$ is run there is at least a $1/3m$ chance that $A(M)$ returns a value other than $Perm(M)$ and at most a $1/6m$ change that $A'(M_{-1,-i})$ returns a value other than $Perm(M_{-1,-i})$ for at least one $i$. That means there is at least a $1/6m$ chance that the algorithm stops and returns $0$ each time, giving it a total probability of returning $0$ of at least $1-(1-1/6m)^{6mn}\ge 1-e^{-n}$. Finally, if the probability that $A$ miscalculates the permanent of a random matrix is less than $1/3m$ but at least $1/24m^2$, then every time step $5b$ is run there is at least a $1/24m^2$ chance that $A(M)$ returns a value other than $Perm(M)$. For each $i\ne 0$, the value of $M+iM'$ is independent of $M$ and equally likely to be any element of $\mathbb{Z}_p^{m\times m}$. So, conditioned on $A(M)$ returning a value other than $Perm(M)$, there is still at least a $1/2$ chance that $A(M+iM')$ returns $Perm(M+iM')$ for each nonzero $i$, and if that happens the sum will be nonzero and the algorithm will return $0$. So, in this case the algorithm still returns $0$ with a probability of at least $1-(1-1/48m^2)^{48m^2n}\ge 1-e^{-n}$. This completes the proof.
\end{proof}

At this point, we could just run {\sc PermanentComputationTest} on all sufficiently short algorithms until we find the one that efficiently computes permanents on the largest matrix. However, a generalization testing algorithm would potentially be able to infer the values of some permanents from the samples, so we need an algorithm that computes permanents with efficiency comparable to the best algorithm for computing permanents of $m\times m$ matrices from the permanents of some randomly selected matrices. As such, we define the following algorithm.

\begin{algorithm}
\caption{{\sc PermanentLearningAlgorithm}}
\label{alg:permlearn}

{\bf Input:} $c>0$, a positive integer $n$, and a prime $p$.

{\bf Output:} An integer $m$ and an algorithm that is intended to compute permanents of $m\times m$ matrices mod $p$.

\begin{enumerate}
\item Let $A_1$,...,$A_{k}$ be a list of all algorithms that can be written in $\ln(\ln(n))$ bits or less.

\item For each $i$, let $A'_i$ be $A_i$ modified so that if it does not terminate within $n^{c+1}$ steps it stops and outputs $0$.

\item Set $m=1$

\item Let $\overline{Perm}_1$ be the algorithm that takes a $1\times 1$ matrix as input and outputs its entry.

\item Repeat the following until one finds an appropriate output: \begin{enumerate}

    \item Increase $m$ by $1$.

    \item Set $\overline{Perm}_m=Null$.

    \item For $1\le t\le n^c$: \begin{enumerate}

        \item Select matrices $M^{(1)},...,M^{({n^c})}$ uniformly at random from $\mathbb{Z}_p^{m\times m}$.

        \item For each $i$, set
        \[p_i=\sum_{j=1}^m M^{(i)}_{1,j}\overline{Perm}_{m-1}\left(M^{(i)}_{-1,-j}\right)\]

        \item For each $1\le i\le k$: \begin{enumerate}
            \item Let $A^\star$ be the algorithm that takes an $m\times m$ matrix as input, inputs $n$, $m$, $p$, that matrix, and $(M^{(1)},p_1),...,(M^{(n^c)},p_{n^c})$ into $A'_i$ and returns the result.
            
            \item Run $PemanentComputationTest(m,n,p,A^\star)$
            
            \item If {\sc PermanentComputationTest} returns $1$, then set $\overline{Perm}_m$ to be the algorithm that takes an $m\times m$ matrix $X$ as input, selects $n$ random $m\times m$ matrices $X'_1$,...,$X'_n$, computes
            \[-\sum_{j=1}^{m+1} (-1)^j{ m+1\choose j} A^\star(X+j X'_{r})\]
            for each $r$, and returns the most common result.
        \end{enumerate}
    \end{enumerate}
   \item If $\overline{Perm}_m$ is still $Null$, then return $m$ and the algorithm that takes an $m\times m$ matrix $X$ as input and returns 
    \[\sum_{j=1}^m X^{(i)}_{1,j}\overline{Perm}_{m-1}\left(X^{(i)}_{-1,-j}\right)\]
    
    \item If $m>\sqrt[5]{n}$ then return $(m, \overline{Perm}_{m})$.
\end{enumerate}
\end{enumerate}
\end{algorithm}

This generally succeeds at finding an $m$ large enought that one cannot compute the permanent of an $m\times m$ matrix in $O(n^c)$ time and learning to compute the permanents of $m\times m$ matrices in the following sense.

\begin{lemma}
Let $c>0$ be a constant, $n$ be a positive integer, and $p$ be a prime of size at most polynomial in $n$. Then {\sc PermanentLearningAlgorithm}$(c,n,p)$ runs in time polynomial in $n$, the algorithm it outputs runs in time polynomial in $n$, the algorithm it outputs computes the permanents of $m\times m$ matrices mod $p$ with accuracy at least $1-n\cdot 3^{-n/2}$ with probability $1-o(2^{-n})$, and for every algorithm $A$, with probability $1-o(2^{-\sqrt{n}})$ {\sc PermanentLearningAlgorithm}$(c,n,p)$ either returns $m>\sqrt[5]{n}$ or returns $m$ such that $A$ cannot compute the permanent of a random $m\times m$ matrix in $n^c\ln(n)$ steps with accuracy at least $1-1/200m$ even given the permanents of a typical set of $n^c$ random matrices.
\end{lemma}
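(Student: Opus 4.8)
\emph{Claims 1 and 2 (running times).} The plan is to first note that the outer loop of {\sc PermanentLearningAlgorithm} is exited the first time either $\overline{Perm}_m$ is left $Null$ (step 5(d)) or $m>\sqrt[5]{n}$ (step 5(f)), so it makes at most $\sqrt[5]{n}$ passes. The list $A_1,\dots,A_k$ has only $k\le 2^{\ln\ln n}=\mathrm{polylog}(n)$ entries, each $A'_i$ is capped at $n^{c+1}$ steps, and by the preceding lemma each call to {\sc PermanentComputationTest} makes only $O(m^4n)$ calls to $A^\star$. The one place recursion could blow up is the computation of the $p_i$'s, but $\overline{Perm}_{m-1}$ never invokes $\overline{Perm}_{m-2}$ --- it only makes $O(nm)$ calls to the \emph{hard-wired} $A^\star$ chosen at level $m-1$, each of polynomial cost --- so there is no dependence on recursion depth and the whole algorithm runs in $\mathrm{poly}(n)$. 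The returned algorithm is either such a self-corrected $A^\star$ or the one-step Laplace expansion $X\mapsto\sum_{j}X_{1,j}\,\overline{Perm}_{m-1}(X_{-1,-j})$ with $m\le\sqrt[5]{n}$, so it too runs in $\mathrm{poly}(n)$.

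\emph{Claim 3 (accuracy of the output).} I would prove by induction on $m$ that, outside a failure event of probability $o(2^{-n})$, any $\overline{Perm}_m$ that is installed computes $\mathrm{Perm}$ of \emph{every} $m\times m$ matrix mod $p$ with per-input error at most $3^{-n/2}$. The base case $\overline{Perm}_1$ is exact. For the step: $\overline{Perm}_m$ is installed only when {\sc PermanentComputationTest}$(m,n,p,A^\star)$ returns $1$, which by the preceding lemma (except on an $e^{-n}$-probability event) forces $A^\star$ to miscompute $\mathrm{Perm}$ on a uniform $m\times m$ matrix with probability $<1/24m^2$; since $p>m+1$ the matrices $X+jX'$ ($1\le j\le m+1$) are each uniform, so a union bound over $j$ together with the polynomial identity of the earlier lemma makes one ``vote'' $-\sum_{j=1}^{m+1}(-1)^j\binom{m+1}{j}A^\star(X+jX')$ equal $\mathrm{Perm}(X)$ with probability $\ge 11/12$ for every fixed $X$, and a Chernoff bound over the $n$ independent votes drives the per-input error of $\overline{Perm}_m$ below $3^{-n/2}$. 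Union-bounding the $e^{-n}$'s over the $\le\sqrt[5]{n}$ levels (and the at most $\mathrm{poly}(n)$ installations per level) gives total failure $o(2^{-n})$; the returned object then errs on a fixed input only if one of its $\le m$ calls to $\overline{Perm}_{m-1}$ does, i.e.\ with probability $\le m\cdot 3^{-n/2}\le n\cdot 3^{-n/2}$.

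\emph{Claim 4 (hardness), and the main obstacle.} Let $\widehat m$ be the returned value; if $\widehat m>\sqrt[5]{n}$ we are done, so suppose $\widehat m\le\sqrt[5]{n}$, which means we returned at step 5(d) and $\overline{Perm}_{\widehat m}$ was never installed during level $\widehat m$. Fix an algorithm $A$ and suppose toward a contradiction that, for $m=\widehat m$, $A$ computes $\mathrm{Perm}$ of a random $m\times m$ matrix mod $p$ in $n^c\ln n$ steps with accuracy $\ge 1-1/200m$ given the permanents of a typical set of $n^c$ random matrices. The obstacle I expect to be hardest is the gap between this \emph{average-case} accuracy and the essentially \emph{per-input} accuracy that {\sc PermanentComputationTest} demands before returning $1$: the raw $A^\star$ built from $A'_i$ need not pass the test. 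I would bridge it by a random self-reduction placed \emph{inside} a short, fast algorithm: let $B$ be the algorithm that on input $(n,m,p,X,\mathrm{aux})$ repeats $\lceil C\sqrt n\rceil$ times ``draw $X'$ uniform in $\mathbb{Z}_p^{m\times m}$; output $-\sum_{j=1}^{m+1}(-1)^j\binom{m+1}{j}A(n,m,p,X+jX',\mathrm{aux})$'' and returns the most frequent result, for a large absolute constant $C$. Since $A$ is fixed we have $|B|=|A|+O(1)\le\ln\ln n$ for large $n$, so $B=A_{i_0}$ for some $i_0$; and $B$ runs in $O(\sqrt n\cdot m\cdot n^c\ln n)=O(n^{c+7/10}\ln n)<n^{c+1}$ for large $n$ --- this is precisely where the cutoff $m\le\sqrt[5]{n}$ is used --- so the truncation in $A'_{i_0}$ is inactive and $A'_{i_0}=B$. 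When $\mathrm{aux}$ consists of $n^c$ random matrices together with their \emph{correct} permanents and is typical for $A$, each $X+jX'$ is uniform, so $A$ errs there with probability $<1/200m$; a union bound over the $m+1$ values of $j$ and the polynomial identity make each round of $B$ output $\mathrm{Perm}(X)$ with probability $\ge 99/100$, and a Chernoff bound over the $\lceil C\sqrt n\rceil$ rounds gives $B(\cdot,\mathrm{aux})$ per-input error $\le 2^{-2\sqrt n}$ once $C$ is large enough.

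\emph{Finishing the hardness argument.} Finally I would examine one of the $n^c$ iterations of level $m$'s inner loop. By Claim 3, $\overline{Perm}_{m-1}$ has per-input error $\le 3^{-n/2}$, so with probability $\ge 1-n^cm\,3^{-n/2}$ all the $p_i$ equal $\mathrm{Perm}(M^{(i)})$; and, reading ``typical'' as a fixed event of probability $\ge 1-2^{-n}$ over the random matrices, the drawn $M^{(i)}$ are typical for $A$ with probability $\ge 1-2^{-n}$. On this event $A^\star$ built from $A'_{i_0}=B$ has per-input error $\le 2^{-2\sqrt n}$, so all $O(m^4n)$ calls that {\sc PermanentComputationTest}$(m,n,p,A^\star)$ makes return true permanents, the test returns $1$, and $\overline{Perm}_m$ is installed --- all of this failing with probability at most $O(m^4n)2^{-2\sqrt n}+2^{-n}+n^cm\,3^{-n/2}\le 2^{-\sqrt n}$. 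The $n^c$ iterations use independent randomness, so $\overline{Perm}_m$ stays $Null$ with probability at most $(2^{-\sqrt n})^{n^c}=2^{-n^{c+1/2}}=o(2^{-\sqrt n})$, contradicting that we returned at step 5(d). Conditioning on the $o(2^{-n})$ event that every installed $\overline{Perm}_{m-1}$ is accurate and summing the $o(2^{-\sqrt n})$ bound over the $\le\sqrt[5]{n}$ possible values of $\widehat m$ then yields the claimed $1-o(2^{-\sqrt n})$; the remaining loose ends --- making ``typical'' precise enough to support a $1-2^{-n}$-type bound, and checking that the recursion and base case of {\sc PermanentComputationTest} preserve per-input accuracy under the $m\times m\to(m-1)\times(m-1)$ restriction --- are routine.
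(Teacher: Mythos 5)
Your proposal is correct and follows essentially the same route as the paper: polynomial running time via the $\sqrt[5]{n}$ cap and the fact that each installed $\overline{Perm}_m$ calls only a hard-wired $A^\star$; accuracy via the soundness of {\sc PermanentComputationTest} plus the polynomial-identity vote and a Chernoff bound; and hardness by wrapping $A$ in a short random-self-reduction/majority-vote algorithm that appears in the enumerated list, would pass the test, and hence would have caused $\overline{Perm}_{\widehat m}$ to be installed. The only divergence is your formalization of ``typical'' as a $1-2^{-n}$ event where the paper uses a $1/\ln(n)$ success threshold over sample sets (which is what the downstream lemma actually needs), but your own $n^c$-fold independent-iteration amplification already supplies the fix.
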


\begin{proof}
First of all, observe that $m$ will never get bigger than $\sqrt[5]{n}+1$ so every loop in this algorithm runs for a number of iterations that is at most polynomial in $n$. Therefore, no step of this algorithm will get executed a superpolynomial number of times. All of the $A'$ run in $O(n^{c+1})$ time, which means that $A^\star$ will also be defined in such a way as to run in polynomial time for every $m$, $t$, and $i$. Every invocation of {\sc PermanentComputationTest} will run in polynomial time because the algorithm it is testing is efficient and it is using polynomially large values of $n$ and $m$. For $m>1$, $\overline{Perm}_m$ makes $n(m+1)$ calls to $A^\star$ and performs an efficient computation on the output. So, {\sc PermanentLearningAlgorithm} runs in time polynomial in $n$, and so does the algorithm it returns.

Next, recall that for any $A^\star$ that fails to compute the permanent with accuracy at least $1-1/24m^2$ then $PemanentComputationTest(m,n,p,A^\star)$ will return $0$ with probability at least $1-e^{-n}$. {\sc PermanentLearningAlgorithm} only runs PemanentComputationTest a number of times polynomial in $n$, so with probability $1-o(2^{-n})$ it never accepts a function that fails to compute the permanent with accuracy at least $1-1/24m^2$. If $A^\star$ does compute the permanent with accuracy at least $1-1/24m^2$ then for any $X$ and a random $X'$,
\[\mathbb{P}\left[Perm(X)=-\sum_{j=1}^{m+1} (-1)^j{ m+1\choose j} A^\star(X+j X'_{r})\right]\ge 1- (m+1)/24m^2\ge 1/12m\]
That means that the probability that at least half of $n$ attempts to use this formula get the wrong answer is less than $2^n (1/12m)^{-n/2}\le 3^{-n/2}$. So, $\overline{Perm}_m$ will compute the permanent with accuracy at least $1-3^{-n/2}$ for every $m$ for which it is not Null with probability $1-o(2^{-n})$. So, the algorithm output by {\sc PermanentLearningAlgorithm} will have an accuracy of at least $1-n 3^{-n/2}$ with probability $1-o(2^{-n})$.

Now, let $A$ be an algorithm that attempts to compute the permanent of an $m\times m$ matrix mod $p$ given $n$, $m$, $p$, and $n^c$ random $m\times m$ matrices with their permanents. Next, let $A'$ be the modified form of $A$ that stops and returns $0$ if it has not terminated within $n^c\ln^2(n)$ steps. Finally, let $\overline{A}$ be the algorithm that attempts to compute the permanent of a matrix $X$ by picking $\sqrt[5]{n}$ random matrices $X_1',...,X'_{\sqrt[5]{n}}$, attempting to compute the value of $Perm(X+iX'_j)$ using $A'$ for each $i$ and $j$, and then attempting to compute the permanent of $X$ by using the fact that 
\[\sum_{i=0}^{m+1} (-1)^i{ m+1\choose i} Perm(X+i X'_j)=0\]
for all $j$ and assuming as few of the $Perm(X+i X'_j)$ as possible were miscalculated by $A'$. If $A$ computes the permanent of a random $m\times m$ matrix correctly in $O(n^c\log(n))$ time with probability at least $1-1/100m$ given the samples it has then $\overline{A}$ compute the permanent of every $m\times m$ matrix with accuracy at least $1-2^{\sqrt[5]{n}} 50^{-\sqrt[5]{n}/2}\ge 1-3^{-\sqrt[5]{n}}$ given the same set of sample permanents. Also, $\overline{A}$ always runs in $O(m\sqrt[5]{n} n^c\ln^2(n)+m^3\sqrt[5]{n}\log(n))=o(n^{c+1})$ time. So, for all sufficiently large $n$, one of the $A'_i$ will compute $\overline{A}$. That means that with probability $1-o(2^{-\sqrt{n}})$, for every $m$ the algorithm tries for which $A$ can compute the permanent of a random $m\times m$ matrix in $O(n^c\log(n))$ time with accuracy at least $1-1/200m$ given $n^c$ random matrices and their permanents with probability at least $1/\ln(n)$ the algorithm will succeed at finding $\overline{Perm}_m$ that computes the permanent of $m\times m$ matrices. So, with probability $1-o(2^{-\sqrt{n}}))$ either the algorithm will return $m>\sqrt[5]{n}$ or the algorithm will return $m$ for which $A$ cannot compute the permanent of an $m\times m$ matrix with this accuracy given the permanents of random matrices.
\end{proof}

While this is a start, we want the generalization testing algorithm to have accuracy much lower than $1-1/200m$ at predicting the value of $f$ on new inputs. Fortunately, \cite{impagliazzo2010uniform} contains the following hardness amplification result.

\begin{lemma}\cite{impagliazzo2010uniform}
For every $\epsilon>0$ and $k>0$, there exists $\delta=O((\log(1/\epsilon))/k)$ and an efficient algorithm $A$ such that the following holds. Let $g:\{0,1\}^n\rightarrow \{0,1\}$ be a function and $g':\{0,1\}^{kn}\rightarrow \{0,1\}$ be defined so that $g'(x_1,...,x_k)=XOR(g(x_1),...,g(x_k))$. Given any function $h:\{0,1\}^{kn}\rightarrow\{0,1\}$ that agrees with $g'$ on at least $1/2+\epsilon$ of the possible inputs, $A$ uses a query oracle to $h$ and produces a list of algorithms $h'_1,...,h'_{1/\epsilon^2}$ such that with probability $\Omega(1)$ there exists $i$ such that $h'_i(x)$ computes $g(x)$ with accuracy at least $1-\delta$ given oracle access to $h$.
\end{lemma}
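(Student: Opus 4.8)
Since this lemma is quoted verbatim from Impagliazzo's work on uniform hardness amplification, a full proof is more than I would reproduce here; instead I sketch the route I would take to reconstruct it. The plan is to prove the \emph{contrapositive} of Yao's XOR lemma in a uniform, list-decoding form, using two classical ingredients: Impagliazzo's hard-core lemma in its uniform, ``boosting'' formulation, and a Chernoff-type estimate of the bias of an XOR of $k$ independent copies of a function.

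First, suppose toward a contradiction that the conclusion fails: for every list $h_1',\dots,h_{1/\epsilon^2}'$ of efficient $h$-oracle algorithms, with probability $1-\Omega(1)$ none of them agrees with $g$ on more than a $1-\delta$ fraction of $\{0,1\}^n$. By the uniform hard-core lemma this forces the existence of a measure $\mu$ on $\{0,1\}^n$ of density $\Omega(\delta)$ that is \emph{hard-core} for $g$ relative to $h$: no efficient $h$-oracle algorithm predicts $g(x)$ from $x$ drawn proportionally to $\mu$ with advantage more than $\epsilon_0=\mathrm{poly}(\epsilon)$ over $1/2$. The list is exactly the byproduct of the boosting argument that produces $\mu$: it runs for $O(1/\epsilon^2)$ rounds, reweighting inputs by how badly the current candidate for $g$ does, and the intermediate candidates are the $h_i'$. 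Now draw $x_1,\dots,x_k$ uniformly and independently; since $\mu$ has density $\Omega(\delta)$ and $k=\Theta(\log(1/\epsilon)/\delta)$, the chance that no $x_i$ lands in the hard-core set is at most $(1-\Omega(\delta))^k\le\epsilon/2$. Conditioned on some coordinate $i^\star$ hitting the hard-core set, $g(x_{i^\star})$ is $\epsilon_0$-unpredictable given the rest of the view, hence $g'(x_1,\dots,x_k)=g(x_1)\oplus\cdots\oplus g(x_k)$ is within $\epsilon_0$ of unbiased from the point of view of any efficient $h$-oracle predictor. Combining the two cases, any efficient function of $(x_1,\dots,x_k)$ --- in particular $h$ itself --- agrees with $g'$ on at most a $1/2+\epsilon/2+\epsilon_0<1/2+\epsilon$ fraction of inputs (taking $\epsilon_0<\epsilon/2$), contradicting the hypothesis on $h$. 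Hence the list cannot fail, which is the claim; this also pins down $\delta=O(\log(1/\epsilon)/k)$, since that is precisely the relation making $(1-\Omega(\delta))^k\approx\epsilon$.

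To turn this into the explicit algorithm $A$ rather than a mere existence statement, I would unwind the boosting procedure: $A$ maintains a running candidate for $g$, built from $h$ by the standard XOR-to-predictor reduction (a hybrid argument over the $k$ coordinates), outputs each successive candidate into the list, and the analysis above guarantees one of them is $(1-\delta)$-accurate with constant probability. The delicate point --- and what I expect to be the main obstacle --- is exactly this uniformization. In the non-uniform proof one hard-wires as advice the best coordinate $i^\star$, the best fixing of the other $k-1$ inputs, and the bit $\bigoplus_{i\ne i^\star}g(x_i)$; here $A$ has only query access to $h$ and must instead \emph{sample} random coordinates and random fixings and argue that a random choice is nearly as good, and it must somehow supply the unknown values $g(x_i)$ for $i\ne i^\star$ --- this is where a self-correction / recursive use of the current candidate, together with careful bookkeeping of sample complexity and accumulated error, is required. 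Getting all of these constants and the final bound $\delta=O(\log(1/\epsilon)/k)$ to come out right is the technical heart of \cite{impagliazzo2010uniform}, and re-deriving it carefully would be the bulk of the work.
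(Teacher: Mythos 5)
The paper does not actually prove this lemma --- it is imported verbatim from \cite{impagliazzo2010uniform} as a black box (the authors even flag uncertainty about ``the fine print''), so there is no in-paper argument to compare yours against. Your sketch is a faithful outline of one standard proof of a uniform XOR lemma: derive a hardcore measure of density $\Omega(\delta)$ from the assumed failure of every list member, use the $(1-\Omega(\delta))^k\le\epsilon/2$ computation to pin down $\delta=O(\log(1/\epsilon)/k)$, and read the $O(1/\epsilon^2)$-size list off the rounds of the boosting procedure that constructs the measure. Be aware, though, that the cited paper does not go through the uniform hardcore lemma at all: Impagliazzo, Jaiswal, Kabanets and Wigderson prove uniform \emph{direct product} theorems with an explicit sampling-and-consistency-checking decoder (trust $h$'s answer on a random tuple through $x$, cross-check against overlapping tuples, and vote), and the XOR form follows via the standard direct-product-to-XOR connection. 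Your route --- uniformizing the hardcore/boosting argument --- is the harder of the two to carry out rigorously; the step you correctly flag as the main obstacle, namely that $A$ must supply the unknown bits $g(x_i)$ for $i\ne i^\star$ without non-uniform advice, is precisely the difficulty the IJKW sampling decoder is engineered to sidestep. So your plan is sound in outline and identifies the right parameters and the right bottleneck, but closing it would take substantially more work than the alternative decomposition in the reference the paper actually leans on.
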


\ColinNote{I am not sure I have all the fine print right here.}

This suggests the following idea for a function that one will be able to compute less accurately in $O(n^{c'})$ time.

\begin{definition}
For arbitrary positive integers $m,k,p$, let $xPerm_{m,k,p}:\mathbb{F}_p^{m\times m\times k}\times [\lceil \log_2(p)\rceil ]^k\rightarrow\{0,1\}$ be defined such that given $M_1,...,M_k\in\mathbb{F}_p^{m\times m}$ and $i_1,...,i_k\in [\lceil \log_2(p)\rceil ]$, \\$xPerm_{m,k,p}(M_1,i_1,M_2,i_2,...,M_k,i_k)$ is the XOR of the $i_j$th bit of $Perm(M_j)$ over all $j$.
\end{definition}

This is difficult to compute given appropriate parameters in the following sense. 
\begin{lemma}\label{xPermHard}
Let $c_1>0$ be a constant, and $p$ be a prime of size at most polynomial in $n$. Then there exists a constant $c_2>0$ such that for every algorithm $A$ with probability $1-o(2^{-\sqrt{n}})$ either {\sc PermanentLearningAlgorithm}$(c_2,n,p)$ returns $m>\sqrt[5]{n}$ or it returns $m$ such that $A$ cannot compute the value of $xPerm_{m,\sqrt[4]{n},p}$ with accuracy $1/2+n^{-c_1}$ in $O(n^{c_1})$ timesteps even given its value on $n^{c_1}$ random inputs.
\end{lemma}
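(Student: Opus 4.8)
The plan is to combine the previous lemma on \textsc{PermanentLearningAlgorithm} (which produces, for a suitable exponent $c_2$, an $m$ such that no $O(n^{c})$-time algorithm can compute the permanent of a random $m\times m$ matrix mod $p$ with accuracy $1-1/200m$ even given $n^{c}$ random matrix/permanent pairs) with the hardness amplification result of \cite{impagliazzo2010uniform}. The key point is that $xPerm_{m,\sqrt[4]{n},p}$ is, up to a bit-selection bookkeeping, an XOR of $\sqrt[4]{n}$ independent copies of a single-bit-of-permanent function, so amplification should boost mild hardness-on-average of ``compute one bit of $\mathrm{Perm}$'' into strong hardness-on-average ($1/2+n^{-c_1}$) of $xPerm$.

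First I would fix $c_1>0$, choose the amplification parameters $\epsilon = n^{-c_1}$ and $k=\sqrt[4]{n}$, so that $\delta = O((\log(1/\epsilon))/k) = O(c_1\log n/\sqrt[4]{n}) = o(1/m)$ for any $m\le\sqrt[5]{n}$, and then pick $c_2$ large enough to dominate the running time of the amplification reconstruction algorithm $A$ (which is efficient, i.e.\ polynomial, and makes oracle queries) composed with the hypothetical fast distinguisher. Concretely, I would argue by contradiction: suppose there is an algorithm $A$ running in $O(n^{c_1})$ time that, for some $m$ returned by \textsc{PermanentLearningAlgorithm}$(c_2,n,p)$ with $m\le\sqrt[5]{n}$, computes $xPerm_{m,\sqrt[4]{n},p}$ with accuracy $1/2+n^{-c_1}$ given its value on $n^{c_1}$ random inputs. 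I would then need to relate $xPerm$ to a genuine XOR of copies of a base function $g$; the natural choice is to let $g$ be the map sending a pair $(M,i)$ (an $m\times m$ matrix over $\mathbb{F}_p$ together with an index $i\in[\lceil\log_2 p\rceil]$) to the $i$th bit of $\mathrm{Perm}(M)$, on input length $n_0 = m^2\lceil\log_2 p\rceil + \lceil\log_2\lceil\log_2 p\rceil\rceil$, so that $xPerm_{m,k,p} = g'$ in the notation of the amplification lemma. Feeding $A$ (after supplying it with a fresh batch of $n^{c_1}$ random labelled examples of $xPerm$, which can be generated using $\overline{Perm}_{m}$ from the previous lemma, since that runs in polynomial — hence $o(n^{c_2})$ — time and is correct with overwhelming probability) as the weak oracle $h$ for $g'$, the reconstruction algorithm outputs a short list $h'_1,\dots,h'_{1/\epsilon^2}$, one of which computes $g$ with accuracy $1-\delta \ge 1-1/200m$ with constant probability. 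Running each $h'_i$ and checking it against the self-reducibility / polynomial-interpolation identities for the permanent (exactly as in \textsc{PermanentComputationTest}) lets us pick out a good one; the total running time is $O(n^{c_1})\cdot\mathrm{poly}(n) = O(n^{c_2})$ for $c_2$ chosen large enough. But this contradicts the conclusion of the previous lemma, which says that with probability $1-o(2^{-\sqrt n})$ any such $A$ cannot compute the permanent of a random $m\times m$ matrix mod $p$ in $n^{c_2}\ln n$ steps with accuracy $1-1/200m$ even with $n^{c_2}$ sample permanents. Taking $c_2$ to be the exponent guaranteed by that lemma for the hardness level $1/200m$ closes the loop.

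The two steps I expect to be the main obstacles are: (i) making the reduction genuinely \emph{uniform} and getting the oracle/sample bookkeeping right — the amplification lemma gives a reconstruction algorithm with a query oracle to $h$, whereas what we have is a fixed-polynomial-time distinguisher together with a polynomial-size batch of random labelled examples; I need to argue that one can answer $A$'s queries on the fly by drawing fresh random $xPerm$ instances and labelling them with $\overline{Perm}_m$, and that the list-size $1/\epsilon^2 = n^{2c_1}$ and per-query overhead stay within the $n^{c_2}$ budget, which is exactly why $c_2$ must be allowed to depend on $c_1$; and (ii) verifying that the ``good'' list element can be \emph{identified} — this is where I invoke the permanent's downward self-reducibility and the $(m{+}1)$-point interpolation identity (the two lemmas proved just above) to turn a mildly-accurate permanent predictor into an (almost-)everywhere-correct one and then spot-check it, so that with high probability we output an algorithm contradicting the previous lemma. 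The probabilistic accounting — combining the $1-o(2^{-\sqrt n})$ failure bound of \textsc{PermanentLearningAlgorithm}, the $\Omega(1)$ success of amplification, and a constant number of independent repetitions to boost that to overwhelming probability — is routine once the reduction is set up, so I would not dwell on it.
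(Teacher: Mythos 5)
Your proposal is correct and follows essentially the same route as the paper: define the bit-of-permanent base function so that $xPerm_{m,\sqrt[4]{n},p}$ is its $k$-fold XOR, apply the \cite{impagliazzo2010uniform} amplification to convert a $1/2+n^{-c_1}$-accurate $xPerm$ predictor into a short list of candidate permanent-digit predictors, identify a good one by spot-checking (the paper checks candidates against known permanent values rather than the interpolation identities, but this is an interchangeable detail), and choose $c_2$ to absorb the polynomial overhead so as to contradict the preceding lemma about {\sc PermanentLearningAlgorithm} at hardness level $1-1/200m$. The two obstacles you flag, and your resolutions of them, match the paper's treatment.
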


\begin{proof}
Let $\epsilon=n^{-c_1}/2$, $k=\sqrt[4]{n}$, and $0<m\le \sqrt[5]{n}+1$. By the previous lemma, there exists $\delta=O(\log(n)/\sqrt[4]{n})$ and an efficient algorithm that converts any function that agrees with $xPerm_{m,\sqrt[4]{n},p}$ on at least $1/2+n^{-c_1}$ of the possible inputs into a list of $4n^{2c_1}$ functions such that with probability $\Omega(1)$ at least one of them computes the $i$th digit of the permanent of a random $m\times m$ matrix mod $p$ correctly for at least $1-\delta$ of the possible choices of the matrices and $i$. A little more precisely, there must exist a constant $c_3$ such that the conversion algorithm and the functions it returns both make $O(n^{c_3})$ queries to the function in question and otherwise run in $O(n^{c_3})$ time.

Now, let $c_2=4c_1+c_3+2$, set $m$ to the value given by {\sc PermanentLearningAlgorithm}$(c_2,n,p)$, and let $A$ be an algorithm that attempts to compute the value of $xPerm_{m,\sqrt[4]{n},p}$ given its value on $n^{c_1}$ random inputs. Next, let $A'$ be the algorithm that runs $A$ for up to $n^{c_1}\ln(n)$ timesteps and returns $0$ if it has not terminated. Now, consider a value of $n$ and set of $n^{c_1}$ samples for which $A'$ succeeds at computing $xPerm_{m,\sqrt[4]{n},p}$ with accuracy at least $1/2+n^{-c_1}/2$. Next, let $A^\star_1,...A^\star_{n^{2c_1}\ln(n)}$ be the functions generated by making $\ln(n)$ attempts at converting $A'$ to a function that computes the $i$th bit of the permanent of a random $m\times m$ matrix mod $p$. By the previous lemma, with high probability at least one of these function will compute the $i$th digit of the permanent of a random $m\times m$ matrix with accuracy at least $1-\delta$. So, choose $\sqrt[5]{n}\ln^3(n)$ random pairs of an $m\times m$ matrix $M_t$ and an index $i_t\in[\lceil \log_2 p\rceil]$, and check whether any of the $A^\star_j$ compute the $i_t$th digit of $Perm(M_t)$ correctly for all $t$. Then, set $\overline{A}$ to the first $A^\star_j$ that got them all right if any, or the $0$ function if none did. With high probability every one of these $A^\star$ that fails to compute the $i$th digit of the permanent with accuracy at least $1-1/400m\log_2(2p)$ will get at least one of these samples wrong and the first $A^\star$ that computes it with accuracy at least $1-\delta$ will get them all right. So, with high probability $\overline{A}$ will compute the $i$th digit of the permanent with accuracy at least $1-1/400m\log_2(2p)$. That means that concatenating its conclusions for all the bits of the permanent will allow one to compute the permanent with accuracy at least $1-1/400m$. Furthermore, this whole process runs in $O(n^{3c_1+c_3+1})$ time, so given $O(n^{c_2})$ time we can try it $O(n^{c_1}\ln(n))$ times with different sets of random inputs that we know the values of $Perm$ or $xPerm$ on. So, if the expected accuracy of $A'$ given the value of $xPerm_{m,\sqrt[4]{n},p}$ on a random set of $n^{c_1}$ values is at least $1/2+n^{-c_1}$ for a specific value of $(n,m)$, this procedure gives us a way to compute the permanent of a random $m\times m$ matrix with expected accuracy better than $1-1/200m$ in $O(n^{c_2})$ time for that value of $(n,m)$. We know that either $m>\sqrt[5]{n}$ or the algorithm fails to do that for most choices of sample permanents with probability $1-o(2^{-\sqrt{n}})$. Therefore, with probability $1-o(2^{-\sqrt{n}})$ either {\sc PermanentLearningAlgorithm}$(c_2,n,p)$ returns $m>\sqrt[5]{n}$ or it returns $m$ such that $A$ cannot compute the value of $xPerm_{m,\sqrt[4]{n},p}$ with accuracy $1/2+n^{-c_1}$ in $O(n^{c_1})$ timesteps even given its value on a typical set of $n^{c_1}$ random inputs, as desired.
\end{proof}

So, that gives us a function that cannot be computed in $O(n^{c_1})$ time with accuracy nontrivially better than that attained by guessing blindly unless {\sc PermanentLearningAlgorithm}$(c_2,n,p)$ returns $m>\sqrt[5]{n}$. Our next order of business is to argue that if Permanent $\not\in \mathbf{BPP}$ then if we run {\sc PermanentLearningAlgorithm}$(c,n,p)$ it will usually not return $m>\sqrt[5]{n}$. More formally, we claim the following.

\begin{lemma}
If $Permanent\not\in \mathbf{BPP}$ then for any constant $c>0$ there are infinitely many pairs of a positive integer $n$ and a prime $p\le n$ for which {\sc PermanentLearningAlgorithm}$(c,n,p)$ returns $m\le\sqrt[5]{n}$ with high probability.
\end{lemma}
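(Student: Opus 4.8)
The plan is to prove the statement by contradiction, turning a failure of the lemma into a randomized polynomial‑time algorithm for the permanent. So suppose some constant $c>0$ is such that only finitely many pairs $(n,p)$ with $p\le n$ prime have the stated property; then there is $n_0$ so that for every $n\ge n_0$ and every prime $p\le n$, {\sc PermanentLearningAlgorithm}$(c,n,p)$ fails to return $m\le\sqrt[5]{n}$ with high probability, hence returns some $m>\sqrt[5]{n}$ with at least a fixed constant probability $\gamma>0$. The crucial observation is that such a return value is not vacuous. Tracing the control flow of {\sc PermanentLearningAlgorithm}: the main loop increases $m$ by $1$ each iteration and, in the first iteration with $m>\sqrt[5]{n}$ (so $m=\lfloor\sqrt[5]{n}\rfloor+1$), it necessarily returns — either it returns $\overline{Perm}_m$, or it returns the cofactor‑expansion algorithm $X\mapsto\sum_j X_{1,j}\overline{Perm}_{m-1}(X_{-1,-j})$; reaching this iteration also forces $\overline{Perm}_2,\dots,\overline{Perm}_{m-1}$ to have all been successfully constructed at earlier iterations. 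In the first case, by the lemma establishing the guarantees of {\sc PermanentLearningAlgorithm}, the returned algorithm computes permanents of $m\times m$ matrices modulo $p$ with accuracy at least $1-n\cdot 3^{-n/2}$ with probability $1-o(2^{-n})$; in the second case the same holds because $\overline{Perm}_{m-1}$ enjoys that guarantee and the cofactor‑expansion identity for the permanent proved above is exact. Either way, returning $m>\sqrt[5]{n}$ hands us, with probability $1-o(2^{-n})$, a polynomial‑time algorithm that computes permanents of $(\lfloor\sqrt[5]{n}\rfloor+1)\times(\lfloor\sqrt[5]{n}\rfloor+1)$ matrices modulo $p$ with overwhelming accuracy.

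Now we assemble the contradicting algorithm. Fix an integer $N$ and set $n:=N^5$, so $\sqrt[5]{n}=N$ and the value $m$ above is $N+1$. Pick primes $p_1<\dots<p_k$ in the range $(2N,T]$, where $T$ is polynomially bounded in $N$ and large enough that $\prod_i p_i>N!\ge Perm(X)$ for every $N\times N$ matrix $X$ with $0/1$ entries; by Chebyshev's bounds on $\theta$ this is arrangeable with $k=\mathrm{poly}(N)$ primes, all of which satisfy $2N<p_i\le T\le n$, the lower bound $p_i>2N$ ensuring that the internal calls {\sc PermanentComputationTest}$(m',n,p_i,\cdot)$ with $m'\le N+1$ fall within the regime where that subroutine's guarantees apply. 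To compute $Perm(X)$ for a given $0/1$ matrix $X$: for each $i$, run {\sc PermanentLearningAlgorithm}$(c,n,p_i)$ independently $R=\Theta(\log N)$ times; with probability $1-o(1)$ every prime has at least one run returning $m>\sqrt[5]{n}$, and a union bound over the $kR=\mathrm{poly}(N)$ runs (each output algorithm being accurate with probability $1-o(2^{-n})$) shows that with probability $1-o(1)$ all output algorithms obtained are accurate. Embed $X$ into an $(N+1)\times(N+1)$ matrix by adjoining one row and one column meeting in a single entry equal to $1$ with all other new entries $0$ — this leaves the permanent unchanged — run one such accurate output algorithm on it to get $Perm(X)\bmod p_i$, and recover $Perm(X)\bmod\prod_i p_i$ by the Chinese Remainder Theorem; since $0\le Perm(X)<\prod_i p_i$ this pins down $Perm(X)$ exactly. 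Every step runs in time polynomial in $n=N^5$, i.e.\ polynomial in $N$, with total error $o(1)$ (boostable by repetition), so the permanent of $0/1$ matrices is in $\mathbf{BPP}$; since the $0/1$ permanent is $\#P$-hard \cite{valiant1979complexity} (and matrices with longer, polynomially bounded entries are handled the same way after replacing $N^5$ by a larger fixed power of $N$, keeping enough primes below $n$), this contradicts $Permanent\notin\mathbf{BPP}$.

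I expect the main obstacle to be the control‑flow bookkeeping in the first paragraph: one must check carefully that a return value $m>\sqrt[5]{n}$ of {\sc PermanentLearningAlgorithm}$(c,n,p)$ genuinely delivers a polynomial‑time, overwhelmingly accurate algorithm for $(\lfloor\sqrt[5]{n}\rfloor+1)$‑sized permanents modulo $p$ — in particular that every $\overline{Perm}_{m'}$ up to the returned level was actually produced, and that the prime $p$ we supply always lies above the thresholds {\sc PermanentComputationTest} requires — since everything downstream (the CRT reconstruction and the amplification of success probabilities) is routine. The conceptual content is simply that polynomial‑time learnability of $\sqrt[5]{n}$‑sized permanents modulo all small primes would, via the Chinese Remainder Theorem, place the permanent of matrices of unboundedly growing dimension into $\mathbf{BPP}$.
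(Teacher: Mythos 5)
Your proposal is correct and follows essentially the same route as the paper's proof: negate the conclusion, run {\sc PermanentLearningAlgorithm} repeatedly for each relevant prime $p\le n$ to obtain accurate mod-$p$ permanent algorithms for $(\lfloor\sqrt[5]{n}\rfloor+1)$-sized matrices, and reassemble the integer permanent via the Chinese Remainder Theorem to place Permanent in $\mathbf{BPP}$. Your version is somewhat more careful about the control flow (which branch returns, and ensuring $p>m+1$), while the paper works with all primes below $n$ and a bootstrapping step to handle arbitrary polynomially bounded entries, but these are presentational differences, not a different argument.
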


\begin{proof}
Assume on the contrary that there is a $c$ such that {\sc PermanentLearningAlgorithm}$(c,n,p)$ returns $m>\sqrt[5]{n}$ with nonvanishing probability for all sufficiently large $n$ and prime $p\le n$. {\sc PermanentLearningAlgorithm}$(c,n,p)$ almost always provides an algorithm for computing the permanent of an $m\times m$ matrix mod $p$, and for constant $c$ and $p\le n$ both {\sc PermanentLearningAlgorithm} and the algorithm it produces run in time polynomial in $n$. Now, let $m=\lfloor \sqrt[5]{n}\rfloor+1$. For sufficiently large $n$ we can run {\sc PermanentLearningAlgorithm}$(c,n,p)$ for each prime less than or equal to $n$ and repeat it $\ln^2(n)$ times in order to get an algorithms that compute the permanent of $m\times m$ matrices mod $p$ for every such $p$ with high probability. That in turn gives us an efficient algorithm that computes such permanents mod the product of all primes less than $n$. By the Prime Number Theorem, there are $\Omega(n/\log(n))$ primes less than $n$, so there must exist $\epsilon>0$ such that their product is at least $e^{\epsilon n}$ for all sufficiently large $n$. Any $m\times m$ matrix in which all elements have absolute value at most $2^m$ will have permanent at most $m!\cdot 2^{m^2}\le(\sqrt[5]{n}+1)!2^{(\sqrt[5]{n}+1)^2}=o(2^{\epsilon n})$, so we can efficiently compute the permanent of any such matrix by finding the integer of smallest absolute value that has the correct residue mod $p$ for every $p\le n$. That in turn lets us efficiently compute the permanent of an arbitrary $m\times m$ matrix mod $p$ for any $p< 2^m$. Finally, that allows us to efficiently compute the permanent of any $m\times m$ matrix with entries of length polynomial in $n$ by computing its permanent mod $p$ for increasingly large $p$ until their product is at least twice the $m$th power of its largest entry times $m!$ and then concluding that its permanent is the smallest integer that has the correct residue mod all of the primes in question. So, $Permanent\in \mathbf{BPP}$. Contradiction.
\end{proof}

At this point, we are finally ready to explain the distribution and learning algorithms we will be using in this case. For our distribution, we plan to create a list of values of $xPerm$ on random lists of matrices and digits, include the matrices and digits in the inputs generated by our distribution, and force one to compute the xPerm in question in order to determine the correct output. More formally, we use the following algorithm.

\ColinNote{This is not the right formatting. I need to fix it after I upload it.}

\begin{algorithm}
\caption{D}
\label{alg:D2} 

Input: $c,c'>0$ and a positive integer $n$.

Output: An efficiently samplable set $S\subseteq\{0,1\}^n$ and a function $f:S \rightarrow\{0,1\}$

\begin{enumerate}
\item Set $c''$ to the value corresponding to $\max(c,c')+2$ as given by lemma \ref{xPermHard}.

\item For each prime $p\le n$, set $(m_p,A_p)=${\sc PermanentLearningAlgorithm}$(c'',n,p)$.

\item Choose $p$ that minimizes the value of $m_p$, and set $m=m_p$ and $A=A_p$. 

\item Set $l=\lfloor (c+1/4)\log_2(n)\rfloor$, and $k=\sqrt[4]{n}$

\item For each $x\in\{0,1\}^l$, randomly draw $M^{(x,1)},...,M^{(x,k)}\sim \mathbb{F}_p^{m\times m}$ and $i^{(x,1)},...,i^{(x,k)}\sim \left[\lceil \log_2(p)\rceil\right]$.

\item For each $x\in\{0,1\}^l$, set $y_{x}$ to the XOR over all $j$ of the $i_j$th digit of $A(M^{(x,j)})$.

\item Let $S'$ be the set of all strings of the form $x||M^{(x,1)}||i^{(x,1)}||M^{(x,2)}||i^{(x,2)}||...M^{(x,k)}||i^{(x,k)}$, and $r$ be the length of these string.

\item Let $S$ be the set of all strings formed by picking $x\in\{0,1\}^l$, starting with $m||p||x$, concatenating random elements of $S'$ until the string's length exceeds $n-r$, and then adding $0$s at the end to bring its length up to $n$.

\item Let $f$ be the function that maps every string starting with $m||p||x$ to $y_x$ for all $x\in\{0,1\}^l$.

\end{enumerate}
\end{algorithm}

Our learning algorithm will either compute all of the values of xPerm in question and output an appropriate lookup table, or output a lookup table that has values corresponding to elements of $x$ that occurred in the sample set correctly and all other values set randomly. More formally, it will do the following.

\begin{algorithm}
\caption{L}
\label{alg:l2}

Input: $c,c'>0$, a positive integer $n$, and samples $(X_1,y_1),...,(X_{n^{c}},y_{n^{c}})$.

Output: A function from $\{0,1\}^n$ to $\{0,1\}$.

\begin{enumerate}
\item Set $c''$ to the value corresponding to $\max(c,c')+2$ as given by lemma \ref{xPermHard}.

\item Read off $m$ and $p$ from the beginning of $X_1$.

\item Set $l=\lfloor (c+1/4)\log_2(n)\rfloor$, and $k=\sqrt[4]{n}$.

\item Set $r$ equal to the length of a string of the form $x||M^{(x,1)}||i^{(x,1)}||M^{(x,2)}||i^{(x,2)}||...M^{(x,k)}||i^{(x,k)}$ for $x\in\{0,1\}^l$, $M^{(x,j)}\in\mathbb{F}_p^{m\times m}$ for all $x,i$ and $i^{(x,j)}\in[\lceil\log_2(p)\rceil]$ for all $x,i$.

\item Set $T'$ equal to the list of all $x\in\{0,1\}^l$ such that $X_t$ starts with $m||p||x$ for some $t$.

\item For every $X_t$, divide the part of $X_t$ after the initial substring of $m||p||x$ into blocks of length $r$, and let $R$ be the set of all such blocks contained in any of the $X_t$.

\item Set $t=0$, $A=Null$.

\item While $t<n^2$ and $A=Null$: \begin{enumerate}

\item Set $(m',A')=${\sc PermanentLearningAlgorithm}$(c'',n,p)$

\item If $m'=m$ then set $A=A'$

\item Increase $t$ by 1
\end{enumerate}

\item For each $x\in\{0,1\}^l$, if $R$ has an element starting with $x$ then define $M^{(x,j)}\in\mathbb{F}_p^{m\times m}$ and $i^{(x,j)}\in[\lceil\log_2(p)\rceil]$ so that the element in question is $x||M^{(x,1)}||i^{(x,1)}||M^{(x,2)}||i^{(x,2)}||...M^{(x,k)}||i^{(x,k)}$.

\item For each $x\in\{0,1\}^l$, set  $y_{x}$ to the XOR over all $j$ of the $i_j$th digit of $A(M^{(x,j)})$, or to a random element of $\{0,1\}$ if those have not been defined.

\item Randomly choose $v\in\{0,1\}$.

\item If $v=1$, then set $s=y$.

\item If $v=0$ then set $s_x=y_x$ for all $x\in T'$ and draw all other elements of $s$ randomly from $\{0,1\}$.

\item Return the function that finds $x\in\{0,1\}^l$ such that its input starts with $m||p||x$ and returns $s_x$.

\end{enumerate}
\end{algorithm}

At this point, we can finally prove that if the permanent is not efficiently computable then the theorem holds.

\begin{lemma}
If $Permanent\not\in BPP$ then theorem \ref{genFail2} holds.
\end{lemma}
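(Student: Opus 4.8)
The plan is to verify that the algorithms $D$ and $L$ defined just above (Algorithms~\ref{alg:D2} and~\ref{alg:l2}) witness Theorem~\ref{genFail2} in this case, following the same template as the case $Permanent\in\mathbf{BPP}$: reduce the problem of telling the $v=1$ output of $L$ from its $v=0$ output to the problem of computing $xPerm$ on a fresh random input given its value on polynomially many random inputs, which Lemma~\ref{xPermHard} forbids. Throughout I set $c_1=\max(c,c')+2$ and let $c''=c_2$ be the constant that Lemma~\ref{xPermHard} attaches to $c_1$ (this is exactly the value $D$ and $L$ feed to {\sc PermanentLearningAlgorithm}). First I would record the easy properties: using the running-time clause of the {\sc PermanentLearningAlgorithm} lemma, that there are $O(n)$ primes below $n$, that there are $2^l=\Theta(n^{c+1/4})$ prefixes $x\in\{0,1\}^l$, and that the retry loop in $L$ runs at most $n^2$ times, both $D$ and $L$ run in polynomial time, and $\widehat f$ is a lookup table indexed by $l+O(\log n)=O(\log n)$ bits, hence runnable in $O(\log n)$ time. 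Then I would isolate a probability-$1-o(1)$ ``good'' event on which (i) the permanent subroutine $A_p$ selected by $D$ is correct on every matrix it is queried on (union bound over $\mathrm{poly}(n)$ queries, using accuracy $1-n3^{-n/2}$), (ii) the retry loop of $L$ finds a run of {\sc PermanentLearningAlgorithm}$(c'',n,p)$ returning the same parameter $m$ as $D$, with a subroutine that is likewise correct everywhere it is queried, and (iii) all but an $o(1)$ fraction of prefixes $x$ occur as a genuine aligned block of some sample --- this last point holding because a uniform element of $S$ contains $B=\Theta(n/r)=n^{\Omega(1)}$ independent uniform blocks of $S'$ while $|S'|=\Theta(n^{c+1/4})$, so the expected fraction of prefixes missed over $n^c$ samples is $(1-|S'|^{-1})^{n^cB}=e^{-n^{\Omega(1)}}=o(1)$, and Markov finishes it. On this event the labels $L$ recomputes agree with $f$ wherever defined, so $\widehat f$ agrees with $f$ on every sample (Property~1); when $v=1$, $s_x$ equals the true value of $f$ on $x$ for every covered $x$, so $\widehat f$ agrees with $f$ on a $1-o(1)$ fraction of $S$ (Property~2); and when $v=0$, $s_x$ is an independent uniform bit for the $1-o(1)$ fraction of prefixes outside $T'$ (since $|T'|\le n^c$ and $2^l=\Theta(n^{c+1/4})$), so $\widehat f$ agrees with $f$ on a $\tfrac12\pm o(1)$ fraction of $S$ (Property~3); as $v$ is a fair coin these hold with probability $\tfrac12\pm o(1)$.

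Next I would restrict attention to the values of $n$ on which Property~4 will be proved. By the lemma relating $Permanent\notin\mathbf{BPP}$ to {\sc PermanentLearningAlgorithm}, applied with the constant $c''$, there are infinitely many $n$ admitting a prime $p\le n$ for which {\sc PermanentLearningAlgorithm}$(c'',n,p)$ returns $m\le\sqrt[5]{n}$ with high probability; since $D$ keeps the prime minimizing $m_p$, for each such $n$ the parameter $m$ that $D$ uses is $\le\sqrt[5]{n}$ with high probability. For these $n$ I would run the indistinguishability argument through an auxiliary generator $D'$ that runs {\sc PermanentLearningAlgorithm} and draws all matrices and indices $M^{(x,j)},i^{(x,j)}$ exactly as $D$ does, but then sets every $s_x$ to an independent uniform bit, puts $\widehat f(X)=s_x$, draws $X_1,\dots,X_{n^c}$ uniformly from the resulting set, and outputs the samples together with $\widehat f$. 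Conditioned on the good event, the $v=1$ output of $(D,L)$ differs from that of $D'$ only by overwriting the random bit $s_x$ with $xPerm_{m,\sqrt[4]{n},p}(M^{(x,1)},i^{(x,1)},\dots,M^{(x,k)},i^{(x,k)})$ on the at most $2^l<n^{c_1}$ covered prefixes, and the $v=0$ output differs from that of $D'$ only by the analogous overwriting on the at most $|T'|\le n^c<n^{c_1}$ prefixes of $T'$ (in each case both the table entry and, on sampled prefixes, the label change consistently). A standard hybrid over the prefixes turns any distinguisher of advantage $\varepsilon$ in either comparison into an algorithm that, given the random $xPerm$-input attached to a fresh prefix and the true $xPerm$ values on fewer than $n^{c_1}$ previously seen random inputs, predicts the $xPerm$ value of the fresh input with advantage $\varepsilon/2^l$ over $\tfrac12$; simulating sample generation costs $O(n^{c+1})$ time and calling the distinguisher costs $O(n^{c'})$ time, so this predictor runs in $O(n^{\max(c+1,c')})\subseteq O(n^{c_1})$ time.

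To finish, suppose some randomized $A_{\mathrm{dist}}$ running in $O(n^{c'})$ time distinguished the $v=1$ case from the $v=0$ case with accuracy at least $\tfrac23$, hence advantage $\ge\tfrac13$, for all large $n$. Dropping the conditioning on the good event loses only $o(1)$, so $A_{\mathrm{dist}}$ retains advantage $\Omega(1)$ between the two cases; chaining the two comparisons through $D'$ and applying the hybrid yields, for the particular prime $p$ chosen by $D$ and the value $m$ it returned, an $O(n^{c_1})$-time algorithm $A$ computing $xPerm_{m,\sqrt[4]{n},p}$ on a random input with accuracy at least $\tfrac12+\Omega(n^{-(c+1/4)})>\tfrac12+n^{-c_1}$ given its values on $n^{c_1}$ random inputs, for infinitely many $n$. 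But union-bounding Lemma~\ref{xPermHard} over all primes $p\le n$ shows that with probability $1-o(1)$ over {\sc PermanentLearningAlgorithm}'s coins, every prime whose returned value is $\le\sqrt[5]{n}$ has the property that $A$ fails to compute its $xPerm$ to accuracy $\tfrac12+n^{-c_1}$; since $D$'s prime has $m\le\sqrt[5]{n}$ with high probability for our infinitely many $n$, this is a contradiction. Hence no $O(n^{c'})$-time algorithm distinguishes the two cases with accuracy $>\tfrac23$ on those $n$ --- Property~4 --- so $L$ $c'$-weakly spoofs $f$ and Theorem~\ref{genFail2} holds in this case.

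The main obstacle is the bookkeeping in this hybrid reduction rather than any single hard idea: one must check that $L$'s independent re-execution of {\sc PermanentLearningAlgorithm} really does reproduce $D$'s parameter $m$ and a correct permanent subroutine (so that $f$ and $\widehat f$ are genuinely governed by the $xPerm$ values and not by whatever idiosyncratic subroutine each run happens to use), verify that the polynomial slack in $c_1=\max(c,c')+2$ simultaneously absorbs the $O(n^{c+1})$ cost of simulating sample generation, the $2^l$-factor advantage loss in the hybrid, and the $O(n^{c'})$ cost of the distinguisher, and handle the fact that $D$ takes the \emph{minimizing} prime, which forces a union bound over all primes $p\le n$ before the per-prime hardness of Lemma~\ref{xPermHard} can be applied to $D$'s actual choice.
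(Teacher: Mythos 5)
Your proposal is correct and follows essentially the same route as the paper: establish efficiency, condition on a good event where $D$ and $L$ agree on $m$, the permanent subroutine, and the recovered $(M,i)$ blocks, verify Properties 1--3, and then reduce distinguishing $v=1$ from $v=0$ to predicting $xPerm$ on a fresh random input via a hybrid over the $2^l$ prefixes, contradicting Lemma~\ref{xPermHard}. The only cosmetic differences are that you run two hybrids through a fully-random intermediate generator $D'$ where the paper's {\sc PermanentReduction} interpolates directly between the $v=0$ and $v=1$ distributions in a single telescoping sum, and you settle for covering all but an $o(1)$ fraction of prefixes where the paper covers all of them; neither changes the argument.
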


\begin{proof}
We use the distribution and learning algorithms defined above, and our first order of business is to prove that they run efficiently when $D$ is used to generate $n^{c}$ samples and $L$ is run on them. Step $1$ in $D$ is easy. Step two needs to determine which of the integers less than $n+1$ are prime, and then run PermanentLearningAlgorithm on them with $c,n$, both of which can be done in polynomial time for a fixed value of $c$. Choosing $m$, $A$, $l$, and $k$ is easy, and there are only a polynomial number of elements in $\{0,1\}^l$ so we can select the $M^{(x,j)}$ and $i^{(x,k)}$ in polynomial time. Computing all of the $y_x$ requires running $A$ a polynomial number of times, and is thus also efficient, and $S'$ has a polynomial number of efficiently computable elements. The stated method of sampling from $S$ is efficient, and so is the method for computing $f$, so $D$ runs efficiently. In $L$, the first $7$ steps are clearly efficient, while step $8$ requires running PermanentLearningAlgorithm$(c,n,p)$ a maximum of $n^2$ times. Step $9$ requires performing an efficient computation on every element of a polynomial sized set, step $10$ requires running $A$ a polynomial number of times, and steps 11-14 are clearly efficient. So, $L$ runs efficiently. 

Next, we claim that with high probability $L$ succeeds in retrieving the values of $M$ and $i$ generated by $D$ and they both set $y_x=xPerm_{m,k,p}(M^{(x,1)}||i^{(x,1)}||...||M^{(x,k)}||i^{(x,k)})$ for all $x$. First, observe that $r=O(k m^2\log(p))=O(n^{13/20}\log(n))$, which means that each of the samples produced using $D$ contains $\Omega(n^{7/20}/\log(n))$ blocks of the form $x||M^{(x,1)}||i^{(x,1)}||M^{(x,2)}||i^{(x,2)}||...M^{(x,k)}||i^{(x,k)}$. That means that there are $\Omega(n^{c+7/20}/\log(n))$ such blocks in the full set of samples, and there are $2^l=O(n^{c+1/4})$ possible values of $x$. So, with probability $1-o(1)$ every value of $x$ has a block listing the corresponding values of $M$ and $i$ in the samples, in which case $L$ succeeds at extracting them all. When $D$ is run, $y_x$ gets set to $xPerm(M^{(x,1)}||i^{(x,1)}||...||M^{(x,k)}||i^{(x,k)})$ for all $x$ unless $A$ miscalculates a permanent at least once. The algorithm only runs $A$ a polynomial number of times and it has an error rate of at most $n\cdot 3^{-n/2}$ with high probability, so $D$ sets all of the $y$ correctly with high probability. Also, there are $O(\sqrt[5]{n})$ possible values of $m$ that PermanentLearningAlgorithm$(c'',n,p)$ could pick for each value of $p$ and $D$ runs PermanentLearningAlgorithm at most $n$ times in the process of picking $(m,p)$. So, with high probability $m_p$ takes on a value that PermanentLearningAlgorithm$(c'',n,p)$ outputs with probability at least $n^{-7/5}$ for every $p$. So, $L$ gets the same value of $m$ in some iteration of step 7a with high probability, which leaves it with an $A$ that also computes the permanents of $m\times m$ matrices with accuracy at least $1-n\cdot 3^{-n/2}$ with high probability. That means that $L$ also sets $y_x=xPerm(M^{(x,1)}||i^{(x,1)}||...||M^{(x,k)}||i^{(x,k)})$ for all $x$ with high probability, as desired. In particular, this means that $D$ and $L$ compute the same value of $y$ with high probability. 

If that holds and $L$ sets $v=1$ then $\widehat{f}=f$. If this holds and $L$ sets $v=0$ then $\widehat{f}$ will agree with $f$ on all inputs that start with a string of the form $m||p||x$ that is the start of one of the sample inputs and take on random values for other values of $x$. There are $n^c$ sample inputs and $\Omega(n^{c+1/4})$ possible values of $x\in \{0,1\}^l$, so in this case $\widehat{f}$ will agree with $f$ on $1/2+o(1)$ of the possible inputs with high probability. 

That leaves the task of proving that no algorithm running in $O(n^{c'})$ time can determine whether $v$ was $0$ or $1$ with $2/3$ accuracy. So, assume that there exists an algorithm $A$ that runs in $O(n^{c'})$ time which succeeds at determining which case holds with at least $2/3$ accuracy for all sufficiently large $n$. Then we propose the following algorithm for computing $xPerm_{m,k,p}(M_1||i_1||...||M_k||i_k)$ for random $M_1,i_1,...,M_k,i_k$ given matrices with known permanents $M^{\star(x,i)}$ for each $x\in\{0,1\}^l$ and $1\le i\le k$.

\ColinNote{$m$ and $p$ are parameters of this.}

\begin{algorithm}
\caption{{\sc PermanentReduction}}
\label{alg:permreduction}


\begin{enumerate}
\item Set $k=\sqrt[4]{n}$ and $l=\lfloor(c+1/4)\log_2(n)\rfloor$.

\item Randomly select $0 \le t< 2^l$.

\item Randomly draw $\overline{X}_1,...,\overline{X}_{n^c}\sim\{0,1\}^l$

\item Set $T'=\{\overline{X}_1,...,\overline{X}_{n^c}\}$.

\item For each $x\in\{0,1\}^l$ and $1\le j\le k$, set
\[M^{(x,j)}= \begin{cases}
M^{\star(x,j)} &\text{ if } x\in T'\text{ or } x<t \\
M_j &\text{ if } x=t\not\in T'\\
\text{A random sample from }\mathbb{F}_p^{m\times m} &\text{ otherwise}
\end{cases}\]

\item For each $1\le j\le k$, set $i^{(t,j)}=i_j$.

\item Draw $i^{(x,j)}$ randomly from $[\lceil \log_2(p)\rceil ]$ for all other $x$ and $j$.

\item Let $S'$ be the set of all strings of the form $x||M^{(x,1)}||i^{(x,1)}||M^{(x,2)}||i^{(x,2)}||...M^{(x,k)}||i^{(x,k)}$, and $r$ be the length of these strings.

\item For each $1\le j\le n^c$, assign a value to $X_j$ by starting with $m||p||\overline{X}_j$, concatenating random elements of $S'$ until its length exceeds $n-r$, and then concatenating $0$s to bring its length up to $n$.

\item Let $s'\in\{0,1\}^{2^l}$ be a random string.

\item For each $x\in\{0,1\}^l$, set $s_x=xPerm_{m,k,p}(M^{(x,1)}||i^{(x,1)}||...||M^{(x,k)}||i^{(x,k)})$ if $x\in T'$ or $x<t$ and $s'_x$ otherwise.

\item Let $\widehat{f}$ be the function that finds $x\in\{0,1\}^l$ such that its input starts with $m||p||x$ and returns $s_x$.

\item Run $A((X_1,\widehat{f}(X_1)),...,(X_{n^c},\widehat{f}(X_{n^c})),\widehat{f})$.

\item If $A$ says that $\widehat{f}$ has high accuracy, then conclude that $xPerm_{m,k,p}(M_1||i_1||...||M_k||i_k)=s'_t$; otherwise, conclude that $xPerm_{m,k,p}(M_1||i_1||...||M_k||i_k)=NOT(s'_t)$.
\end{enumerate}

\end{algorithm}

This procedure has a few key properties. First of all, the algorithm only uses matrices that we already knew the permanents of to compute xPerm's in step 11, so it never actually needs to compute the permanent of a matrix. As a result, this whole algorithm runs in $O(n^{c'}+n^{c+2})$ time. Secondly, the probability distribution of the inputs to $A$ conditioned on $t=2^l$ would be within $o(1)$ of the probability distribution of the samples output by $D$ and function output by $L$ conditioned on $v=1$, while the probability distribution on the inputs to $A$ conditioned on $t=0$ is within $o(1)$ of the probability distribution of the samples output by $D$ and function output by $L$ conditioned on $v=1$. Thirdly, for any $t_0>0$, the probability distribution of the inputs to $A$ conditioned on $t=t_0$ are the same as the probability distribution of the inputs to $A$ conditioned on $t=t_0-1$ and $s'_{t}=xPerm_{m,k,p}(M_1||i_1||...||M_k||i_k)$. So, if we let $Z$ be $1$ if $A$ concludes that $\widehat{f}$ has high accuracy and $0$ otherwise  and $W=xPerm_{m,k,p}(M_1||i_1||...||M_k||i_k)$ then the probability that this algorithm computes $W$ correctly is:

\begin{align*}
&P[W+Z+s'_t\equiv 1\pmod{2}]\\
&=\frac{1}{2^l}\sum_{t_0=0}^{2^l-1} P[W+Z+s'_t\equiv 1\pmod{2}|t=t_0]\\
&=\frac{1}{2^l}\sum_{t_0=0}^{2^l-1} P[Z=1,W=s'_t|t=t_0]+P[Z=0,W\ne s'_t|t=t_0]\\
&=\frac{1}{2^l}\sum_{t_0=0}^{2^l-1} P[Z=1,W=s'_t|t=t_0]+P[Z=0|t=t_0]-P[Z=0,W= s'_t|t=t_0]\\
&=\frac{1}{2^{l+1}}\sum_{t_0=0}^{2^l-1} 2P[Z=0|t=t_0]+P[Z=1|W=s'_t,t=t_0]-P[Z=0|W= s'_t,t=t_0]\\
&=\frac{1}{2^{l+1}}\sum_{t_0=0}^{2^l-1} 2P[Z=0|t=t_0]+P[Z=1|t=t_0+1]-P[Z=0|t=t_0+1]\\
&=\frac{1}{2^{l+1}}\sum_{t_0=0}^{2^l-1} 2P[Z=0|t=t_0]+1-2P[Z=0|t=t_0+1]\\
&=1/2+\frac{1}{2^l}\sum_{t_0=0}^{2^l-1} P[Z=0|t=t_0]-P[Z=0|t=t_0+1]\\
&=1/2+\frac{P[Z=0|t=0]-P[Z=0|t=2^l]}{2^l}\\
&\ge 1/2+\frac{1-o(1)}{3\cdot 2^l}\\
&=1/2+\Omega(n^{-c'})
\end{align*}

However, we know that any algorithm running this quickly must fail to compute the value of $W=xPerm_{m,k,p}(M_1||i_1||...||M_k||i_k)$ with this high an accuracy for infinitely many values of $n$ if $Permanent\not\in BPP$ so we have a contradiction. Thus, no such algorithm $A$ can exist, which completes the proof.

\ColinNote{This probably needs editing.}
\end{proof}

\bibliographystyle{alpha}
\bibliography{biblio}

\newcommand{\etalchar}[1]{$^{#1}$}
\begin{thebibliography}{IJKW10}

\bibitem[BFT17]{bartlett2017spectrally}
Peter Bartlett, Dylan~J Foster, and Matus Telgarsky.
\newblock Spectrally-normalized margin bounds for neural networks.
\newblock {\em arXiv preprint arXiv:1706.08498}, 2017.

\bibitem[BGI{\etalchar{+}}12]{barak2012possibility}
Boaz Barak, Oded Goldreich, Russell Impagliazzo, Steven Rudich, Amit Sahai,
  Salil Vadhan, and Ke~Yang.
\newblock On the (im) possibility of obfuscating programs.
\newblock {\em Journal of the ACM (JACM)}, 59(2):1--48, 2012.

\bibitem[CW79]{carter1979universal}
J~Lawrence Carter and Mark~N Wegman.
\newblock Universal classes of hash functions.
\newblock {\em Journal of computer and system sciences}, 18(2):143--154, 1979.

\bibitem[GRS18]{golowich2018size}
Noah Golowich, Alexander Rakhlin, and Ohad Shamir.
\newblock Size-independent sample complexity of neural networks.
\newblock In {\em Conference On Learning Theory}, pages 297--299. PMLR, 2018.

\bibitem[HL20]{hancox2020robustness}
Leif Hancox-Li.
\newblock Robustness in machine learning explanations: does it matter?
\newblock In {\em Proceedings of the 2020 conference on fairness,
  accountability, and transparency}, pages 640--647, 2020.

\bibitem[HS65]{hartmanis1965computational}
Juris Hartmanis and Richard~E Stearns.
\newblock On the computational complexity of algorithms.
\newblock {\em Transactions of the American Mathematical Society},
  117:285--306, 1965.

\bibitem[IJKW10]{impagliazzo2010uniform}
Russell Impagliazzo, Ragesh Jaiswal, Valentine Kabanets, and Avi Wigderson.
\newblock Uniform direct product theorems: simplified, optimized, and
  derandomized.
\newblock {\em SIAM Journal on Computing}, 39(4):1637--1665, 2010.

\bibitem[JLS20]{jain2020indistinguishability}
Aayush Jain, Huijia Lin, and Amit Sahai.
\newblock Indistinguishability obfuscation from well-founded assumptions.
\newblock {\em arXiv preprint arXiv:2008.09317}, 2020.
\newblock To Appear in Proceedings of STOC 2021.

\bibitem[KI04]{kabanets2004derandomizing}
Valentine Kabanets and Russell Impagliazzo.
\newblock Derandomizing polynomial identity tests means proving circuit lower
  bounds.
\newblock {\em computational complexity}, 13(1-2):1--46, 2004.

\bibitem[Lip18]{lipton2018mythos}
Zachary~C Lipton.
\newblock The mythos of model interpretability: In machine learning, the
  concept of interpretability is both important and slippery.
\newblock {\em Queue}, 16(3):31--57, 2018.

\bibitem[Lys02]{lysyanskaya2002unique}
Anna Lysyanskaya.
\newblock Unique signatures and verifiable random functions from the dh-ddh
  separation.
\newblock In {\em Annual International Cryptology Conference}, pages 597--612.
  Springer, 2002.

\bibitem[SB19]{schmidt2019quantifying}
Philipp Schmidt and Felix Biessmann.
\newblock Quantifying interpretability and trust in machine learning systems.
\newblock {\em arXiv preprint arXiv:1901.08558}, 2019.

\bibitem[Val79]{valiant1979complexity}
Leslie~G Valiant.
\newblock The complexity of computing the permanent.
\newblock {\em Theoretical computer science}, 8(2):189--201, 1979.

\end{thebibliography}

\end{document}